\documentclass{article} 
\usepackage{style/iclr2023_conference,times}


\usepackage{amsmath,amsfonts,bm}









\def\eqref#1{equation~\ref{#1}}









\def\1{\bm{1}}








\def\vtheta{{\bm{\theta}}}
\def\vpsi{{\bm{\psi}}}
\def\vsigma{{\bm{\sigma}}}
\def\vlambda{{\bm{\lambda}}}
\def\vgamma{{\bm{\gamma}}}

\def\vb{{\bm{b}}}

\def\vs{{\bm{s}}}
\def\vt{{\bm{t}}}
\def\vu{{\bm{u}}}

\def\vw{{\bm{w}}}
\def\vx{{\bm{x}}}
\def\vy{{\bm{y}}}
\def\vz{{\bm{z}}}


\def\evlambda{{\lambda}}

\def\evpsi{{\psi}}
\def\evsigma{{\sigma}}
\def\evtheta{{\theta}}
\def\evgamma{{\gamma}}

\def\evb{{b}}

\def\evs{{s}}
\def\evt{{t}}
\def\evu{{u}}

\def\evw{{w}}
\def\evx{{x}}
\def\evy{{y}}
\def\evz{{z}}

\def\mA{{\bm{A}}}
\def\mB{{\bm{B}}}

\def\mE{{\bm{E}}}

\def\mT{{\bm{T}}}

\def\mX{{\bm{X}}}

\def\mZ{{\bm{Z}}}

\def\mPsi{{\bm{\Psi}}}
\def\mTheta{{\bm{\Theta}}}

\def\mSigma{{\bm{\Sigma}}}

\DeclareMathAlphabet{\mathsfit}{\encodingdefault}{\sfdefault}{m}{sl}
\SetMathAlphabet{\mathsfit}{bold}{\encodingdefault}{\sfdefault}{bx}{n}


\def\gG{{\mathcal{G}}}


\def\sB{{\mathbb{B}}}

\def\sF{{\mathbb{F}}}
\def\sG{{\mathbb{G}}}
\def\sH{{\mathbb{H}}}

\def\sJ{{\mathbb{J}}}
\def\sK{{\mathbb{K}}}
\def\sL{{\mathbb{L}}}

\def\sN{{\mathbb{N}}}

\def\sR{{\mathbb{R}}}

\def\sT{{\mathbb{T}}}

\def\sX{{\mathbb{X}}}
\def\sY{{\mathbb{Y}}}
\def\sZ{{\mathbb{Z}}}


\def\emA{{A}}
\def\emB{{B}}

\def\emE{{E}}

\def\emT{{T}}

\def\emX{{X}}

\def\emZ{{Z}}

\def\emPsi{{\Psi}}
\def\emTheta{{\Theta}}







\newcommand{\E}{\mathbb{E}}



\DeclareMathOperator*{\argmax}{arg\,max}

\usepackage{url}


\usepackage{titletoc}
\usepackage{hyperref}

\usepackage{graphicx}
\usepackage{mathtools}
\usepackage{amssymb}
\usepackage{amsthm}
\usepackage[nameinlink,capitalize]{cleveref}
\usepackage{makecell}
\usepackage{pgfplots}
\usepackage{subcaption}
\usepackage{siunitx}
\usepackage{wrapfig}
\usepackage{booktabs}

\theoremstyle{plain}
\newtheorem{theorem}{Theorem}[section]
\newtheorem{proposition}[theorem]{Proposition}
\newtheorem{lemma}[theorem]{Lemma}
\newtheorem{corollary}[theorem]{Corollary}
\theoremstyle{definition}
\newtheorem{definition}[theorem]{Definition}

\theoremstyle{remark}

\newenvironment{customthm}[1]
{\innercustomthm}
{\endinnercustomthm}


\crefname{definition}{Definition}{Definitions}

\title{Localized Randomized Smoothing\\for Collective Robustness Certification}


\author{Jan Schuchardt$^{1}$\thanks{equal contribution}~, ~Tom Wollschl\"ager$^{1 \fnsymbol{footnote}}$, ~Aleksandar Bojchevski$^{2}$, ~Stephan G\"unnemann$^{1}$\\
\texttt{\{j.schuchardt,t.wollschlaeger,s.guennemann\}@tum.de}\\
\texttt{\{bojchevski\}@cispa.de} \\
\textsuperscript{1}Technical University of Munich \\
\textsuperscript{2}CISPA Helmholtz Center for Information Security \\
}

%

\iclrfinalcopy 
\begin{document}

\maketitle

\begin{abstract}
Models for image segmentation, node classification and many other tasks map a single input to multiple labels.
By perturbing this single shared input (e.g.~the image) an adversary can manipulate several predictions (e.g.~misclassify several pixels).
Collective robustness certification is the task of provably bounding the number of robust predictions under this threat model.
The only dedicated method that goes beyond certifying each output independently is limited to \textit{strictly local} models, where each prediction is associated with a small receptive field.
We propose a more general collective robustness certificate for all types of models. We further show that this approach is beneficial for the larger class of \textit{softly local} models, where each output is dependent on the entire input
but assigns different levels of importance to different input regions (e.g.~based on their proximity in the image).
The certificate is based on our novel localized randomized smoothing approach, where the random perturbation strength for different input regions is proportional to their importance for the outputs.
Localized smoothing Pareto-dominates existing certificates on both image segmentation and node classification tasks, simultaneously offering higher accuracy and stronger certificates.
\end{abstract}

\section{Introduction}\label{section:introduction}
There is a wide range of tasks that require models making multiple predictions based on a single input.
For example,  semantic segmentation  requires assigning a label to each pixel in an image.
When deploying such \textit{multi-output} classifiers in practice, their robustness should be a key concern.
After all -- just like simple classifiers \citep{Szegedy2014} -- they can fall victim to adversarial attacks \citep{Xie2017, Zuegner2018,  Belinkov2018}. Even without an adversary, random noise or measuring errors can cause predictions to unexpectedly change.

We propose a novel method providing provable guarantees on \textit{how many} predictions can be changed by an adversary.
As all outputs operate on the same input, they have to be attacked simultaneously by choosing a single perturbed input,
which can be more challenging for an adversary than attacking them independently.
We must account for this to obtain a proper \textit{collective robustness certificate}.

The only dedicated collective certificate that goes beyond certifying each output independently~\citep{Schuchardt2021} is only beneficial for models we call \textit{strictly local}, where each output depends  on a small, pre-defined subset of the input. 
Multi-output classifiers
, however, are often only \textit{softly local}.
While all
their predictions are in principle dependent on the entire input, each output may assign different importance to different subsets.
For example, convolutional networks for image segmentation can have small effective receptive fields \citep{Luo2016,Liu2018}, i.e.\ primarily use a small region of the image in labeling each pixel.
Many models for node classification are based on the homophily assumption that connected nodes are mostly of the same class. Thus, they primarily use features from neighboring nodes.
Transformers, which can in principle attend to arbitrary parts of the input, may in practice learn ``sparse'' attention maps, with the prediction for each token being mostly determined by a few (not necessarily nearby) tokens \citep{Shi2021}.
\begin{figure}
	\centering
	\includegraphics[width=\columnwidth]{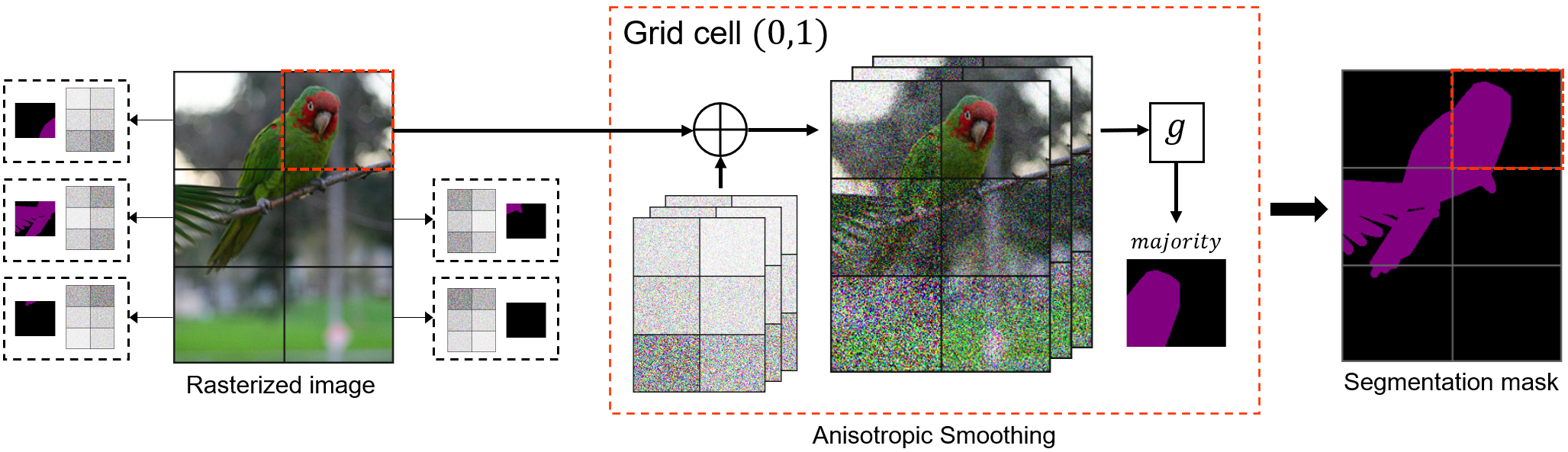}
	\caption{Localized randomized smoothing applied to semantic segmentation.
		We assume that the most relevant information for labeling a pixel is contained in other nearby pixels.
		We partition the input image into multiple grid cells.
		For each grid cell, we sample noisy images from a \textit{different} anisotropic distribution that applies more noise to far-away, less relevant 
		cells.
		Segmenting all noisy images, cropping the result and computing the majority vote 
		yields a local segmentation mask.
		These per-cell segmentation masks can then be combined into a complete segmentation mask.}
	\label{fig:main_figure}
	\vskip-0.2in
\end{figure}

Softly local models pose a \textit{budget allocation problem} for
an adversary that tries to simultaneously manipulate multiple predictions by crafting a single perturbed input.
When each output is primarily focused on a different part of the input, the attacker has to distribute their limited adversarial budget and may be unable to attack all predictions at once.

We propose \textit{localized randomized smoothing}, a novel method for the collective robustness certification of softly local models that exploits this budget allocation problem.
It is an extension of randomized smoothing \citep{Lecuyer2019,Li2019,Cohen2019}, a versatile black-box certification method which is based on constructing a smoothed classifier that returns the expected prediction of a model under random perturbations of its input (more details in \autoref{section:background}).
Randomized smoothing is typically applied to single-output models with \textit{isotropic} Gaussian noise.
In localized smoothing however, we smooth each output (or set of outputs) of a multi-output classifier using a \textit{different}
distribution that is \textit{anisotropic}.
This is illustrated in \autoref{fig:main_figure}, where the predicted segmentation masks for each grid cell are smoothed using a different distribution.
For instance, the distribution for segmenting the top-right cell applies less noise to the top-right cell.
The smoothing distribution for segmenting the bottom-left cell applies significantly more noise to the top-right cell.

Given a specific output of a softly local model, using a low noise level for the most relevant parts of the input lets us preserve a high prediction quality. 
Less relevant parts can be smoothed with a higher noise level to guarantee more robustness.
The resulting certificates (one per output) explicitly quantify how robust each prediction is to perturbations of which part of the input.
This information about the smoothed model's locality can then be used to combine the per-prediction certificates into a stronger collective certificate that accounts for the adversary's budget allocation problem.\footnote{An implementation will be made available at 
\href{https://www.cs.cit.tum.de/daml/localized-smoothing/}{https://www.cs.cit.tum.de/daml/localized-smoothing}.}

Our core contributions are:
\vspace{-0.9em}
\begin{itemize}
	\itemsep-0.2em 
	\item \emph{Localized randomized smoothing}, a novel smoothing scheme for multi-output classifiers.
	\item An efficient anisotropic randomized smoothing certificate for discrete data.
	\item A collective certificate
	based on localized randomized smoothing.
\end{itemize}

\section{Background and Related Work}\label{section:background}
\textbf{Randomized smoothing.}\label{section:background_smoothing} Randomized smoothing is a certification technique that can be used for various threat models and tasks. 
For the sake of exposition, let us discuss a certificate for $l_2$ perturbations \citep{Cohen2019}. 
Assume we have a $D$-dimensional input space $\sR^D$, label set $\sY$ and classifier $g : \sR^D \rightarrow \sY$.
We can use isotropic Gaussian noise to construct the \textit{smoothed classifier} 
$f = \mathrm{argmax}_{y \in \sY} \Pr_{\vz \sim \mathcal{N}(\vx,\sigma)}\left[g(\vz) = y\right]$ that returns the most likely prediction of \textit{base classifier} $g$ under the input distribution\footnote{In practice, all probabilities have to be estimated using Monte Carlo sampling (see discussion in~\autoref{section:monte_carlo}).}.
Given an input $\vx \in \sR^D$ and smoothed prediction $y = f(\vx)$, we can then easily determine whether $y$ is robust to all $l_2$ perturbations of magnitude $\epsilon$, i.e.~whether $\forall \vx' : || \vx' - \vx||_2 \leq \epsilon : f(\vx') = y$.
Let $q = \Pr_{\vz \sim \mathcal{N}(\vx,\sigma)}\left[g(\vz) = y\right]$ be the probability  of predicting label $y$.
The prediction is certifiably robust if $\epsilon < \sigma \Phi^{-1}(q)$ \citep{Cohen2019}.
This result showcases a trade-off inherent to randomized smoothing:
Increasing the noise level ($\sigma$) may strengthen the certificate,
but could also lower the accuracy of $f$ or reduce $q$ and thus weaken the certificate.


\textbf{White-box certificates for multi-output classifiers.} There are multiple recent methods for certifying the robustness of multi-output models by analyzing their specific architecture and weights (for example, see \citep{Tran2021, Zuegner2019, Bojchevski2019, Zuegner2020, Ko2019, Ryou2021, Shi2020,Bonaert2021}). They are however not designed to certify collective robustness, i.e.\ determine whether multiple outputs can be simultaneously attacked using a single perturbed input.
They can only determine independently for each prediction whether or not it can be attacked.

\textbf{Black-box certificates for multi-output classifiers.} 
Most directly related to our work is the aforementioned certificate of \citet{Schuchardt2021}, which is only beneficial for strictly local models (i.e.~models where each output has a small receptive field).
In~\autoref{section:schuchardt_comparison} we show that, for randomly smoothed models, their certificate is a special case of ours.
SegCertify \citep{Fischer2021} is a collective certificate for segmentation.
This method certifies each output independently using isotropic smoothing (ignoring the budget allocation problem) and uses Holm correction~\citep{Holm1979} to obtain tighter Monte Carlo estimates.
It then counts the number of certifiably robust predictions and tests whether it equals the number of predictions.
In~\autoref{section:fischer_comparison} we demonstrate that our method can always provide guarantees that are at least as strong.
Another method that
can in principle be used to certify collective robustness 
is center smoothing \citep{Kumar2021}.
It bounds the change of a vector-valued function w.r.t\ to a distance function. 
Using the $l_0$ pseudo-norm, it can bound how many predictions can be simultaneously changed.
More recently, \citet{Chen2022} proposed a collective certificate for bagging classifiers. Different from our work, they consider poisoning (train-time) instead of evasion (test-time) attacks.
\citet{Yatsura2022} prove robustness for segmentation, but consider patch-based instead of $\ell_p$-norm attacks and certify each prediction independently.

\textbf{Anisotropic randomized smoothing.} While only designed for single-output classifiers, two recent certificates for anisotropic  Gaussian and uniform smoothing~\citep{Fischer2020,Eiras2021}
can be used as a component of our collective certification approach:
They can serve as per-prediction certificates, which we can then combine into our stronger collective certificate
(more details in~\autoref{section:recipe}).

\section{Preliminaries}\label{section:preliminaries}
\subsection{Collective Threat Model}\label{section:collective_threat_model}
We assume a multi-output classifier $f : \sX^{D_\mathrm{in}} \rightarrow \sY^{D_\mathrm{out}}$, that maps  $D_\mathrm{in}$-dimensional inputs to $D_\mathrm{out}$ labels from label set $\sY$.
We further assume that this classifier $f$ is the result of randomly smoothing each output of a base classifier $g$.
Given this multi-output classifier $f$, an input $\vx \in \sX^{D_\mathrm{in}}$ and the corresponding predictions $\vy = f(\vx)$, the objective of the adversary is to cause as many predictions from a set of targeted indices $\sT \subseteq \{1,\dots,D_\text{out}\}$ to change. That is, their objective is  
$\min_{\vx' \in \sB_\vx} \sum_{n \in \sT} \mathrm{I}\left[f_n(\vx') = \evy_n \right]$, 
where $\mathrm{I}$ is the indicator function and $\sB_\vx \subseteq \sX^{D_\mathrm{in}}$ is the perturbation model. As is common in robustness certification, we assume a $\ell_p$-norm perturbation model, i.e.\ $\sB_\vx = \left\{\vx' \in \sX^{D_\mathrm{in}} \mid ||\vx' - \vx||_p \leq \epsilon \right\}$ with $p, \epsilon \geq 0$.
Importantly, note that the minimization operator is outside the sum, meaning the predictions have to be attacked using a single input.

\subsection{A Recipe for Collective Certificates}\label{section:recipe}
Before discussing localized randomized smoothing,
we show how to combine arbitrary per-prediction certificates
into a collective certificate,
a procedure that underlies both our method and that of~\citet{Schuchardt2021} and~\citet{Fischer2021}. The first step is to apply an arbitrary certification procedure to each prediction $y_1, \dots, y_{D_\mathrm{out}}$
in order to obtain per-prediction \textit{base certificates}.
\begin{definition}[Base certificates]\label{definition:base_certs}
	A base certificate for a prediction $\evy_n = f_n(\vx)$
	is a set $\sH^{(n)} \subseteq \sX^{D_\mathrm{in}}$ of perturbed inputs s.t.\ $\forall \vx' \in \sH^{(n)}: f_n(\vx') = y_n $.
\end{definition}

Using these base certificates, one can derive two bounds on the adversary's objective:
\begin{equation}
	\min_{\vx' \in \sB_\vx} \smashoperator{\sum_{n \in \sT}} \mathrm{I}\left[f_n(\vx') = \evy_n \right]
	\underset{(1.1)}{\geq}
	\min_{\vx' \in \sB_\vx} \smashoperator{\sum_{n \in \sT}} \mathrm{I}\left[\vx' \in \sH^{(n)} \right]
	\underset{(1.2)}{\geq}
	\smashoperator{\sum_{n \in \sT}} \min_{\vx' \in \sB_\vx}  \mathrm{I}\left[\vx' \in \sH^{(n)} \right].
	\label{eq:recipe}
\end{equation}
\hyperref[eq:recipe]{Eq. 1.1} follows from \autoref{definition:base_certs} (if a prediction is certifiably robust to $\vx'$, then $f_n(\vx') = y_n$), while \hyperref[eq:recipe]{Eq. 1.2} results from moving the $\min$ operator inside the summation.

\hyperref[eq:recipe]{Eq. 1.2} is the \textit{na\"ive collective certificate}: It iterates over the predictions and counts how many are certifiably robust to perturbation model $\sB_\vx$.
Each summand involves a separate minimization problem.
Thus, the certificate
neglects that the adversary has to choose a single perturbed input
to attack all outputs. SegCertify~\citep{Fischer2021} applies this to isotropic Gaussian smoothing.

While \hyperref[eq:recipe]{Eq. 1.1} is seemingly tighter than the na\"ive collective certificate, 
it may lead to identical results.
For example, let us consider the most common case where the base certificates guarantee robustness within an $l_p$ ball, i.e.\ $\sH^{(n)} = \left\{\vx'' \mid ||\vx'' - \vx||_p \leq r^{(n)}\right\}$ with certified radii $r^{(n)}$. Then, the optimal solution to both \hyperref[eq:recipe]{Eq. 1.1} and \hyperref[eq:recipe]{Eq. 1.2} is to choose an arbitrary
$\vx'$ with $||\vx' - \vx|| = \epsilon$:
\begin{equation*}
		\min_{\vx' \in \sB_\vx} \smashoperator{\sum_{n \in \sT}} \mathrm{I}\left[\vx' \in \sH^{(n)} \right]
		= \smashoperator{\sum_{n \in \sT}} \mathrm{I}\left[\epsilon < r^{(n)} \right]
		=  \smashoperator{\sum_{n \in \sT}} \min_{\vx' \in \sB_\vx} \mathrm{I}\left[\vx' \in \sH^{(n)} \right].
\end{equation*}

The main contribution of \citet{Schuchardt2021} is to notice that, by exploiting strict locality (i.e.\ the outputs having small receptive fields), one can augment certificate \hyperref[eq:recipe]{Eq. 1.1} to make it  tighter than the naive collective certificate from \hyperref[eq:recipe]{Eq. 1.2}.
One must simply mask out all perturbations falling outside a given receptive field when evaluating the corresponding base certificate:
\begin{equation*}
	\min_{\vx' \in \sB_\vx} \smashoperator{\sum_{n \in \sT}} \mathrm{I}\left[
	\left( \vpsi^{(n)} \odot \vx' + (1 - \vpsi^{(n)}) \odot \vx  \right) \in \sH^{(n)}
	\right].
\end{equation*}
Here, $\vpsi^{(n)} \in \{0,1\}^{D_\mathrm{in}}$ encodes the receptive field
of $f_n$ and $\odot$ is the elementwise product.
If two outputs $f_n$ and $f_m$ have disjoint receptive fields (i.e.\ ${\vpsi^{(n)}}^T \vpsi^{(m)} = 0$), then the adversary has to split up their  limited adversarial budget and may be unable to attack both at once.

\section{Localized Randomized Smoothing}\label{section:localized_randomized_smoothing}
The core idea behind localized smoothing is that, rather than
improving upon the na\"ive collective certificate by 
using external knowledge about \textit{strict} locality,
we can use anisotropic randomized smoothing to obtain base certificates that directly encode \textit{soft} locality.
Here, we explain our approach in a domain-independent manner 
before turning to specific distributions and data-types in \autoref{section:base_certificates}.

In localized randomized smoothing, we associate base classifier outputs $g_1, \dots, g_{D_\mathrm{out}}$
with distinct anisotropic
smoothing distributions $\Psi^{(1)}_\vx, \dots, \Psi^{(D_\mathrm{out})}_\vx$
that depend on input $\vx$.
For example, they could be
Gaussian distributions with mean $\vx$ and distinct covariance matrices -- like in \autoref{fig:main_figure}, where we use a different distribution for
each grid cell. 
We
use these distributions to construct the smoothed classifier $f$, where each output $f_n(\vx)$ is the result of randomly smoothing $g_n(Z)$ with $\Psi_\vx^{(n)}$. 

To certify robustness for a vector of predictions $\vy = f(\vx)$, we follow the procedure discussed in \autoref{section:recipe},
i.e.~compute
base certificates $\sH^{(1)}, \dots, \sH^{(D_\mathrm{out)}}$ and solve
\hyperref[eq:recipe]{Eq. 1.1}.
We do not make any assumption about how the base certificates are computed.
However, we require that they comply with a common interface, which will later allow us 
combine them via linear programming:
\begin{definition}[Base certificate interface]\label{definition:interface}
	A base certificate $\sH^{(n)} \subseteq \sX^{D_\mathrm{in}}$ is compliant with our base certificate interface for $l_p$-norm perturbations if there is a $\vw \in \sR^{D_\mathrm{in}}_{+}$ and $\eta^{(n)} \in \sR_{+}$ such that
	\begin{equation}\label{eq:base_cert_interface}
		\sH^{(n)} = \left\{\vx'
		\left| \
		\smashoperator{\sum_{d=1}^{D_\mathrm{in}}}
		\evw_{d}^{(n)} \cdot |\evx'_d - \evx_d|^p < \eta^{(n)}
		\right.
		\right\}.
	\end{equation}
\end{definition}
The weight $\evw_d^{(n)}$ quantifies how sensitive $y_n$ is to perturbations of input dimension $d$.
It will be smaller where the anisotropic smoothing distribution applies more noise.
The radius $\eta^{(n)}$ quantifies the overall level of robustness.
In \autoref{section:base_certificates} we present different distributions and corresponding certificates that comply with this interface.
Inserting \autoref{eq:base_cert_interface} into \hyperref[eq:recipe]{Eq. 1.1} results in the collective certificate
\begin{equation}\label{eq:main_cert}
	\min_{\vx' \in \sB_\vx} \sum_{n \in \sT} \mathrm{I}\left[
	\sum_{d=1}^{D_\mathrm{in}}
	\evw_{d}^{(n)} \cdot |\evx'_d - \evx_d|^p < \eta^{(n)}
	\right].
\end{equation}
\autoref{eq:main_cert}
showcases why locally smoothed models admit a collective certificate that is stronger than na\"ively certifying each output independently (i.e.~\hyperref[eq:recipe]{Eq. 1.2}).
Because we use different distributions for different outputs, any two outputs $f^{(n)}$ and $f^{(m)}$
will have distinct certificate weights $\vw^{(n)}$ and $\vw^{(m)}$.
If they are sensitive in different parts of the input, i.e. ${\vw^{(n)}}^T \vw^{(m)}$ is small, then the adversary has to split up their  limited adversarial budget and may be unable to attack both at once.
One particularly simple example is the case ${\vw^{(n)}}^T {\vw^{(m)}} = 0$, where attacking predictions $y_n$ and $y_m$ requires allocating adversarial budget to two entirely disjoint sets of input dimensions.
In \autoref{section:schuchardt_comparison} we show that, with appropriately parameterized smoothing distributions, we can obtain base certificates with $\vw^{(n)} = c \cdot \vpsi^{(n)}$, with indicator vector $\vpsi^{(n)}$ encoding the receptive field of output $n$.
Hence, the collective guarantees from \citep{Schuchardt2021} are a special case of our certificate.

\subsection{Computing the Collective Certificate}\label{section:computing_collective_cert}
While~\autoref{eq:main_cert} constitutes a valid certificate, it is not immediately clear how to evaluate it.
However, we notice that the perturbation set $\sB_\vx$ imposes linear constraints on the elementwise differences $|x_d' - x_d|^p$,  the values of the indicator functions are binary variables and that the base certificates inside the indicator functions are characterized by linear inequalities.
We can thus reformulate~\autoref{eq:main_cert} as a mixed-integer linear program (MILP), which leads us to our main result (proof in~\autoref{section:proof_lp}):
\begin{theorem}\label{theorem:collective_lp}
	Given locally smoothed model $f$, input $\vx \in \sX^{(D_\mathrm{in})}$, smoothed prediction $\vy = f(\vx)$ and base certificates $\sH^{(1)},\dots,\sH^{D_\mathrm{out}}$ complying with interface \autoref{eq:base_cert_interface}, the number of simultaneously robust predictions
	$\min_{\vx' \in \sB_\vx} \smashoperator{\sum_{n \in \sT}} \mathrm{I}\left[f_n(\vx') = \evy_n \right]$ is lower-bounded by \vskip -0.3in
	\begin{align}
		& \min_{\vb \in \sR_+^{D_\mathrm{in}}, \vt \in \{0,1\}^{D_\mathrm{out}}} \sum_{n \in \sT} \evt_n \quad \\
		\text{s.t.} \quad
		& \forall n: 
		\vb^T \vw^{(n)} \geq (1 - \evt_n) \eta^{(n)}, \label{eq:indicator_constraint} \enskip \mathrm{sum}\{\vb\} \leq \epsilon^p.
	\end{align}
\end{theorem}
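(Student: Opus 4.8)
The plan is to show that the MILP in the statement equals (over continuous inputs, and lower-bounds in general) the quantity in \autoref{eq:main_cert}, which by \hyperref[eq:recipe]{Eq. 1.1} and \autoref{definition:base_certs} is already a valid lower bound on $\min_{\vx' \in \sB_\vx} \sum_{n \in \sT} \mathrm{I}\left[f_n(\vx') = \evy_n\right]$. The first step is the change of variables $\evb_d = |\evx'_d - \evx_d|^p$. Under this map the perturbation model $\sB_\vx = \{\vx' : \|\vx'-\vx\|_p \le \epsilon\}$ is sent into the polytope $\{\vb \in \R_+^{D_\mathrm{in}} : \mathrm{sum}\{\vb\} \le \epsilon^p\}$, and each base-certificate condition $\sum_d \evw_d^{(n)} |\evx'_d - \evx_d|^p < \eta^{(n)}$ (from \autoref{eq:base_cert_interface}) turns into the linear condition $\vb^T \vw^{(n)} < \eta^{(n)}$. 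Over $\sX = \R$ this map is onto the polytope, since any feasible $\vb$ is realized by $\evx'_d = \evx_d + \evb_d^{1/p}$; hence \autoref{eq:main_cert} equals $\min_{\vb} \sum_{n \in \sT} \mathrm{I}\left[\vb^T \vw^{(n)} < \eta^{(n)}\right]$ over that polytope. Over a discrete $\sX$ the image is only a subset of the polytope, so minimizing over the whole polytope can only decrease the value, which is the direction needed for a lower bound.

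The second step is to linearize the indicators. For each $n$ I introduce a binary variable $\evt_n \in \{0,1\}$ together with the constraint $\vb^T \vw^{(n)} \ge (1 - \evt_n)\eta^{(n)}$ of \autoref{eq:indicator_constraint}. The point that makes this work without any ``big-M'' constant is that $\vw^{(n)} \in \R_+^{D_\mathrm{in}}$ and $\vb \in \R_+^{D_\mathrm{in}}$ force $\vb^T \vw^{(n)} \ge 0$: when $\evt_n = 1$ the constraint is vacuous, and when $\evt_n = 0$ it becomes $\vb^T \vw^{(n)} \ge \eta^{(n)}$. So the constraint is exactly the implication $\vb^T \vw^{(n)} < \eta^{(n)} \Rightarrow \evt_n = 1$, i.e.\ $\evt_n \ge \mathrm{I}\left[\vb^T \vw^{(n)} < \eta^{(n)}\right]$; the boundary case $\vb^T\vw^{(n)} = \eta^{(n)}$ is consistent, since there the indicator is $0$ and $\evt_n = 0$ stays feasible. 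Because the objective $\sum_{n \in \sT}\evt_n$ is nondecreasing in each $\evt_n$ and the constraints decouple across $n$ given $\vb$, for any fixed feasible $\vb$ the optimal choice is $\evt_n = \mathrm{I}\left[\vb^T \vw^{(n)} < \eta^{(n)}\right]$ for $n \in \sT$ (and $\evt_n = 1$ for $n \notin \sT$, which makes the remaining constraints vacuous, so the ``$\forall n$'' may be read as ``$\forall n \in \sT$'' without loss). This gives inner objective value $\sum_{n \in \sT} \mathrm{I}\left[\vb^T \vw^{(n)} < \eta^{(n)}\right]$, and minimizing over $\vb$ shows the MILP value equals $\min_\vb \sum_{n \in \sT}\mathrm{I}\left[\vb^T \vw^{(n)} < \eta^{(n)}\right]$, which by the previous paragraph is at most \autoref{eq:main_cert} (with equality over continuous inputs), hence at most the true number of simultaneously robust predictions.

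I expect the reformulation to be essentially bookkeeping; the two places that need genuine care are (i) the direction of the discrete-to-continuous relaxation — one must check that enlarging the feasible set of $\vb$ keeps the resulting value a lower bound rather than an upper bound — and (ii) the match between the strict inequality ``$< \eta^{(n)}$'' in \autoref{eq:base_cert_interface} and the non-strict constraint ``$\ge (1-\evt_n)\eta^{(n)}$'' in \autoref{eq:indicator_constraint}, which I would argue is exact precisely because $\vw^{(n)}, \vb \ge \vzero$ (so no slack variable or large constant is needed) and the boundary case behaves as described. Nothing beyond these observations and the two substitutions above should be required.
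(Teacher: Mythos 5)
Your proposal is correct and follows essentially the same route as the paper's proof: insert the base-certificate sets into Eq.~1.1, substitute budget variables $\evb_d = |\evx'_d - \evx_d|^p$ subject to $\mathrm{sum}\{\vb\}\le\epsilon^p$, and linearize the indicators via binary $\evt_n$ with the constraint $\vb^T\vw^{(n)} \ge (1-\evt_n)\eta^{(n)}$, whose validity rests on $\vb,\vw^{(n)}\ge\vzero$. Your added care about the discrete-to-continuous relaxation direction and the boundary case $\vb^T\vw^{(n)}=\eta^{(n)}$ only makes explicit what the paper treats implicitly (it notes $\vb\in\{0,1\}^{D_\mathrm{in}}$ for binary data), so no gap remains.
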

\vskip-0.1in
The vector $\vb$ models the allocation of adversarial budget (i.e.\ the elementwise differences $\evb_d = |\evx'_d - \evx_d|^p$). The vector $\vt$ serves the same role as the indicator functions from~\autoref{eq:main_cert}, i.e.\ it indicates which predictions are certifiably robust.
\autoref{eq:indicator_constraint} ensures that $\vb$ does not exceed the overall budget $\epsilon$ (i.e.\ $\vx' \in \sB_\vx)$\ 
and that 
$\evt_n$ can only be set to $0$ if $\vb^T \vw^{(n)}  \geq \eta^{(n)}$, i.e.~only when the base certificate cannot guarantee robustness for prediction $y_n$.
This problem can be solved using any MILP solver.
Its optimal value provably bounds the number of simultaneously robust predictions.

\subsection{Improving Efficiency}
Solving large MILPs is expensive.
In~\autoref{section:appendix_efficiency} we show that partitioning the outputs into $N_\mathrm{out}$ subsets sharing the same smoothing distribution and the inputs into $N_\mathrm{in}$ subsets sharing the same noise level (for example like in~\autoref{fig:main_figure}, where we partition the image into a $2 \times 3$ grid), as well as quantizing the base certificate parameters $\eta^{(n)}$ into $N_\mathrm{bin}$ bins, reduces the number of variables and constraints from $D_\mathrm{in} +  D_\mathrm{out}$ and $D_\mathrm{out} + 1$ to
$N_\mathrm{in} + N_\mathrm{out} \cdot  N_\mathrm{bins}$ and $N_\mathrm{out} \cdot  N_\mathrm{bins} + 1$, respectively.
We can thus control the problem size independent of the data's dimensionality.
We further derive a linear relaxation of the MILP that can be efficiently solved while preserving the soundness of the certificate.

\subsection{Accuracy-Robustness Tradeoff}\label{section:accuracy_robustness_tradeoff}
When discussing \autoref{eq:main_cert}, we only explained why
our collective certificate for \textit{locally smoothed} models is better than a na\"ive combination of localized smoothing base certificates.
However, this does not necessarily mean that our certificate is also stronger than  na\"ively certifying an \textit{isotropically smoothed} model.
This is why we focus on soft locality.
With isotropic smoothing, high certified robustness requires using large noise levels, which degrade the model's prediction quality.
Localized smoothing, when applied to softly local models, can circumvent this issue.
For each output, we can use low noise levels for the most important parts of the input to retain high prediction quality.
Our LP-based collective certificate allows us to still provide strong collective robustness guarantees.
We investigate this improved accuracy-robustness trade-off in our experimental evaluation (see~\autoref{section:experiments}).

\section{Base Certificates}\label{section:base_certificates}
To apply our collective certificate in practice, we require smoothing distributions $\Psi^{(n)}_\vx$ and corresponding per-prediction  base certificates that comply with the interface from~\autoref{definition:base_certs}.
As base certificates for $l_2$ and $l_1$ perturbations we can reformulate existing anisotropic Gaussian~\citep{Fischer2020,Kumar2021} 
and uniform~\citep{Kumar2021} smoothing certificates for single-output models:
For $\smash{\Psi^{(n)}_\vx = \mathcal{N}(\vx, \mathrm{diag}(\vs^{(n)}))}$ we have $\evw^{(n)}_d = 1 / (\evs_d^{(n)})^2$ and 
$\eta^{(n)} = (\Phi^{-1}(q_{n, y_n}))^2$ with $\smash{q_{n, y_n} = \Pr_{\vz \sim \Psi_\vx^{(n)} } \left[g_n(\vz) = y\right]}$.
For $\smash{\Psi^{(n)}_\vx = \mathcal{U}\left(\vx, \vlambda^{(n)}\right)}$ we have $\evw^{(n)}_d = 1 / \evlambda^{(n)}_d$ and $\smash{\eta^{(n)} = \Phi^{-1}\left(q_{n, y_n}\right)}$. We prove the correctness of these reformulations in~\autoref{sec-appendix-base-cert}.

For $l_0$ perturbations of binary data, we can use a distribution $\mathcal{F}(\vx, \vtheta)$ that  flips $\evx_d$ with probability $\evtheta_d \in [0,1]$, i.e. $\Pr[\evz_d \neq \evx_d ] = \evtheta_d$ for $\vz \sim \mathcal{F}(\vx, \vtheta)$.
Existing methods (e.g.~\citep{Lee2019}) can be used to derive per-prediction certificates for this distribution, but have  exponential runtime in the number of unique values in $\vtheta$.
Thus, they are not suitable for localized smoothing, which uses different $\evtheta_d$ for different parts of the input.
We therefore propose a novel, more efficient approach: \textit{Variance-constrained certification},
which smooths the base classifier's softmax scores instead of its predictions and then uses both their expected value and variance to certify robustness 
(proof in~\autoref{section:variance_smoothing}):
\begin{theorem}[Variance-constrained certification]\label{theorem:variance_constrained_cert}
	Given a function $g : \sX \rightarrow \Delta_{|\sY|}$ mapping from discrete set $\sX$ to scores from the $\left(|\sY| -1\right)$-dimensional probability simplex, let 
	$f(\vx) = \mathrm{argmax}_{y \in \sY}
	\mathbb{E}_{\vz \sim \Psi_\vx}
	\left[g(\vz)_y\right]$ with smoothing distribution $\Psi_\vx$ and probability mass function
	$\pi_{\vx}(\vz) = \Pr_{\tilde{\vz} \sim \Psi_\vx}\left[\tilde{\vz} = \vz\right]$.
	Given an input $\vx \in \sX$ and smoothed prediction $\evy = f(\vx)$, 
	let $\mu = \mathbb{E}_{\vz \sim \Psi_\vx}\left[g(\vz)_y\right]$
	and $\zeta = \mathbb{E}_{\vz \sim \Psi_\vx}\left[\left(g(\vz)_y - \nu \right)^2\right]$ with $\nu \in \sR$.
	Assuming $\nu \leq \mu$, then $f(\vx') = y$ if 
	\begin{equation}\label{eq:variance_constrained_cert}
		\sum_{\vz \in \sX} \frac{\pi_{\vx'}(\vz)}{\pi_{\vx}(\vz)} \cdot \pi_{\vx'}(\vz)
		< 1 + \frac{1}{\zeta - \left(\mu - \nu  \right)^2} \left(\mu - \frac{1}{2}\right).
	\end{equation}
\end{theorem}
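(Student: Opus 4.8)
The plan is to derive a sufficient condition under which $\mathbb{E}_{\vz \sim \Psi_{\vx'}}[g(\vz)_y] > \frac{1}{2}$, since then the score of label $y$ exceeds $\frac{1}{2}$ and hence exceeds the score of every other label (the scores lie on the simplex $\Delta_{|\sY|}$ and sum to $1$), which gives $f(\vx') = y$. So the whole argument reduces to lower-bounding the perturbed expectation $\mu' := \mathbb{E}_{\vz \sim \Psi_{\vx'}}[g(\vz)_y]$ using only the unperturbed quantities $\mu$ and $\zeta$ together with the likelihood ratio $\pi_{\vx'}/\pi_{\vx}$.

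First I would rewrite the perturbed expectation as an expectation under the \emph{unperturbed} distribution by importance weighting: $\mu' = \mathbb{E}_{\vz \sim \Psi_{\vx}}\bigl[ \rho(\vz)\, g(\vz)_y \bigr]$ where $\rho(\vz) = \pi_{\vx'}(\vz)/\pi_{\vx}(\vz)$. The key idea is that we only know two moments of $g(\cdot)_y$ under $\Psi_{\vx}$ — namely $\mathbb{E}[g(\vz)_y] = \mu$ and $\mathbb{E}[(g(\vz)_y - \nu)^2] = \zeta$ — so I would set up the worst-case problem: minimize $\mathbb{E}_{\vz\sim\Psi_\vx}[\rho(\vz) h(\vz)]$ over all measurable $h : \sX \to [0,1]$ (a valid coordinate of a simplex vector, so $h \in [0,1]$) subject to $\mathbb{E}[h] = \mu$ and $\mathbb{E}[(h-\nu)^2] = \zeta$. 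This is a linear program in the "variable" $h$ (the objective and the first constraint are linear in $h$; the variance constraint is quadratic, but can be handled by Lagrangian duality / an $S$-procedure-style argument, or by noting the extremal $h$ is determined by a quadratic-in-$\rho$ threshold rule). Relaxing $h \in [0,1]$ to $h \le 1$ (dropping $h \ge 0$, which only weakens the bound) and using a Lagrange multiplier for each of the two equality constraints, the pointwise minimizer has the form $h(\vz) = \min\{1, \text{affine function of } \rho(\vz)\}$ truncated appropriately; plugging this back in and using the moment constraints to solve for the multipliers produces a closed-form lower bound on $\mu'$ in terms of $\mu$, $\zeta$, $\nu$, and $\mathbb{E}_{\vz\sim\Psi_\vx}[\rho(\vz)^2] = \sum_{\vz} \frac{\pi_{\vx'}(\vz)^2}{\pi_{\vx}(\vz)} = \sum_{\vz} \frac{\pi_{\vx'}(\vz)}{\pi_{\vx}(\vz)} \pi_{\vx'}(\vz)$, which is exactly the quantity appearing on the left-hand side of \autoref{eq:variance_constrained_cert}.

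Concretely, I expect the clean route is: let $c := \mathbb{E}_{\vz\sim\Psi_\vx}[\rho(\vz)^2]$ (note $\mathbb{E}[\rho] = 1$ always, since $\pi_{\vx'}$ is a probability mass function, so $\Var_{\Psi_\vx}(\rho) = c - 1$). By Cauchy–Schwarz or a direct Lagrangian computation, the minimum of $\mathbb{E}[\rho h]$ given the first and second moments of $h$ is something like $\mu - \sqrt{(c-1)\,(\zeta - (\mu-\nu)^2)}$ — here $\zeta - (\mu-\nu)^2 = \Var_{\Psi_\vx}(g(\vz)_y) \ge 0$ is the genuine variance (this is why $\nu$ is free and why the hypothesis is natural), and $\nu \le \mu$ is used to ensure signs work out when we discard the $h \ge 0$ constraint and/or pick the correct root. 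Then $\mu' > \frac{1}{2}$ is guaranteed as soon as $\mu - \sqrt{(c-1)(\zeta-(\mu-\nu)^2)} > \frac{1}{2}$, i.e. $(c-1)(\zeta - (\mu-\nu)^2) < (\mu - \frac{1}{2})^2$, i.e. $c < 1 + \frac{(\mu - 1/2)^2}{\zeta - (\mu-\nu)^2}$ — and one checks that the stated condition \autoref{eq:variance_constrained_cert} matches this after identifying $c$ with the displayed sum. (There may be a slight discrepancy — the paper's RHS has $\mu - \frac12$ to the first power, not squared, inside the fraction — so I would be careful: possibly the correct extremal bound is linear, $\mu' \ge \frac12$ forced by $c - 1 \le \frac{\mu - 1/2}{\,\text{something}\,}$, coming from a one-sided Chebyshev/Cantelli-type step rather than a symmetric Cauchy–Schwarz step; I would re-derive the extremal $h$ carefully rather than guess.)

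The main obstacle is getting the \emph{tight} worst-case bound on $\mathbb{E}[\rho h]$ under the two moment constraints with $h$ confined to $[0,1]$, and in particular verifying which constraints ($h \ge 0$ vs. $h \le 1$) are active at the optimum and how $\nu \le \mu$ enters. A loose Cauchy–Schwarz gives a valid but possibly weaker bound; matching the exact form of \autoref{eq:variance_constrained_cert} requires identifying the correct extremal distribution of $h$ (a two- or three-point distribution pinned down by the moment constraints) and doing the Lagrangian bookkeeping precisely. Once that lemma is in hand, the rest — importance-reweighting, the reduction "$\mu' > 1/2 \Rightarrow f(\vx') = y$", and recognizing $\sum_{\vz}\frac{\pi_{\vx'}(\vz)}{\pi_{\vx}(\vz)}\pi_{\vx'}(\vz) = \mathbb{E}_{\Psi_\vx}[\rho^2]$ — is routine.
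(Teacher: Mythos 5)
Your proposal is correct and takes essentially the same route as the paper: reduce robustness to showing $\mathbb{E}_{\vz \sim \Psi_{\vx'}}[g(\vz)_y] > \tfrac{1}{2}$, then lower-bound this expectation by minimizing over all functions matching the first moment $\mu$ and the second moment $\zeta$ about $\nu$ under $\Psi_\vx$; the paper solves this by Lagrangian duality over \emph{unconstrained} real-valued $h$ (so no $h\in[0,1]$ bookkeeping is needed), obtaining the bound $\mu - \sqrt{\left(\rho - 1\right)\left(\zeta - (\mu-\nu)^2\right)}$ with $\rho = \sum_{\vz}\pi_{\vx'}(\vz)^2/\pi_\vx(\vz)$, exactly as in your sketch (and your centered Cauchy--Schwarz variant gives the same tight bound, with $\nu \le \mu$ entering just as you anticipated). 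The discrepancy you flagged is a typo in the theorem statement: the paper's own appendix derivation yields the squared term, i.e.\ the condition $\rho < 1 + \left(\mu - \tfrac{1}{2}\right)^2 / \left(\zeta - (\mu - \nu)^2\right)$, matching your calculation.
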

The l.h.s.\ of~\autoref{eq:variance_constrained_cert} is the expected ratio between the probability mass functions of the smoothing distributions for the perturbed ($\pi_{\vx'}$) and unperturbed ($\pi_\vx$) input.\footnote{{This term is equivalent to the exponential of the R\'enyi-divergence
$\exp\left(\mathcal{D}_\alpha\left(\Psi_{\vx'} || \Psi_\vx\right)\right)$
with $\alpha=2$.}}
It is equal to $1$ if both densities are the same, i.e.\ there is no adversarial perturbation, and greater than $1$ otherwise.
The r.h.s.\ of~\autoref{eq:variance_constrained_cert} depends on 
the expected softmax score $\mu$, a variable $\nu \leq \mu$ and the expected squared difference $\zeta$ between $\mu$ and $\nu$.
For $\nu = \mu$ the parameter $\zeta$ is the variance of the softmax score.
A higher expected value and a lower variance allow us to certify robustness for larger adversarial perturbations.

Applying~\autoref{theorem:variance_constrained_cert} with flipping distribution $\mathcal{F}(\vx, \vtheta)$ to each of the $D$ softmax vectors of our model's outputs
yields $l_0$-norm certificates for binary data 
that can be computed in linear time (see~\autoref{section:appendix_bernoulli_cert}).
In \autoref{section:appendix_sparsity_cert}, we also apply it to the sparsity-aware smoothing distribution~\citep{Bojchevski2020}, allowing us to differentiate between adversarial deletions and additions of bits.
\autoref{theorem:variance_constrained_cert} can also be generalized to continuous distributions (see~\autoref{section:variance_constrained_gaussian}). But, for fair comparison with our baselines, we use the certificates of~\cite{Eiras2021} as our base certificates for continuous data.
In practice, the smoothed classifier and the base certificates cannot be evaluted exactly. One has to use Monte Carlo sampling to provide  guarantees that hold with high probability (see~\autoref{section:monte_carlo}).

\section{Limitations}\label{section:limitations}

A limitation of our approach is that it assumes soft locality.
It can be applied to arbitrary models, but may not necessarily result in better certificates than isotropic smoothing (recall
~\autoref{section:accuracy_robustness_tradeoff}).
Also, choosing the smoothing distributions requires some
assumptions about which parts of the input are how relevant to making a prediction.
Our experiments show that natural assumptions like homophily can be sufficient.
But choosing a distribution may be more challenging for other tasks.
A limitation of (most) randomized smoothing methods is that they use sampling to approximate the smoothed classifier.
Because we use multiple distributions, we can only use a fraction of the samples per distribution.
We can alleviate this problem by sharing smoothing distributions among outputs (see~\autoref{section:sharing_noise}).
Still, future work should try to improve the sample efficiency of randomized smoothing or develop deterministic base certificates (e.g.\ by generalizing \citep{Levine2020} to anisotropic distributions), which could then be incorporated into our
linear programming framework.

\section{Experimental Evaluation}\label{section:experiments}
\begin{figure*}[tp]
	\centering
	\begin{minipage}{0.49\columnwidth}
		\centering
		\resizebox{\textwidth}{!}{\input{figures/experiments/pascal/iou_vs_avg_radius_masked.pgf}}
		\caption{Comparison of isotropic smoothing with $\sigma_\mathrm{iso} \in \{0.01,\dots,0.5\}$ to our LP-based certificate with $\left(\sigma_\mathrm{min},\sigma_\mathrm{max}\right) = (\sigma_\mathrm{iso}, \infty)$, using a modified, strictly local U-Net on Pascal-VOC.
			Localized smoothing offers the same mIOU as SegCertify\textsuperscript{*} and stronger robustness certificates.
		}
		\label{fig:pascal_iou_vs_avg_radius_masked}
	\end{minipage}
	\hfill
	\begin{minipage}{0.49\columnwidth}
		\centering
		\resizebox{\textwidth}{!}{\input{figures/experiments/pascal/0_2_vs_0_15_1_0.pgf}}
		\caption{Certified accuracy of U-Net on Pascal-VOC. We compare SegCertify\textsuperscript{*}  
			($\sigma_\mathrm{iso}=0.2$) to localized smoothing   
			($\left(\sigma_\mathrm{min},\sigma_\mathrm{max}\right) = (0.15, 1.0)$).
			Combining the base certificates (dashed blue line) via our collective LP (solid blue line) outperforms the baseline.
		}
		\label{fig:pascal_collective_lp}
	\end{minipage}
	\vskip -0.2in
\end{figure*}
In this section, we compare our method to all existing collective certificates for $\ell_p$-norm perturbations:
Center smoothing using isotropic Gaussian noise~\citep{Kumar2021}, 
SegCertify~\citep{Fischer2021} 
and the collective certificates of~\citet{Schuchardt2021}. 
To compare SegCertify to the other methods, we report the number of certifiably robust predictions and not just whether all predictions are robust.
We write SegCertify\textsuperscript{*} to highlight this.
When considering models that are not strictly local (i.e.\ all outputs depend on all inputs)
the certificates of \citet{Schuchardt2021} and \citet{Fischer2021} are identical, i.e.,~do not have to be evaluated separately.
A more detailed description of the experimental setup, hardware and computational cost can be found in \autoref{section:detailed_exp_setup}.

\textbf{Metrics.}
Evaluating randomized smoothing methods based on certificate strength alone is not sufficient.
Different distributions lead to different tradeoffs between prediction quality and certifiable robustness (as discussed in~\autoref{section:accuracy_robustness_tradeoff}).
As metrics for prediction quality, we use \textit{accuracy} and \textit{mean intersection over union} (mIOU).\footnote{I.e.\ add up confusion matrices over the entire dataset, compute per-class IOUs and average over all classes.}
The main metric for certificate strength is the \textit{certified accuracy} $\xi(\epsilon)$, i.e.,~the percentage of predictions that are correct and certifiably robust, given adversarial budget $\epsilon$. 
Following~\citep{Schuchardt2021}, we use the  \textit{average certifiable radius} (ACR) as an aggregate metric, i.e.\ 
$\sum_{n=1}^{N-1} \epsilon_n \cdot \left(\xi(\epsilon_n) - \xi(\epsilon_{n+1}\right)$ with budgets $\epsilon_1 \leq \epsilon_2 \dots \leq \epsilon_N$ and $\epsilon_1 = 0$, $\xi(\epsilon_N) = 0$.

\textbf{Evaluation procedure.}
We assess the accuracy-robustness tradeoff of each method by computing accuracy / mIOU and ACR for a wide range of smoothing distribution parameters.
We then eliminate all points that are Pareto-dominated, i.e.~for which there exist diffent parameter values that yield higher accuracy / mIOU and ACR.
Finally, we assess to if  localized smoothing dominates the baselines, i.e.~whether it can be parameterized to achieve strictly better accuracy-robustness tradeoffs.

\subsection{Image Segmentation}\label{section:experiments_pascal}

\begin{figure*}[t]
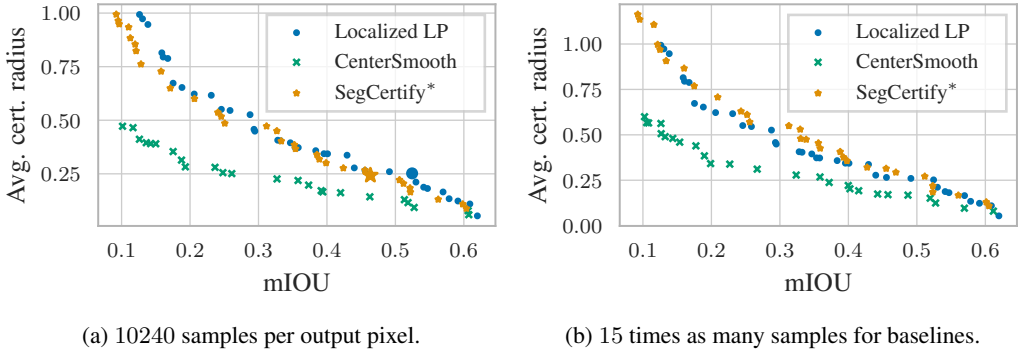

	\centering
	\begin{subfigure}[b]{0.49\columnwidth}
		\resizebox{\textwidth}{!}{\input{figures/experiments/pascal/iou_vs_avg_radius_many_samples.pgf}}
		\caption{
			$10240$ samples per output pixel.
		}
        \label{fig:many_samples}
	\end{subfigure}
	\begin{subfigure}[b]{0.49\columnwidth}
		\resizebox{\textwidth}{!}{\input{figures/experiments/pascal/iou_vs_avg_radius_few_samples.pgf}}
		\caption{
		      $15$ times as many samples for baselines.
		}
        \label{fig:few_samples}
	\end{subfigure}
	\caption{Comparison of isotropic smoothing  to our LP-based certificate with a $3 \times 5$ grid and U-Net on Pascal-VOC.
            U-Net is sufficiently local to benefit from localized smoothing (\autoref{fig:many_samples}), but not enough to offset the increased sample complexity (\autoref{fig:few_samples}) for the probabilistic base certificates.
   }
   \label{fig:not_masked}
	\vskip-0.2in
\end{figure*}

\textbf{Dataset and model.}
We evaluate our certificate for $l_2$ perturbations on $100$ images from the Pascal-VOC~\citep{Everingham2010} 2012 segmentation validation set.
Training is performed on $10582$ samples extracted from SBD, also known as "Pascal trainaug" \citep{Hariharan2011}.
Additional experiments on Cityscapes~\citep{Cordts2016} can be found in~\autoref{section:extra_experiments_cityscapes}.
To increase batch sizes and thus allow a thorough investigation of different smoothing parameters, all images are  downscaled to $50\%$ of their original size, similar to \citep{Fischer2021}.
Our base model is a U-Net segmentation model \citep{Ronneberger2015} with a ResNet-18 backbone.
For isotropic randomized smoothing, we use Gaussian noise
$\mathcal{N}\left(0, \sigma_\mathrm{iso}\right)$ with different $\sigma_\mathrm{iso} \in \{0.01,0.02,\dots,0.5\}$.
To perform localized randomized smoothing, we choose parameters $\sigma_\mathrm{min}, \sigma_\mathrm{max} \in \sR_+$ and partition all images into regular grids (similar to~\autoref{fig:main_figure}).
To smooth outputs in grid cell $(i,j)$, we sample noise for grid cell $(k,l)$ from $\mathcal{N}(0,\sigma' \cdot \1)$, with $\sigma' \in  [\sigma_\mathrm{min}, \sigma_\mathrm{max}]$ chosen proportional to the distance of $(i,j)$ and $(k,l)$ (more details in~\autoref{sec-detailed-exp-setup-segmentation}).
All training data is randomly perturbed using samples from the same smoothing distribution that is used for certification.

\textbf{Accuracy-robustness tradeoff under strict locality.}
Our goal is to verify that, if a model is sufficiently local, localized smoothing offers a better accuracy-robustness tradeoff than isotropic smoothing.
As an extreme example, we construct a strictly local model from our U-Net segmentation model.
This modified model partitions each image into a grid of size $2 \times 2$. It then iterates over cells $(i, j)$, sets all values outside $(i,j)$ to $0$ and applies  the original model. Finally, it stitches all $4$ segmentation masks into a single one.
For such a strictly local model, we can apply localized smoothing with the same $2 \times 2$ grid and $\sigma_\mathrm{max} \rightarrow \infty$ to recover the certificate of~\citet{Schuchardt2021} (see~\autoref{section:schuchardt_comparison}).\footnote{Note that they never evaluated their approach on image segmentation.}
\autoref{fig:pascal_iou_vs_avg_radius_masked}  compares the resulting trade-off for $\sigma_\mathrm{min} = \sigma_\mathrm{iso}$ to that of both isotropic smoothing baselines using $153600$ Monte Carlo samples.
Localized smoothing yields the same mIOUs as SegCertify\textsuperscript{*}, but up to \SI{22.4}{p{.}p{.}} larger ACR. Both approaches Pareto-dominate center smoothing.

\textbf{Accuracy-robustness tradeoff under soft locality.}
Next, we want to verify our claim about the existence of softly local models for which localized smoothing is beneficial.
To this end, we randomly smooth the U-Net model itself, without using masking to enforce strict locality.
We perform localized smoothing with grid size $3 \times 5$, various $\sigma_\mathrm{min} \in \{0.01,0.02,\dots,0.5\}, \sigma_\mathrm{max} \in [0.02,1.0]$ and $10240$ samples per output pixel (i.e. $10240 \cdot 15 = 153600$ samples in total). Isotropic smoothing is also performed with $10240$ samples per output pixel.
\autoref{fig:many_samples} shows that localized smoothing Pareto-dominates SegCertify\textsuperscript{*} for high-accuracy models with mIOU $> 35.3\%$.
Importantly the figure is not to be read like a line graph! Even if the vertical distance between two methods is small, one may significantly outperform the other.
For example, $\sigma_\mathrm{iso}=0.1$, with an mIOU of $46.34\%$ and an ACR of $0.24$ (highlighted with a bold cross)
is dominated by $\left(\sigma_\mathrm{min},\sigma_\mathrm{max}\right) = (0.09, 0.2)$ (highlighted with a large circle), which has a larger ACR of $0.25$ and a mIOU that is a whole \SI{6.1}{p{.}p{.}} higher.

\textbf{Benefit of linear programming.}
\autoref{fig:pascal_collective_lp} demonstrates how the linear program derived in \autoref{section:computing_collective_cert} enables this improved  tradeoff.
We compare SegCertify\textsuperscript{*} with $\sigma_\mathrm{iso} = 0.2$ to localized smoothing with $\left(\sigma_\mathrm{min},\sigma_\mathrm{max}\right) = (0.15, 1.0)$.
Na\"ively combining the base certificates (dashed line) is not sufficient for outperforming the baseline, as they cannot certify robustness beyond $\epsilon=0.45$. However, 
solving the collective LP (solid blue line) extends the maximum certifiable radius to $\epsilon=1.15$.

\textbf{Sample efficiency.}
Using the same number of samples per output pixel for both localized and isotropic smoothing neglects that localized smoothing requires sampling from $15$ different distributions, i.e.\ sampling $15$ times as many images.\footnote{This is however not necessary for the previously discussed strictly local model.}
In~\autoref{fig:few_samples} we allow the baselines to sample the same number of images.
Now, localized smoothing is mostly dominated by SegCertify\textsuperscript{*}, except for high-accuracy models with mIOU $\in [52.4\%,57.8\%]$ or mIOU $>60.8\%$.
We conclude that U-Net is local enough to benefit from localized smoothing, but not enough to offset the practical problem of having to work with fewer Monte Carlo samples (see also discusion in~\autoref{section:limitations})
in the entire range of possible isotropic smoothing parameters.
Note, however, that we can always recover the guarantees of SegCertify\textsuperscript{*} by using a $1 \times 1$ grid (see~\autoref{section:fischer_comparison}).

\begin{figure*}[t]
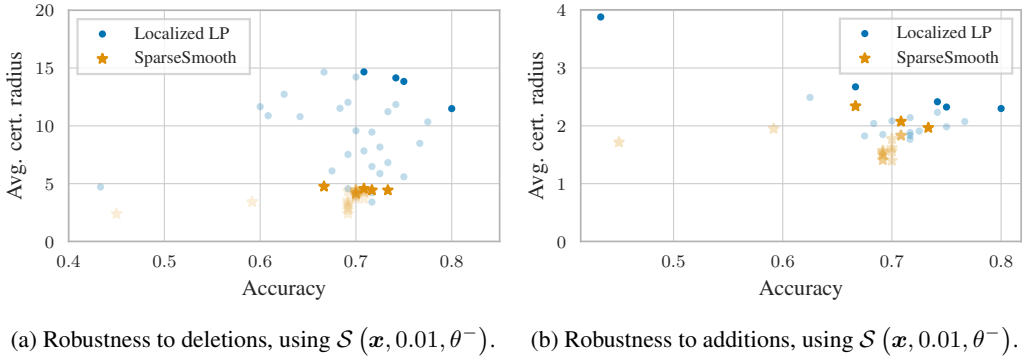

	\centering
	\begin{subfigure}[b]{0.49\columnwidth}
		\resizebox{\textwidth}{!}{\input{figures/experiments/graphs/citeseer_pm_appnp_deletion.pgf}}
		\caption{
			Robustness to deletions, using $\mathcal{S}\left(\vx, 0.01, \theta^{-}\right)$.
		}
		\label{fig:citeseer_pm_appnp_deletion}
	\end{subfigure}
	\begin{subfigure}[b]{0.49\columnwidth}
		\resizebox{\textwidth}{!}{\input{figures/experiments/graphs/citeseer_pm_appnp_addition.pgf}}
		\caption{
			Robustness to additions, using $\mathcal{S}\left(\vx, 0.01, \theta^{-}\right)$.
		}
		\label{fig:citeseer_pm_appnp_addition}
	\end{subfigure}
	\caption{Comparison of our LP-based collective certificate
 to \citet{Bojchevski2020}, using APPNP on Citeseer. We consider both adversarial deletions (\autoref{fig:citeseer_pm_appnp_deletion}) and additions (\autoref{fig:citeseer_pm_appnp_addition}) of attribute bits. Locally smoothed models offer a better accuracy-robustness tradeoff
 , especially for deletions. Transparent points signal that they are Pareto-dominated by points from the same method.}
	\label{fig:citeseer_pm_appnp}
	\vskip-0.2in
\end{figure*}

\subsection{Node Classification on Citeseer}\label{section:experiments_graphs}

\textbf{Dataset and model.}
Finally, we
consider models that are
designed with locality in mind: Graph neural networks.
We take APPNP~\citep{klicpera2019predict}, which aggregates per-node predictions from the entire graph based on personalized pagerank scores, and apply it  to the Citeseer~\citep{Sen2008} dataset.
To certify its robustness, we perform randomized smoothing with sparsity-aware noise
$\mathcal{S}\left(\vx, \theta^{+}, \theta^{-}\right)$, where $\theta^{+}$ and $\theta^{-}$ control the probability
of randomly adding or deleting node attributes, respectively (more details in~\autoref{section:appendix_sparsity_cert}).
As a baseline 
we apply the tight certificate SparseSmooth of~\citet{Bojchevski2020} to distributions
$\mathcal{S}\left(\vx, 0.01, \theta^{-}_\mathrm{iso}\right)$
with $\theta^{-}_\mathrm{iso} \in \{0.1, 0.15, \dots, 0.95\}$.
The
small addition probability $0.01$  is meant to preserve the sparsity of the graph's attribute matrix and was used in most experiments in~\citep{Bojchevski2020}.
For localized smoothing, we partition the graph into $5$ clusters and define a minimum deletion probability $\theta^{-}_\mathrm{min} \in \{0.1, 0.15, \dots, 0.95\}$.
We then sample each cluster's attributes from
$\mathcal{S}\left(\vx, 0.01, \theta'^{-}\right)$
with $\theta'^{-} \in \left[\theta^{-}_\mathrm{min},0.95\right]$ 
chosen based on cluster affinity.
To compute the base certificates, we use the variance-constrained certificate from~\autoref{section:appendix_sparsity_cert}.
In all cases, we take $5 \cdot 10^5$ samples (i.e.~$10^5$ per cluster for localized smoothing).
Further discussions, as well as experiments on different models and datasets can be found in~\autoref{section:extra_node_experiments}.

\textbf{Accuracy-robustness tradeoff.}
\autoref{fig:citeseer_pm_appnp} shows the
accuracy and ACR pairs achieved by the na\"ive isotropic smoothing certificate and the LP-based certificate for localized smoothing.
Despite having fewer samples per prediction, our method outperforms the baseline, offering higher accuracy certifying larger ACRs, especially for attribute deletions. Notably, in some cases, our approach even improves accuracy by over \SI{7}{p{.}p{.}} percentage points compared to isotropically smoothed models.
Similar to the observation made by \cite{Bojchevski2020} in their Section K, we also find that increasing the probability of attribute perturbations can improve accuracy to some extent. We posit that localized smoothing can leverage this phenomenon as a form of test-time regularization while preserving the crucial attributes of nearby nodes. In ~\autoref{section:naive_variance_constrained} we show that the stems from the smoothing scheme and is not solely due to using our novel variance-constrained certificate.

\section{Conclusion}
We proposed a novel approach to achieve provable collective robustness in multi-output classifiers that extends beyond strict locality, utilizing our introduced localized randomized smoothing scheme. 
Our approach involves smoothing different outputs with anisotropic smoothing distributions that match the model's soft locality. We demonstrated how per-output certificates obtained through localized smoothing can be combined into a strong collective robustness certificate using (mixed-integer) linear programming.
Our experiments indicate that localized smoothing can achieve superior accuracy-robustness tradeoffs compared to isotropic smoothing methods. However, not all models match our distance-based locality assumption, particularly for image segmentation tasks. Node classification tasks are more amenable to localized smoothing due to their inherent locality. Our results highlight the importance of locality in achieving collective robustness and emphasize the need for future research to develop effective local models for multi-output tasks.

\clearpage
\section{Reproducibility statement}
We prove all theoretic results that were not already derived in the main text in~\autoref{section:proof_lp} to~\autoref{section:monte_carlo}.
To ensure reproducibility of the experimental results we provide detailed descriptions of the evaluation process with the respective parameters in \autoref{section:detailed_exp_setup}. An implementation, including configuration files, will be made available at 
\href{https://www.cs.cit.tum.de/daml/localized-smoothing/}{https://www.cs.cit.tum.de/daml/localized-smoothing}.

\section{Ethics statement}
In this paper, we propose a method  to increase the robustness of machine learning models against adversarial perturbations and to certify their robustness. We see this as an important step towards general usage of models in practice, as many existing methods are brittle to crafted attacks. Through the proposed method, we hope to contribute to the safe usage of machine learning.
However, robust models also have to be seen with caution. As they are harder to fool, harmful purposes like mass surveillance are harder to avoid. We believe that it is still necessary to further research robustness of machine learning models as the positive effects can outweigh the negatives, but it is necessary to discuss the ethical implications of the usage in any specific application area.

\section{Acknowledgements}
This research is funded by the Bavarian Ministry of Economic Affairs, Regional Development and Energy with funds from the Hightech Agenda Bayern. Further, it is supported by the German Research Foundation, grant GU 1409/4-1.

\bibliography{references.bib}
\bibliographystyle{style/iclr2023_conference}

\clearpage

\appendix

\startcontents[section]
\printcontents[section]{l}{0}{\setcounter{tocdepth}{2}}

\clearpage


\section{Image Segmentation on CityScapes}\label{section:extra_experiments_cityscapes}

In the following, we apply our approach to DeepLabv3~\citep{Chen2017} models trained on the Cityscapes~\citep{Cordts2016} training set. We evaluate the certificates on $50$ images from the validation set.
For localized smoothing, we partition the image into a grid of shape $4 \times 6$.
To limit the number of LP variables despite the increased
resolution, we quantize the base certificate parameters $\eta^{(n)}$ into $2048$ bins (see~\autoref{section:quantizing_base_certs}).
Different from our experiments on Pascal-VOC and due to the increased computational cost of using higher-dimensional images, the locally smoothed models are not trained on the localized smoothing distribution with parameters $(\sigma_\mathrm{min}, \sigma_\mathrm{max})$.
Instead, we use model trained with isotropic Gaussian noise with standard deviation $\sigma_\mathrm{iso} = \sigma_\mathrm{min}$.

\cref{fig:cityscapes_more_samples} shows that, even when allowing $153600$ samples per output pixel for both localized smoothing and the baselines (i.e.\ localized smoothing gets to sample $24$ times as many images), most choices of $(\sigma_\mathrm{min}, \sigma_\mathrm{max})$ do not offer higher accuracy and robustness than   SegCertify\textsuperscript{*}, except those leading to a small mIOU below $0.21$.
\cref{fig:cityscapes_few_samples} shows that reducing the number of samples per output pixel for localized smoothing to $6400 = \frac{153600}{24}$ further weakens the certificate. There, localized smoothing only offers stronger certificates for models with an mIOU below $0.11$.

There are three possible explanations for why localized smoothing does not outperform SegCertify\textsuperscript{*}.
The first one is that we do not train on the same distribution that we use for certification, so our models are less accurate or less consistent in their predictions, which reduces mIOU or certified robustness.
The second one is that our simplisitic choice of localized smoothing based on grid cell distance (see~\autoref{sec-detailed-exp-setup-segmentation}) does not match the actual locality structure of DeepLabv3.
The last one is that DeepLabv3, which uses dilated convolutions to increase the receptive field size in each layer, is just inherently less local than the U-Net architecture used in our experiments on Pascal-VOC.
Nevertheless, it should be noted that we can always parameterize localized smoothing to obtain the same results as SegCertify\textsuperscript{*} (see~\cref{section:fischer_comparison}).

\begin{figure}[t!]
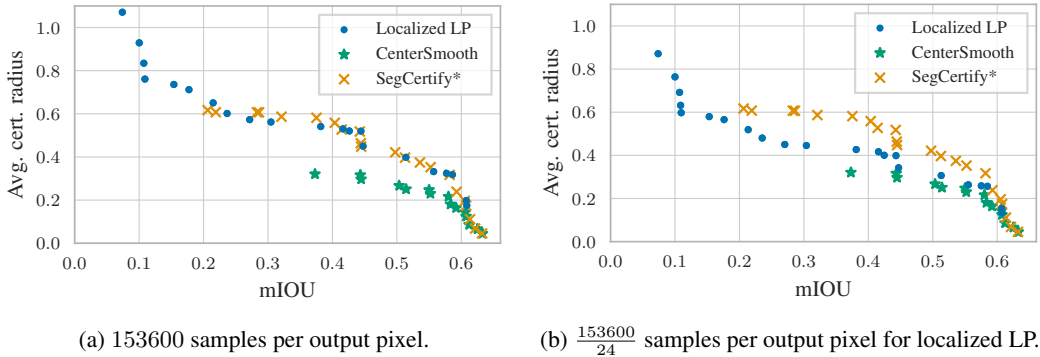

    \vskip 0.2in
    \centering
    \begin{subfigure}[b]{0.49\textwidth}
        \resizebox{\textwidth}{!}{\input{figures/experiments/cityscapes/iou_vs_avg_radius_holm.pgf}}
        \caption{$153600$ samples per output pixel.}
        \label{fig:cityscapes_more_samples}
    \end{subfigure}
    \hfill
    \begin{subfigure}[b]{0.49\textwidth}
        \resizebox{\textwidth}{!}{\input{figures/experiments/cityscapes/iou_vs_avg_radius_bonferroni.pgf}}
        \caption{$\frac{153600}{24}$ samples per output pixel for localized LP.}
        \label{fig:cityscapes_few_samples}
    \end{subfigure}
    \caption{Comparison of our LP-based collective certificate for localized randomized smoothing with a $3 \times 5$ grid to CenterSmooth and SegCertify\textsuperscript{*}, using DeepLabV3 on Cityscapes.
    Increasing the number of samples used for certifying each output from $6400 = \frac{153600}{24}$ to $153600$ (same as for the baselines) closes the gap between localized randomized smoothing and SegCertify.
    Still, localized smoothing only offers stronger certificates for models with $\mathrm{mIOU} \leq 0.21$ (compared to $\mathrm{mIOU} \leq 0.11$ when using fewer samples).}
\end{figure}

\vfill

\clearpage

\section{Additional Experiments on Node Classification}\label{section:extra_node_experiments}
In the following, we perform additional experiments on graph neural networks for node classification, including a different model and an additional dataset.
Unless otherwise stated, all details of the experimental setup are identical to~\autoref{section:experiments_graphs}. In particular, we use sparsity-aware smoothing distribution $\mathcal{S}\left(\vx, 0.01, \theta^{-}\right)$, where probability of deleting bits $\theta^{-}$ is either constant across the entire graph (for the isotropic randomized smoothing baseline) or adjusted per output and cluster based on cluster affinity (for localized randomized smoothing).

\begin{figure}[ht]
    \vskip 0.2in
    \centering
    \begin{subfigure}[b]{0.49\textwidth}
        \resizebox{\textwidth}{!}{\input{figures/experiments/graphs/citeseer_pm_appnp_deletion.pgf}}
        \caption{Using \citep{Bojchevski2020} for the na\"ive isotropic smoothing baseline.}
        \label{fig:experiments_bojchevski_naive}
    \end{subfigure}
    \hfill
    \begin{subfigure}[b]{0.49\textwidth}
        \resizebox{\textwidth}{!}{\input{figures/experiments/graphs/variance_cert_citeseer/variance_cert_pm_appnp_deletion.pgf}}
        \caption{Using variance-constrained certification  for the na\"ive isotropic smoothing baseline.}
        \label{fig:experiments_variance_naive}
    \end{subfigure}
    \caption{Analysis of our LP-based collective certificate using APPNP on Citeseer.
    We use the sparsity-aware smoothing with $\mathcal{S}\left(\vx,0.01,\theta^{-}\right)$ to certify robustness to deletions.
    In \autoref{fig:experiments_bojchevski_naive} we use the certificate of \citet{Bojchevski2020} for baseline (identical to \autoref{fig:citeseer_pm_appnp_deletion}).
    In \autoref{fig:experiments_variance_naive} we use variance-constrained certification (see~\autoref{theorem:variance_constrained_cert}) as baseline.
    In both cases, there are locally smoothed models with a higher accuracy than any of the isotropically smoothed models
    and significantly larger average certifiable radii.}
    \label{fig:experiments_bojchevski_vs_variance_baseline}
    \vskip -0.2in
\end{figure}

\subsection{Comparison to the Na\"ive Variance-constrained Isotropic Smoothing Certificate}\label{section:naive_variance_constrained}

In~\autoref{fig:citeseer_pm_appnp} of \autoref{section:experiments_graphs}, we observed that locally smoothed models surprisingly did not only achieve up to three times higher average certifiable radii, but simultaneously had higher accuracy than any of the isotropically smoothed models.
One potential explanation is that we used variance-constrained certification (see~\autoref{theorem:variance_constrained_cert}) (i.e.~smoothing the models' softmax scores instead of their predicted labels) for localized smoothing, but not for the isotropic smoothing baseline.
This might result in two substantially different models.
To investigate this, we repeat the experiment from~\autoref{fig:citeseer_pm_appnp_deletion}, using variance-constrained certification for both localized smoothing and the isotropic smoothing baseline.
\autoref{fig:experiments_bojchevski_vs_variance_baseline} shows that, no matter which smoothing paradigm we use for our isotropic smoothing baseline, there is a c.a.\ \SI{7}{p{.}p{.}} difference in accuracy between the most accurate isotropically smoothed model and the most accurate locally smoothed model.

Interestingly, even variance-constrained smoothing with isotropic noise (green crosses in~\autoref{fig:experiments_variance_naive}) is sufficient for outperforming the isotropic smoothing certificate of \citet{Bojchevski2020} (orange stars in~\autoref{fig:experiments_bojchevski_naive}).
This showcases that variance-constrained certification does not only present a very efficient, but also a very effective way of certifying robustness on discrete data (even when entirely ignoring the collective robustness aspect).

\subsection{Node Classification using Graph Convolutional Networks}

\begin{figure}[ht]
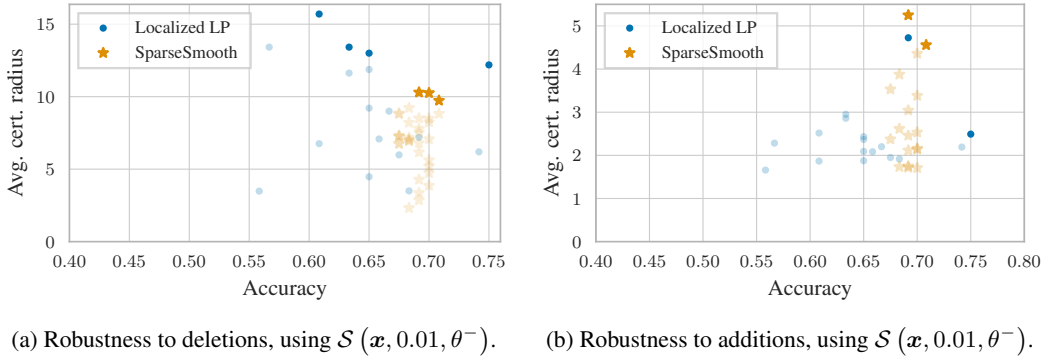

    \vskip 0.2in
    \centering
    \begin{subfigure}[b]{0.49\textwidth}
        \resizebox{\textwidth}{!}{\input{figures/experiments/graphs/citeseer_pm_gcn_deletion.pgf}}
        \caption{
        Robustness to deletions, using $\mathcal{S}\left(\vx, 0.01, \theta^{-}\right)$.
        }
        \label{fig:citeseer_pm_gcn_deletion}
    \end{subfigure}
    \hfill
    \begin{subfigure}[b]{0.49\textwidth}
        \resizebox{\textwidth}{!}{\input{figures/experiments/graphs/citeseer_pm_gcn_addition.pgf}}
        \caption{
        Robustness to additions, using $\mathcal{S}\left(\vx, 0.01, \theta^{-}\right)$.
        }
        \label{fig:citeseer_pm_gcn_addition}
    \end{subfigure}
    \caption{Comparison of our LP-based collective certificate for localized randomized smoothing to SparseSmooth, using a $6$-layer GCN on Citeseer. We consider both adversarial deletions (\autoref{fig:citeseer_pm_gcn_deletion}) and additions (\autoref{fig:citeseer_pm_gcn_addition}).
    Some locally smoothed models have a higher accuracy than any of the isotropically smoothed models. However, our certificate only dominates the best isotropically smoothed models when considering robustness to deletions, not when considering robustness to additions.
    This can either be attributed to a lower locality in deep GCNs or variance-constrained certification yielding weak base certificates for addition when $\theta^{+}$ is small.}
    \label{fig:citeseer_pm_gcn}
    \vskip -0.2in
\end{figure}

So far, we have only used APPNP models as our base classifier.
Now, we repeat our experiments using 6-layer Graph Convolutional Networks (GCN) \citep{Kipf2017}.
In each layer, GCNs first apply a linear layer to each node's latent vector and then average over each node's $1$-hop neighborhood.
Thus, a 6-layer GCN classifies each node using attributes from all nodes in its $6$-hop neighborhood, which covers most or all of the Citeseer graph.
Aside from  using GCN instead of APPNP as the base model, we leave the experimental setup from~\autoref{section:experiments_graphs} unchanged.
Note that GCNs are typically used with fewer layers. However, these  shallow models are strictly local and it has already been established that the certificate~\citet{Schuchardt2021} -- which is subsumed by our certificate (see~\autoref{section:subsumption_proof}) -- can provide very strong robustness guarantees for them. We therefore increase the number of layers to obtain a model that is not strictly local.

\autoref{fig:citeseer_pm_gcn} shows the results for both robustness to deletions and robustness to additions.
Similar to APPNP, some locally smoothed models have an up to \SI{4}{p{.}p{.}} higher accuracy than the most accurate isotropically smoothed model.
When considering robustness to deletions, the locally smoothed models Pareto-dominate all of the isotropically smoothed models, i.e.~offer better accuracy-robustness tradeoffs. Some can guarantee average certifiable radii that are at least $50\%$ larger than those of the baseline.
When considering robustness to additions however, some of the isotropically smoothed models have a higher certifiably robustness.

We see two potential causes for our method's lower certifiable robustness to additions:
The first potential cause is that the GCN may be less local than APPNP 
or that it has a different form of locality that does not match our clustering-based localized smoothing distributions.
This appears plausible, as GCN averages uniformly over each neighborhood, whereas APPNP aggregates predictions based on pagerank scores.  APPNP may thus primarily attend to specific, densely connected nodes, making it more local than GCN.
The second potential cause is that the variance-constrained certificate we use as our base certificate may be less effective when certifying robustness to adversarial additions by using a very small addition probablity like $\theta^{+} = 0.01$.
Afterall, we have also seen in our experiments with APPNP in~\autoref{section:experiments_graphs} that the gap in average certifiable radii between localized and isotropic smoothing was significantly smaller when considering additions.
We investigate this second potential cause in more detail in~\autoref{section:lower_cert_additions}.

\subsection{Node classification on Cora-ML}

Next, we repeat our experiments with APPNP on the Cora-ML \citep{McCallum2000,Bojchevski2018} node classification dataset, keeping all other parameters  fixed.
The results are shown in~\autoref{fig:cora_pm_gcn}.
Unlike on Citeseer, the locally smoothed models have a slightly reduced accuracy compared to the isotropically smoothed models.
This can either be attributed to one smoothing approach having a more desirable regularizing effect on the neural network, or the fact that we smooth softmax scores instead of predicted labels when constructing the locally smoothed models.
Nevertheless, when considering adversarial deletions, localized smoothing makes it possible to achieve average certifiable radii that are at least $50\%$ larger than any of the isotropically smoothed models' -- at the cost of slightly reduced accuracy $8.6\%$. Or, for another point of the pareto front, we increase the certificate by $20\%$ while reducing the accuracy by $2.8$ percentage points.
As before, the certificates for attribute additions are significantly weaker.

\begin{figure}
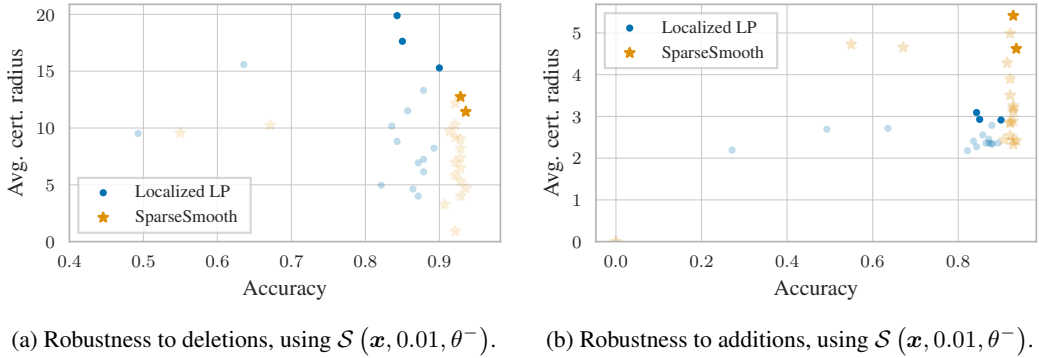

    \vskip 0.2in
    \centering
    \begin{subfigure}[b]{0.49\textwidth}
        \resizebox{\textwidth}{!}{\input{figures/experiments/graphs/cora_pm_appnp_deletion.pgf}}
        \caption{
        Robustness to deletions, using $\mathcal{S}\left(\vx, 0.01, \theta^{-}\right)$.
        }
        \label{fig:cora_pm_gcn_deletion}
    \end{subfigure}
    \hfill
    \begin{subfigure}[b]{0.49\textwidth}
        \resizebox{\textwidth}{!}{\input{figures/experiments/graphs/cora_pm_appnp_addition.pgf}}
        \caption{
        Robustness to additions, using $\mathcal{S}\left(\vx, 0.01, \theta^{-}\right)$.
        }
        \label{fig:cora_pm_gcn_addition}
    \end{subfigure}
    \caption{Comparison of our LP-based collective certificate for localized randomized smoothing to the SparseSmooth \citep{Bojchevski2020}, using APPNP on Cora-ML. We consider both adversarial deletions (\autoref{fig:cora_pm_gcn_deletion}) and additions (\autoref{fig:cora_pm_gcn_addition}).
    Some locally smoothed models that have a higher accuracy than any of the isotropically smoothed models. However, our method is only able to dominate all isotropically smoothed models when considering robustness to deletions, not when considering robustness to additions.
    This can either be attributed to a lower locality in deep GCNs or variance-constrained certification yielding weak base certificates for addition when $\theta^{+}$ is small.}
    \label{fig:cora_pm_gcn}
    \vskip -0.2in
\end{figure}

\subsection{Lower Certifiable Robustness to Additions}\label{section:lower_cert_additions}

\begin{figure}[ht]
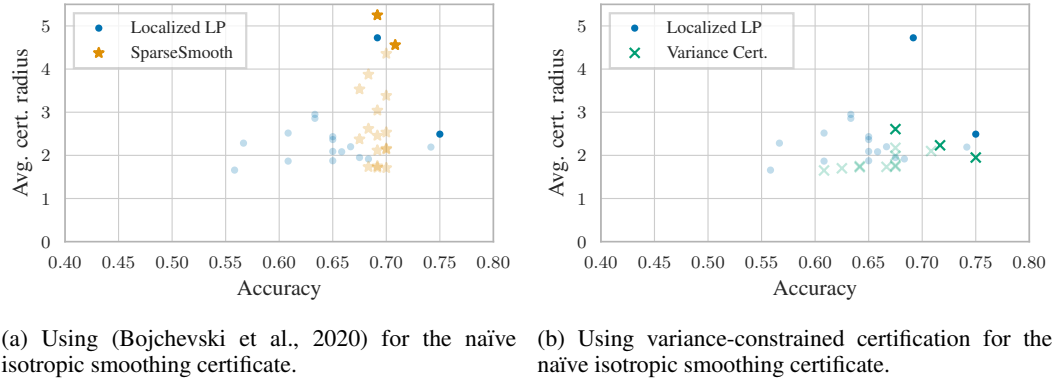

    \vskip 0.2in
    \centering
    \begin{subfigure}[b]{0.49\textwidth}
        \resizebox{\textwidth}{!}{\input{figures/experiments/graphs/citeseer_pm_gcn_addition.pgf}}
        \caption{Using \citep{Bojchevski2020} for the na\"ive isotropic smoothing certificate.}
        \label{fig:citeseer_pm_gcn_add}
    \end{subfigure}
    \hfill
    \begin{subfigure}[b]{0.49\textwidth}
        \resizebox{\textwidth}{!}{\input{figures/experiments/graphs/variance_cert_citeseer/variance_cert_pm_gcn_addition.pgf}}
        \caption{Using variance-constrained certification  for the na\"ive isotropic smoothing certificate.}
        \label{fig:variance_cert_gcn_add}
    \end{subfigure}
    \caption{Comparison of our LP-based collective certificate for localized randomized smoothing to SparseSmooth and to a na\"ive combination of its base certificates, using GCN and adversarial additions on Citeseer.
    \autoref{fig:citeseer_pm_gcn_add} shows that the LP-based certificate is outperformed by na\"ive isotropic smoothing.
    \autoref{fig:variance_cert_gcn_add} shows that this is largely due to the variance-constrained base certificates (green crosses) for adversarial additions being much weaker than the isotropic smoothing certificate of~\citep{Bojchevski2020} in \autoref{fig:citeseer_pm_gcn_add}.
    }
    \label{fig:experiments_bojchevski_vs_variance_lower_robustness}
    \vskip -0.2in
\end{figure}

While our certificates for adversarial deletions have compared favorably to the isotropic smoothing baseline in all previous experiments,
our certificates for adversarial additions were comparatively weaker on Cora-ML and when using GCNs as base models.
In the following, we investigate to what extend this can be attributed to our use of variance-constrained certification for our base certificates.

\autoref{fig:citeseer_pm_gcn_add} shows both our linear programming collective certificate and the na\"ive isotropic smoothing certificate based on~\citep{Bojchevski2020} for GCNs on Citeseer under adversarial additions.
In~\autoref{fig:variance_cert_gcn_add}, we plot not only the LP-based certificates, but also our variance-constrained base certificates (drawn as green crosses).
Comparing both figures shows that our base certificate's average certifiable radii are at least $50\%$ smaller than the largest ACR achieved by \citep{Bojchevski2020} in \autoref{fig:citeseer_pm_gcn_add}.
While our linear program significantly improves upon them, it is not sufficient to overcome this significant gap.
This result is in stark contrast to our results for attribute deletions \autoref{section:naive_variance_constrained}, where the variance-constrained base certificates alone were enough to significantly outperform the certificate of \citep{Bojchevski2020}.

Now that we have established that the variance-constrained base certificates appear significantly weaker for additions, we can analyze why.
For this, recall that our base certificates are parameterized by a weight vector $\vw$ (see~\cref{definition:interface}), with smaller values corresponding to higher robustness -- or two weight vectors $\vw^{+}$, $\vw^{-}$ quantifying robustness to adversarial additions and deletions, respectively (see~\autoref{section:appendix_sparsity_cert}).
Using our results from~\autoref{section:appendix_sparsity_cert}, we can draw the weights $\evw^{+}$ resulting from smoothing distribution $\mathcal{S}\left(\vx, 0.01, \theta^{-}\right)$ as a function of $\theta^{-}$.
\autoref{fig:sparsity_weights_pm} shows that $\theta^{-}$ has to be brought very close to $1$ in order to guarantee high robustness to deletions, effectively deleting almost all attributes in the graph.
Alternatively, one can also increase the addition probability $\theta^{+}$ to perhaps $10\%$ or $20\%$. But this would utterly destroy the sparsity of the graph's attribute matrix.
We can conclude that, while variance-constrained certification can in principle provide strong certificates for attribute deletions, it might be a worse choice than the method of \citet{Bojchevski2020} for very sparse datasets that force the use of very low addition probabilities $\theta^{+}$.

\begin{figure}[ht]
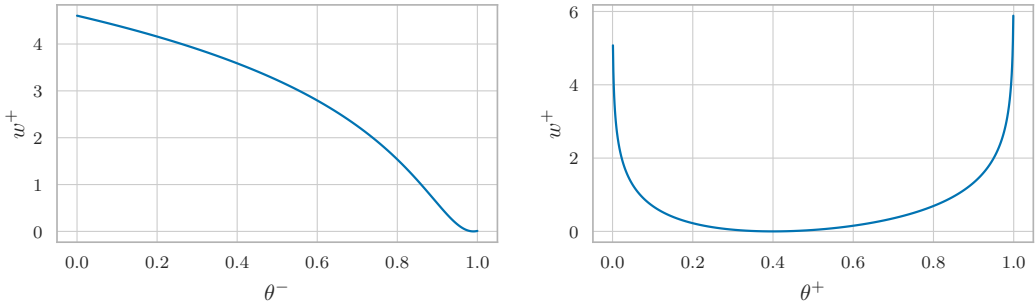

    \vskip 0.2in
    \centering
    \begin{subfigure}[b]{0.49\textwidth}
        \resizebox{\textwidth}{!}{\input{figures/experiments/graphs/sparsity_weights/addition_weights_pm.pgf}}
        \caption{Certificate weight $\vw^{+}$ for $\mathcal{S}\left(\vx,0.01,\theta^{-}\right)$ for varying $\theta^{-}$.}
        \label{fig:sparsity_weights_pm}
    \end{subfigure}
    \hfill
    \begin{subfigure}[b]{0.49\textwidth}
        \resizebox{\textwidth}{!}{\input{figures/experiments/graphs/sparsity_weights/addition_weights_pp.pgf}}
        \caption{Certificate weight $\vw^{+}$ for $\mathcal{S}\left(\vx,\theta^{+},0.6\right)$ for varying $\theta^{+}$.}
        \label{fig:sparsity_weights_pp}
    \end{subfigure}
    \caption{
    Base certificate weight $\vw^{+}$ of the variance-constrained sparsity-aware smoothing certificate for varying distribution parameters. Certifying high robustness  to adversarial additions (i.e.~obtaining small weights) requires either setting a high probability for random additions or an even higher probability for random deletions.}
    \label{fig:sparsity_weights}
    \vskip -0.2in
\end{figure}

\subsection{Benefit of Linear Programming Certificates}
As we did for our experiments on image segmentation (see~\autoref{fig:pascal_collective_lp}), we can inspect the certified accuracy curves of specific smoothed models in more detail to gain a better understanding of how the collective linear programming certificate enables larger average certifiable radii.
We use the same experimental setup as in~\autoref{section:experiments_graphs}, i.e.~APPNP on Citeseer, and certify robustness to deletions.
We compare the certifiably most robust isotropically smoothed model ($\theta^{-}_\mathrm{iso} = 0.8$, $\mathrm{ACR}=5.67$ to the locally smoothed model with $\theta^{-}_\mathrm{min} = 0.75, \theta^{+}_\mathrm{max} = 0.95$.
For the locally smoothed models, we compute both LP-based collective certificate, as well as the na\"ive collective certificate.

\autoref{fig:citeseer_001_08_comp} shows that even na\"ively combining the localized smoothing base certificates obtained via variance-constrained certification (dashed blue line) is sufficient for outperforming the na\"ive isotropic smoothing certificate.
This speaks to its effectiveness as a certificate against adversarial deletions.
Combining the base certificates via linear programming (solid blue line) significantly enlarges this gap, leading to even larger maximum and average certifiable radii.

\begin{figure}[t!]
    \vspace{0cm}
    \centering
        \input{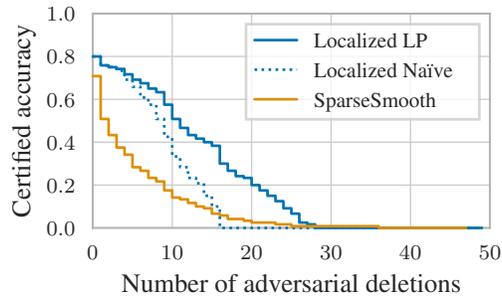}
        \caption{
        Certified accuracy of APPNP on Citeseer. We compare the na\"ive isotropic smoothing certificate of the most robust baseline model ($\theta^{-}_\mathrm{iso} = 0.8$) to localized smoothing   
    ($\theta^{-}_\mathrm{min} = 0.75$).
    Even na\"ively combining the variance-constrained base certificates (dashed blue line) is sufficient for outperforming the SparseSmooth certificate for $15$ deletions or less.
    Combining the base certificates via our LP (solid blue line) further extends the certifiable radius and significantly increases the certified accuracy for perturbations with $5$ or more deletions.
        }
        \label{fig:citeseer_001_08_comp}
    \vspace{17cm}
\end{figure}

\null
\vfill

\clearpage

\section{Detailed Experimental Setup}\label{section:detailed_exp_setup}

In the following, we first explain the metrics we use for measuring the strength of certificates, and how they can be applied to the different types of randomized smoothing certificates used in our experiments.
We then discuss the specific parameters and hyperparameters for our semantic segmentation and node classification experiments.
We conclude by specifying the used hardware and comparing the computational cost of Monte Carlo sampling to that of solving the collective linear program.

\subsection{Certificate Strength Metrics}\label{section:metrics}
We use two metrics for measuring certificate strength: For specific adversarial budgets $\epsilon$, we compute the certified accuracy $\xi(\epsilon)$ (i.e.~the percentage of correct and certifiably robust predictions).
As an aggregate metric, we compute the average certifiable radius, i.e.~the lower Riemann integral of $\xi(\epsilon)$ evaluated at $\epsilon_1,\dots,\epsilon_N$ with $\epsilon_1=0$ and $\xi(\epsilon_N) = 0$.
For our experiments on image segmentation, we use $81$ equidistant points in $[0, 4]$. For our experiments on node classification, where we certify robustness to a discrete number of perturbations, we use $\epsilon_n = n$, i.e.~natural numbers.
In all experiments, we perform Monte Carlo randomized smoothing (see~\autoref{section:monte_carlo}). Therefore, we may have to abstain from making predictions. Abstentions are counted as non-robust and incorrect.
In the case of center smoothing, either all or no predictions abstain (this is inherent to the method. In our experiments, center smoothing never abstained).

\subsubsection{Computing Certified Accuracy}
The three different types of collective certificate considered in our experiments each require a different procedure for computing the certified accuracy.
In the following, let $\sZ = \left\{d \in \{1,\dots,D_\mathrm{out}\} \mid f_n(\vx) = \hat{\evy}_n\right\}$ be the indices of correct predictions, given an input $\vx$.

\textbf{Na\"ive collective certificate}. The na\"ive collective certificate certifies each prediction independently. Let $\sH^{(n)}$ be the set of perturbed inputs $y_n$ is certifiably robust to (see~\cref{definition:base_certs}). Let $\sB_\vx$ be the collective perturbation model.
Then $\sL = \left\{d \in \{1,\dots,D_\mathrm{out}\} \mid \sB_x \subseteq \sH^{(n)} \right\}$ is the set of all certifiably robust predictions.
The  certified accuracy can be computed as 
$\frac{|\sL \cap \sZ|}{D_\mathrm{out}}$.

\textbf{Center smoothing} Center smoothing used for collective robustness certification does not determine which predictions are robust, but only the number of robust predictions.
We therefore have to make the worst-case assumption that the correct predictions are the first to be changed by the adversary.
Let $l$ be the number of certifiably robust predictions. 
The certified accuracy can then be computed as $\frac{\max\left(0, |\sZ| - \left(D_\mathrm{out} - l\right)\right)}{D_\mathrm{out}}$.

\textbf{Collective certificate}. Let $l(\sT)$ be the optimal value of our collective certificate for the set of targeted nodes $\sT$. Then the certified accuracy can be computed via $\frac{l(\sT)}{D_\mathrm{out}}$ with $\sT =  \sZ$.

\subsection{Semantic Segmentation}\label{sec-detailed-exp-setup-segmentation}
Here, we provide  all parameters of our experiments on image segmentation.

\textbf{Models.} As base models for the semantic segmentation tasks, we use U-Net \citep{Ronneberger2015} and DeepLabv3 \citep{Chen2017} segmentation heads with a ResNet-18 \citep{He2016} backbone, as implemented by the Pytorch Segmentation Models library (version $0.13$) \citep{Yakubovskiy2019}.
We use the library's default parameters. In particular, the inputs to the U-Net segmentation head are the features of the ResNet model after the first convolutional layer and after each ResNet block (i.e. after every fourth of the subsequent layers).
The U-Net segmentation head uses (starting with the original resolution) $16$, $32$, $64$, $128$ and $256$ convolutional filters for processing the features at the different scales.
For the DeepLabv3 segmentation head, we use all default parameters from \citet{Chen2017} and an output stride of $16$.
To avoid dimension mismatches in the segmentation head, all input images are zero-padded to a height and width that is the next multiple of $32$.

\textbf{Data and preprocessing.} We evaluate our certificates on the Pascal-VOC 2012 and Cityscapes segmentation validation set. We do not use the test set, because evaluating metrics like the certified accuracy requires access to the ground-truth labels.
For training the U-Net models on Pascal, we use the $10582$ Pascal segmentation masks extracted from the SBD dataset \citep{Hariharan2011} (referred to as "Pascal trainaug" or "Pascal augmented training set" in other papers).
SBD uses a different data split than the official Pascal-VOC 2012 segmentation dataset. We avoid data leakage by removing all training images that appear in the validation set.
For training the DeepLabv3 model on Cityscapes, we use the default training set.
We downscale both the training and the validation images and ground-truth masks to $50\%$ of their original height and width, so that we can use larger batch sizes and thus use our compute time to more thoroughly evaluate a larger range of different smoothing distributions.
The segmentation masks are downscaled using nearest-neighbor interpolation, the images are downscaled using the $\mathrm{INTER\_AREA}$ operation implemented in OpenCV \citep{Bradski2000}.

\textbf{Training and data augmentation.}
We initialize our model weights using the weights provided by the Pytorch Segmentation Models library, which were obtained by pre-training on ImageNet.
We train our models for $512$ epochs, using Dice loss and Adam($lr=0.001,\beta_1=0.9,\beta_2=0.999,\epsilon=10^{-8}, \mathrm{weight\_decay}=0$).
We use a batch size of $128$ for Pascal-VOC and a batch size of $32$ for Cityscapes.
Every $8$ epochs, we compute the mean IOU on the validation set. After training, we use the model that achieved the highest validation mean IOU.
We apply the following train-time augmentations:
With $50\%$ probability, each image is randomly scaled by a factor from $[1,2.0]$ using the ShiftScaleRotate augmentation implemented by the Albumentations library (version $0.5.2$) \citep{Buslaev2020}. The images are than cropped to a fixed size of $160 \times 256$ (for Pascal-VOC) or $384 \times 384$ (for Cityscapes). Where necessary, the images are padded with zeros. Padded parts of the segmentation mask are ignored by the loss function.
After these operations, each input is randomly perturbed using Gaussian noise.
For isotropic smoothing, we use a fixed standard deviation $\sigma_\mathrm{iso} \in \{0,0.01,\dots, 0.5\}$, i.e.~we train $51$ different models on different isotropic smoothing distributions. 
For localized smoothing with grid shape $H \times W$ and parameters $(\sigma_\mathrm{min}, \sigma_\mathrm{max})$ we perform localized smoothing with a single sample per image. Since this generates $H \cdot W$ as many perturbed images, we perform gradient accumulation, processing $\frac{1}{H \cdot W}$ of each batch at a time.
All samples are clipped to $[0,1]$ to retain valid RGB-values.

\textbf{Certification.}
For Pascal-VOC, we evaluate all certificates on the first $100$ images from the validation set that -- after downscaling -- have a resolution of $166 \times 250$.
For Cityscapes, we use every tenth image from the validation set.
For all certificates, we use Monte Carlo randomized smoothing (see discussion in~\autoref{section:monte_carlo}).
We use the significance parameter $\alpha$ to $0.01$, i.e.~all certificates hold with probability $0.99$.
For the center smoothing baseline, we use the default parameters suggested by the authors ($\Delta=0.05$, $\beta=2$, $\alpha_1=\alpha_2$).
For the na\"ive isotropic randomized smoothing baseline and for localized smoothing, we use Holm correction to account for the multiple comparisons problem, which yields strictly better results than Bonferroni correction (see~\autoref{section:multiple_comparisons}).
For our localized smoothing distribution, we partition the input image into a regular grid of size $H \times W$ (specified in the different paragraphs of~\autoref{section:experiments_pascal}) and define minimum standard deviation $\sigma_\mathrm{min}$ and maximum standard deviation $\sigma_\mathrm{max}$.
Let $\sJ^{(k,l)}$ be the set of all pixel coordinates in grid cell $(k,l)$.
To smooth outputs in grid cell $(i,j)$, we use a smoothing distribution
$\mathcal{N}\left(\mathbf{0}, \mathrm{diag}(\vsigma)\right)$ with $ \forall k \in \{1,\dots,H\}, l \in \{1,\dots,W\}, d \in \sJ^{(k,l)}$,
\begin{equation}
   \evsigma_d = \sigma_\mathrm{min} + \left(\sigma_\mathrm{max} -\sigma_\mathrm{min} \right) 
   \cdot \frac{\max \left(|i-k|, |l-j|\right)}{W},
\end{equation}
i.e. we linearly interpolate between $\sigma_\mathrm{min}$ and $\sigma_\mathrm{max}$ based on the $l_\infty$ distance of grid cells $(i,j)$ and $(k,l)$.
All results are reported for the relaxed linear programming formulation of our collective certificate (see~\autoref{section:linear_relaxation}). 
The collective linear program is solved using MOSEK (version 9.2.46) \citep{mosek} through the CVXPY interface (version 1.1.13) 

\subsection{Node Classification} \label{appendix-node-classification-experiments}
Here, we provide  all parameters of our experiments on node classification.

\textbf{Model} We test two different models: $2$-layer APPNP \citep{klicpera2019predict} and $6$-layer GCN \citep{Kipf2017}. For both models we use a hidden size of $64$ and dropout with a probability of $0.5$. For the propagation step of APPNP we use $10$ for the number of iterations and $0.15$ as the teleport probability.

\textbf{Data and preprocessing.} We evaluate our approach on the Cora-ML and Citeseer node classification datasets. We perform standard preprocessing, i.e., remove self-loops, make the graph undirected and select the largest connected component. 
We use the same data split as in \citep{Schuchardt2021}, i.e. $20$ nodes per class for the train and validation set.

\textbf{Training and data augmentation} All models are trained with a learning rate of $0.001$ and weight decay of $0.001$. The models we use for sparse smoothing are trained with the noise distribution that is also reported for certification. The localized smoothing models are trained on the their minimal noise level, i.e., not with localized noise but with only $\theta^+_\mathrm{min}$ and $\theta^-_\mathrm{min}$. 

\textbf{Certification} 
We evaluate our certificates on the validation nodes.
For all certificates, we use Monte Carlo randomized smoothing (see discussion in~\autoref{section:monte_carlo}).
We use $1000$ samples for making smoothed predictions and $5 \cdot 10^5$ samples for certification.
We use the significance parameter $\alpha$ to $0.01$, i.e.~all certificates hold with probability $0.99$.
For the na\"ive isotropic randomized smoothing baseline, we use Holm correction to account for the multiple comparisons problem, which yields strictly better results than Bonferroni correction (see~\autoref{section:multiple_comparisons}).
For our localized smoothing certificates, we use Bonferroni correction.
To parameterize the localized smoothing distribution, we first perform Metis clustering \citep{MetisClustering} to partition the graph into $5$ clusters. We create an affinity ranking by counting the number of edges which are connecting cluster $i$ and $j$. Specifically, let $\mathcal{C}$ be the set of clusters given by the Metis clustering. Then we count the number of edges between all cluster pairs and denote it by $N_{i,j},\,\, i,j \in \mathcal{C}$. If the number of edges of the pair $(i,j)$ is higher than the number for all other pairs $(k, j) \,\, \forall j\in \mathcal{C}$, i.e. $N_{i,j} > N_{k, j} \,\, \forall k \in \mathcal{C}$, we can say that, due to the homophily assumption, cluster $i$ is the most important one for cluster $j$. We create this ranking for all pairs and use it to select the noise parameter $\theta'^{-}$ for smoothing the attributes of cluster $j$ while classifying a node of cluster $i$ out of the discrete steps of the linear interpolation between $\theta_\mathrm{min}$ and $\theta_\mathrm{max}$ based on its previously defined ranking between the clusters. An example would be, given 11 clusters, $\theta_\mathrm{min} = 0.0$, and $\theta_\mathrm{max} = 1.0$. If cluster $j$ second most important cluster to $i$, then we would take  the second value out of $\{0.0, 0.1, \dots, 1.0\}$.
All results are reported for the relaxed linear programming formulation of our collective certificate (see~\autoref{section:linear_relaxation}). 
For each cluster, we use $\frac{1}{5}$ of the samples, which corresponds to $200$ samples for prediction and $10^5$ samples for certification.
The collective linear program is solved using MOSEK (version 9.2.46) \citep{mosek} through the CVXPY interface (version 1.1.13) \citep{cvxpy}.

\subsection{Hardware and runtime}
The experiments on Pascal-VOC with strictly local models (\autoref{fig:pascal_iou_vs_avg_radius_masked}) were performed using a Xeon E5-2630 v4 CPU @ 2.20GHz, an NVIDA GTX 1080TI GPU and \SI{128}{GB} of RAM.
All other experiments were performed using an AMD EPYC 7543 CPU @ 2.80GHz, an NVIDA A100 GPU and \SI{128}{GB} of RAM.

In all cases, the time needed for obtaining the Monte Carlo samples required by both localized and isotropic smoothing was much larger than the cost of solving the collective linear program.

\begin{itemize}
    \item For the strictly local model in~\cref{fig:pascal_iou_vs_avg_radius_masked}, taking $153600$ samples took \SI{294}{\second} on average. Averaged over all images and adversarial budgets, solving each LP only took \SI{0.91}{\second}.
    \item For the standard U-Net model in~\cref{fig:not_masked}, taking $153600$ samples took \SI{70.3}{\second} on average. Each LP took \SI{1.8}{\second} on average.
    \item For the DeepLabv3 model in~\cref{fig:cityscapes_few_samples}, taking $153600$ samples took \SI{1204}{\second} on average. Each LP took \SI{2.78}{\second} on average.
    \item For the APPNP model in~\cref{fig:citeseer_pm_appnp}, taking $5\cdot 10^6$ samples took \SI{1034}{\second} on average. Each LP took \SI{10.9}{\second} on average.
\end{itemize}
For graphs, the reported time for solving a single instance of the collective linear program is much higher than for image segmentation, even though the graph datasets require fewer variables. That is because we used a different, not as well vectorized formulation of the linear program in CVXPY.

In all cases, the time for calculating the isotropic smoothing certificates and base certificates from the Monte Carlo samples was too small to be measured accurately, since they can be implemented in a few simple vector operations.

\clearpage

\section{Proof of Theorem 4.2}\label{section:proof_lp}
In the following, we prove \autoref{theorem:collective_lp}, i.e.~we derive the mixed-integer linear program that underlies our collective certificate and prove that it provides a valid bound on the number of simultaneously robust predictions.
The derivation bears some semblance to that of \citep{Schuchardt2021}, in that both use standard techniques to model indicator functions using binary variables and that both convert optimization in input space to optimization in adversarial budget space.
Nevertheless, both methods differ in how they encode and evaluate base certificates, ultimately leading to significantly different results (our method encodes each base certificate using only a single linear constraint and does not perform any  masking operations).

\begin{customthm}{4.2}
Given locally smoothed model $f$, input $\vx \in \sX^{(D_\mathrm{in})}$, smoothed prediction $\vy = f(\vx)$ and base certificates $\sH^{(1)},\dots,\sH^{D_\mathrm{out}}$ complying with interface \autoref{eq:base_cert_interface}, the number of simultaneously robust predictions
	$\min_{\vx' \in \sB_\vx} \smashoperator{\sum_{n \in \sT}} \mathrm{I}\left[f_n(\vx') = \evy_n \right]$ is lower-bounded by
	\begin{align}
		& \min_{\vb \in \sR_+^{D_\mathrm{in}}, \vt \in \{0,1\}^{D_\mathrm{out}}} \sum_{n \in \sT} \evt_n \quad \\
		\text{s.t.} \quad
		& \forall n: 
		\vb^T \vw^{(n)} \geq (1 - \evt_n) \eta^{(n)},  \enskip \mathrm{sum}\{\vb\} \leq \epsilon^p.
	\end{align}
\end{customthm}
\begin{proof}
We begin by inserting the definition of our perturbation model $\sB_\vx$ and the base certificates $\sH^{(n)}$ into \hyperref[eq:recipe]{Eq. 1.1}:

\begin{align}
\min_{\vx' \in \sB_\vx} \sum_{n \in \sT} \mathrm{I}\left[f_n(\vx') = \evy_n \right] 
&\geq
\min_{\vx' \in \sB_\vx} \sum_{n \in \sT} \mathrm{I}\left[\vx' \in \sH^{(n)} \right] \\
\label{eq:collective_first_bound}
&= \min_{\vx' \in \sX^{D_\mathrm{in}}} \sum_{n \in \sT} \mathrm{I}\left[
        \sum_{d=1}^{D_\mathrm{in}}
        \evw_{d}^{(n)} \cdot |\evx_d' -  \evx_d |^p < \eta^{(n)}
    \right]
    \ 
    \text{s.t.}
    \ 
    \sum_{d=1}^{D_\mathrm{in}} |\evx_d' -  \evx_d |^p \leq \epsilon^p.
\end{align}

Evidently, input $\vx'$ only affects the elementwise distances $|\evx_d' -  \evx_d |^p$. Rather than optimizing $\vx'$, we can directly optimize these distances, i.e.~determine how much adversarial budget is allocated to each input dimension.
For this, we define a vector of variables $\vb \in \sR_+^{D_\mathrm{in}}$ (or $\vb \in \{0,1\}^{D_\mathrm{in}}$ for binary data). Replacing sums with inner products, we can restate
\autoref{eq:collective_first_bound} as 
\begin{equation}\label{eq:collective_second_bound}
    \min_{\vb \in \sR_+^{D_\mathrm{in}}} \sum_{n \in \sT} \mathrm{I}\left[
        \vb^T \vw^{(n)} < \eta^{(n)}
    \right]
    \quad
    \text{s.t.}
    \quad
    \mathrm{sum}\{\vb\} \leq \epsilon^p.
\end{equation}
In a final step, we replace the indicator functions in \autoref{eq:collective_second_bound} with a vector of boolean variables $\vt \in \{0,1\}^{D_\mathrm{out}}$.
\begin{align}
    & \min_{\vb \in \sR_+^{D_\mathrm{in}}, \vt \in \{0,1\}^{D_\mathrm{out}}} \sum_{n \in \sT} \evt_n \quad
    \label{eq:collective_lp_objective}
    \\
     \text{s.t.} \quad
        & \forall n: 
        \vb^T \vw^{(n)} \geq (1 - \evt_n) \eta^{(n)}, \label{eq:indicator_constraint_appendix} \enskip \mathrm{sum}\{\vb\} \leq \epsilon^p.
\end{align}
The first constraint in \autoref{eq:indicator_constraint} ensures that
$
    t_n = 0 \iff
    \mathrm{I}\left[
        \vb^T \vw^{(n)} \geq \eta^{(n)}
    \right]
$.
Therefore, the optimization problem in \autoref{eq:collective_lp_objective} and \autoref{eq:indicator_constraint} is equivalent to \autoref{eq:collective_second_bound}, which by transitivity is a lower bound on 
$\min_{\vx' \in \sB_\vx} \smashoperator{\sum_{n \in \sT}} \mathrm{I}\left[f_n(\vx') = \evy_n \right]$.
\end{proof}

\clearpage

\section{Improving Efficiency}\label{section:appendix_efficiency}
In this section, we discuss different modifications to our collective certificate that improve its sample efficiency and allow us fine-grained control over the size of the collective linear program. We further discuss a linear relaxation of our collective linear program.
All of the modifications preserve the soundness of our collective certificate, i.e.~we still obtain a provable bound on the number of predictions that can be simultaneously attacked by an adversary.
To avoid constant case distinctions, we first present all results for real-valued data, i.e.~$\sX = \sR$, 
before mentioning any additional precautions that may be needed when working with binary data.

\subsection{Sharing Smoothing Distributions Among Outputs}\label{section:sharing_noise}
In principle, our proposed certificate allows a different smoothing distribution $\Psi_\vx^{(n)}$ to be used per output $g_n$ of our base model.
In practice, where we have to estimate properties of the smoothed classifier using Monte Carlo methods, this is problematic:
Samples cannot be re-used, each of the many outputs requires its own round of sampling.
We can increase the efficiency of our localized smoothing approach by partitioning our $D_\mathrm{out}$ outputs into $N_\mathrm{out}$ subsets that share the same smoothing distributions. 
When making smoothed predictions or computing base certificates, we can then reuse the same samples for all outputs within each subsets.

More formally, we partition our $D_\mathrm{out}$ output dimensions into sets $\sK^{(1)}, \dots, \sK^{(N_\mathrm{out)}}$ with
\begin{equation}
    \dot{\bigcup}_{i=1}^{N_\mathrm{out}} \sK^{(i)} = \{1, \dots, D_\mathrm{out}\}.
\end{equation}
We then associate each set $\sK^{(i)}$ with a smoothing distribution $\Psi_\vx^{(i)}$.
For each base model output $g_n$ with $n \in \sK^{(i)}$, we then use smoothing distribution $\Psi_\vx^{(i)}$ to construct the smoothed output $f_n$, e.g. $f_n(\vx) = \mathrm{argmax}_{y \in \sY} \Pr_{\vz \sim \Psi_\vx^{(i)}}\left[f(\vx + \vz) = y\right]$ (note that for our variance-constrained certificate we smooth the softmax scores instead, see~\autoref{section:base_certificates}).

\subsection{Quantizing Certificate Parameters}\label{section:quantizing_base_certs}
Recall that our base certificates from~\autoref{section:base_certificates} are defined by a linear inequality: 
A prediction $y_n = f_n(\vx)$ is robust to a perturbed input $\vx' \in \sX^{D_\mathrm{in}}$ if
$\sum_{d=1}^{D} \evw_d^{(n)} \cdot \left| \evx'_d - \evx_d \right|^p < \eta^{(n)}$, for some $p \geq 0$.
The weight vectors $\vw^{(n)} \in \sR^{D_\mathrm{in}}$ only depend on the smoothing distributions.
A side of effect of sharing the same distribution $\Psi_\vx^{(i)}$ among all outputs from a set $\sK^{(i)}$, as discussed in the previous section, is that the outputs also share the same weight vector
$\vw^{(i)} \in \sR^{D_\mathrm{in}}$
with $\forall n \in \sK^{(i)}: \vw^{(i)} = \vw^{(n)}$.
Thus, for all smoothed outputs $f_n$ with $n \in \sK^{(i)}$, the smoothed prediction $y_n$ is robust if
$\sum_{d=1}^{D} \evw_d^{(i)} \cdot \left| \evx'_d - \evx_d \right|^p < \eta^{(n)}$.

Evidently, the base certificates for outputs from a set $\sK^{(i)}$ only differ in their parameter  $\eta^{(n)}$.
Recall that in our collective linear program we use a vector of variables $\vt \in \{0,1\}^{D_\mathrm{out}}$ to indicate which predictions are robust according to their base certificates (see~\autoref{theorem:collective_lp}).
If there are two outputs $f_n$ and $f_m$ with $\eta^{(n)} = \eta^{(m)}$, then $f_n$ and $f_m$ have the same base certificate and their robustness can be modelled by the same indicator variable.
Conversely, for each set of outputs $\sK^{(i)}$, we only need one indicator variable per unique $\eta^{(n)}$.
By quantizing the $\eta^{(n)}$ within each subset $\sK^{(i)}$ (for example by defining equally sized bins between
$\min_{n \in \sK^{(i)}} \eta^{(n)}$ and $\max_{n \in \sK^{(i)}} \eta^{(n)}$
), we can ensure that there is always a fixed number $N_\mathrm{bins}$ of indicator variables per subset.
This way, we can reduce the number of indicator variables from $D_\mathrm{out}$ to $N_\mathrm{out} \cdot N_\mathrm{bins}$.

To implement this idea, we define a matrix of thresholds $\mE \in \sR^{N_\mathrm{out} \times N_\mathrm{bins}}$ with
$\forall i: \min\left\{\mE_{i,:}\right\} \leq \min_{n \in \sK^{(i)}}\left(\left\{\eta^{(n)} \mid n \in \sK^{(i)} \right\}\right)$.
We then define a function $\xi : \{1,\dots,N_\mathrm{out}\} \times \sR \rightarrow \sR$ with
\begin{equation}
\xi(i, \eta) = \max\left( \left\{ \emE_{i, j} \mid j \in \{1,\dots,N_\mathrm{bins} \land \emE_{i, j} < \eta \right\} \right)
\end{equation}
that quantizes base certificate parameter $\eta$ from output subset $\sK^{(i)}$ by mapping it to the next smallest threshold in $\mE_{i,:}$.
We can then bound the collective robustness of the targeted dimensions $\sT$ of our prediction vector $\vy = f(\vx)$ as follows:
\begin{align}\label{eq:with_quantization}
    \min
    \sum_{i \in \{1,\dots,N_\mathrm{out}\}} 
    \sum_{j \in \{1,\dots,N_\mathrm{bins}\}}
    \emT_{i,j}
    \left|
    \left\{
        n \in \sT \cap \sK^{(i)} \left| \xi\left(i, \eta^{(n)}\right) = \emE_{i,j} \right.
    \right\}
    \right|
    \\\label{eq:bin_constriant}
    \text{s.t.} \quad
        \forall i, j: 
        \vb^T \vw^{(i)}  \geq (1 - \emT_{i,j}) \emE_{i,j}, 
    \quad
    \mathrm{sum}\{\vb\} \leq \epsilon^p
    \\
    \vb \in \sR_+^{D_\mathrm{in}}, \quad \mT \in \{0,1\}^{N_\mathrm{out} \times N_\mathrm{bins}}.
\end{align}
Constraint~\autoref{eq:bin_constriant} ensures that $\emT_{i,j}$ is only set to $0$ if $\vb^T \vw^{(i)} \geq \emE_{i,j}$, i.e.~all predictions from subset $\sK^{(i)}$ whose base certificate parameter $\eta^{(n)}$ is quantized to $\emE_{i,j}$ are no longer robust.
When this is the case, the objective function decreases by the number of these predictions.
For $N_\mathrm{out} = D_\mathrm{out}$, $N_\mathrm{bins}=1$ and $\emE_{n,1} = \eta^{(n)}$, we recover our general certificate from~\autoref{theorem:collective_lp}.
Note that, if the quantization maps any parameter $\eta^{(n)}$ to a smaller number, the base certificate $\sH^{(n)}$ becomes more restrictive, i.e.~$\evy_n$ is considered robust to a smaller set of perturbed inputs. Thus,~\autoref{eq:with_quantization} is a lower bound on our general certificate from~\autoref{theorem:collective_lp}.

\subsection{Sharing Noise Levels Among Inputs}\label{section:sharing_input_noise}
Similar to how partitioning the output dimensions allows us to control the number of output variables $\vt$, partitioning the input dimensions and using the same noise level within each partition allows us to control the number of budget variables  $\vb$.

Assume that we have partitioned our output dimensions into $N_\mathrm{out}$ subsets $\sK^{(1)},\dots,\sK^{(N_\mathrm{out})}$, with outputs in each subset sharing the same smoothing distribution $\Psi_\vx^{(i)}$, as explained in~\autoref{section:sharing_noise}.
Let us now define $N_\mathrm{in}$ input subsets $\sJ^{(1)}, \dots, \sJ^{(N_\mathrm{in})}$ with 
\begin{equation}
    \dot{\bigcup}_{l=1}^{N_\mathrm{in}} \sJ^{(l)} = \{1, \dots, D_\mathrm{out}\}.
\end{equation}
Recall that a prediction $\evy_n = f_n(\vx)$ with $n \in \sK^{(i)}$ is robust to a perturbed input $\vx' \in \sX^{D_\mathrm{in}}$ if
$\sum_{d=1}^{D} \evw_d^{(i)} \cdot \left| \evx'_d - \evx_d \right|^p < \eta^{(n)}$ and that the weight vectors $\vw^{(i)}$ only depend on the smoothing distributions.
Assume that we choose each smoothing distribution $\Psi_\vx^{(i)}$ such that
$\forall l \in \{1,\dots,N_\mathrm{in}\}, \forall d, d' \in \sJ^{(l)}: \evw^{(i)}_d = \evw^{(i)}_{d'}$,
i.e.~all input dimensions within each set $\sJ^{(l)}$ have the same weight.
This can be achieved by choosing $\Psi_\vx^{(i)}$ so that all dimensions in each input subset $\sJ^{l}$ are smoothed with the  noise level (note that we can still use a different smoothing distribution $\Psi_\vx^{(i)}$ for each set of outputs $\sK^{(i)}$).
For example, one could use a Gaussian distribution with covariance matrix $\mSigma = \mathrm{diag}\left(\vsigma\right)^2$
with  $\forall l \in \{1,\dots,N_\mathrm{in}\}, \forall d, d' \in \sJ^{(l)}: \evsigma_d = \evsigma_{d'}$.

In this case, the evaluation of our base certificates can be simplified. Prediction $\evy_n = f_n(\vx)$ with $n \in \sK^{(n)}$ is robust to a perturbed input $\vx' \in \sX^{D_\mathrm{in}}$ if 
\begin{align}
    \sum_{d=1}^{D_\mathrm{in}} \evw_d^{(i)} \cdot \left| \evx'_d - \evx_d \right|^p < \eta^{(n)}
    \\
    = \sum_{l=1}^{N_\mathrm{in}}
    \left(
    \evu^{(i)} \cdot \sum_{d \in \sJ^{(l)}} \left| \evx'_d - \evx_d \right|^p 
    \right)
    < \eta^{(n)},
\end{align}
with $\vu \in \sR_{+}^{N_\mathrm{in}}$ and
$\forall i \in \{1,\dots,N_\mathrm{out}\}, \forall l \in \{1,\dots,N_\mathrm{in}\}, \forall d \in \sJ^{(l)}: \evu_l^{i} = \evw_d^{i}$.
That is, we can replace each weight vector $\vw^{(i)}$ that has one weight $\evw^{(i)}_d$ per input dimension $d$ with a smaller weight vector $\vu^{(i)}$ featuring one weight $\evu^{(i)}_l$ per input subset $\sJ^{(l)}$.

For our linear program, this means that we no longer need a budget vector $\vb \in \sR_{+}^{D_\mathrm{in}}$ to model the elementwise distance $\left| \evx'_d - \evx_d \right|^p$ in each dimension $d$. Instead, we can use a smaller budget vector 
$\vb \in \sR_{+}^{N_\mathrm{in}}$ to model the overall distance within each input subset $\sJ^{(l)}$, i.e.~
$\evb^{(l)} = \sum_{d \in \sJ^{(l)}} \left| \evx'_d - \evx_d \right|^p$.
Combined with the quantization of certificate parameters from the previous section, our optimization problem becomes 
\begin{gather}
    \min
    \sum_{i \in \{1,\dots,N_\mathrm{out}\}} 
    \sum_{j \in \{1,\dots,N_\mathrm{bins}\}}
    \emT_{i,j}
    \left|
    \left\{
        n \in \sT \cap \sK^{(i)} \left| \xi\left(i, \eta^{(n)}\right) = \emE_{i,j} \right.
    \right\}
    \right|
    \\
    \text{s.t.} \quad
        \forall i, j: 
        \vb^T \vu^{(i)}  \geq (1 - \emT_{i,j}) \emE_{i,j}, 
    \quad
    \mathrm{sum}\{\vb\} \leq \epsilon^p,
    \\
    \vb \in \sR_+^{N_\mathrm{in}}, \quad \mT \in \{0,1\}^{N_\mathrm{out} \times N_\mathrm{bins}}.
\end{gather}
with $\vu \in \sR^{N_\mathrm{in}}$ and
$\forall i \in \{1,\dots,N_\mathrm{out}\}, \forall l \in \{1,\dots,N_\mathrm{in}\}, \forall d \in \sJ: \evu_l^{i} = \evw_d^{i}$.
For $N_\mathrm{out} = D_\mathrm{out}$, $N_\mathrm{in} = D_\mathrm{in}$, $N_\mathrm{bins}=1$ and $\emE_{n,1} = \eta^{(n)}$, we recover our general certificate from~\autoref{theorem:collective_lp}.

When certifying robustness for binary data, we impose different constraints on $\vb$.
To model that the adversary can not flip more bits than are present within each subset, we use a budget vector 
$\vb \in \sN_0^{N_\mathrm{in}}$ with $\forall l \in \{1,\dots,N_\mathrm{in}\} : \evb_l \leq \left| \sJ^{(l)} \right|$, instead of a continuous budget vector $\vb \in \sR_+^{N_\mathrm{in}}$.

\subsection{Linear Relaxation}\label{section:linear_relaxation}
Combining the previous steps allows us to reduce the number of problem variables and linear constraints from $D_\mathrm{in} +  D_\mathrm{out}$ and $D_\mathrm{out} + 1$ to $N_\mathrm{in} + N_\mathrm{out} \cdot N_\mathrm{bins}$ and 
$N_\mathrm{out} \cdot N_\mathrm{bins} + 1$, respectively.
Still, finding an optimal solution to the mixed-integer linear program may be too expensive.
One can obtain a lower bound on the optimal value and thus a valid, albeit more pessimistic, robustness certificate by relaxing all discrete variables to be continuous.

When using the general certificate from~\autoref{theorem:collective_lp}, the binary vector $\vt \in \{0,1\}^{D_\mathrm{out}}$ can be relaxed to 
$\vt \in [0,1]^{D_\mathrm{out}}$. When using the certificate with quantized base certificate parameters from \autoref{section:quantizing_base_certs} or \autoref{section:sharing_input_noise}, the binary matrix $\mT \in [0,1]^{N_\mathrm{out} \times N_\mathrm{bins}}$ can be relaxed to $\mT \in [0,1]^{N_\mathrm{out} \times N_\mathrm{bins}}$. Conceptually, this means that predictions can be partially certified, i.e.~$\evt_n \in (0,1)$ or $\emT_{i,j} \in (0,1)$.
In particular, a prediction can be partially certified even if we know that is impossible to attack under the collective perturbation model $\sB_\vx = \left\{\vx' \in \sX^{D_\mathrm{in}} \mid ||\vx' - \vx||_p \leq \epsilon \right\}$.
Just like \citet{Schuchardt2021}, who encountered the same problem with their collective certificate, we circumvent this issue by first computing a set $\sL \subseteq \sT$ of all targeted predictions in $\sT$ that are guaranteed to always be robust under the collective perturbation model:
\begin{align}
    \sL &= \left\{
        n \in \sT \left| 
            \left(
            \max_{x \in \sB_\vx}
            \sum_{d=1}^{D} \evw_d^{(n)} \cdot \left| \evx'_d - \evx_d \right|^p
            \right)
            < \eta^{(n)}
        \right.
    \right\}
    \\
    &= 
    \left\{
        n \in \sT \left| 
            \max_n \left\{\vw^{(n)} \right\}
            \cdot
            \epsilon^p
            < \eta^{(n)}
        \right.
    \right\}.
\end{align}
The equality follows from the fact that the most effective way of attacking a prediction is to allocate all adversarial budget to the least robust dimension, i.e.~the dimension with the largest weight.
Because we know that all predictions with indices in $\sL$ are robust, we do not have to include them in the collective optimization problem and can instead compute
\begin{equation}
    | \sL | + \min_{\vx' \in \sB_\vx} \sum_{n \in \sT \setminus \sL} \mathrm{I}\left[\vx' \in \sH^{(n)} \right].
\end{equation}
The r.h.s. optimization can be solved using the general collective certificate from~\autoref{theorem:collective_lp} or any of the more efficient, modified certificates from previous sections.

When using the general collective certificate from~\autoref{theorem:collective_lp} with binary data, the budget variables $\vb \in \{0,1\}^{D_\mathrm{in}}$ can be relaxed to
$\vb \in [0,1]^{D_\mathrm{in}}$.
When using the modified collective certificate from~\autoref{section:sharing_input_noise}, the budget variables with $\vb \in \sN_0^{N_\mathrm{in}}$ can be relaxed to $\vb \in \sR_+^{N_\mathrm{in}}$.
The additional constraint
$\forall l \in \{1,\dots,N_\mathrm{in}\} : \evb_l \leq \left| \sJ^{(l)} \right|$ can be kept in order to model that the adversary cannot flip (or partially flip) more bits than are present within each input subset $\sJ^{(l)}$.

\clearpage

\section{Base Certificates} \label{sec-appendix-base-cert}

\begin{table*}[bp]
\caption{Base certificates complying with interface \autoref{eq:base_cert_interface} with parameters $\vw^{(n)}$ and $\eta^{(n)}$.
Here, $y_n = f_n(\vx)$ is the prediction of $f_n(\vx) = \mathrm{argmax}_{y \in \sY} q_{n,y}$.
With the $l_0$ certificate,  
$g_n(\vz)_{y}$ refers to the softmax score of class $y$ and 
$\zeta = \mathrm{Var}_{\vz \sim \mathcal{F}(\vx, \vtheta)}\left[g_n(\vz)_{y_n}\right]$
is the variance
of $y_n$'s softmax score.
}
\label{table:base_cert_summary}
\begin{center}
\begin{small}
\begin{tabular}{  c  | c | c || c | c  }
\toprule
Norm & $\Psi_\vx^{(n)}$ & \makecell{$q_{n,y}$} & $\evw_d^{(n)}$ & $\eta^{(n)}$  \\
\midrule
$l_2$ &
$\mathcal{N}\left(\vx, \mathrm{diag}\left(\vs \right)^2\right)$ & 
$\mathop{\Pr}_{\vz \sim \Psi_\vx^{(n)} } \left[g_n(\vz) = y\right]$ &
$\frac{1}{\evs_d^2}$ &
$\left(\Phi^{-1}\left(q_{n, y_n}\right)\right)^2$
\\
\hline
$l_1$ &
$\mathcal{U}\left(\vx, \vlambda\right)$ & 
$\mathop{\Pr}_{\vz \sim \Psi_\vx^{(n)} } \left[g_n(\vz) = y\right]$ &
$\frac{1}{\evlambda_d}$ &
$\Phi^{-1}\left(q_{n, y_n}\right)$
\\
\hline
$l_0$ &
$\mathcal{F}\left(\vx, \vtheta\right)$ & 
$\mathop{\E}_{\vz \sim \Psi_\vx^{(n)} } \left[g_n(\vz)_y\right]$ &
\scalebox{0.93}{
$
\ln\left( \frac{\left(1 - \evtheta_d\right)^2}{\evtheta_d}
            + \frac{\left(\evtheta_d\right)^2}{1 - \evtheta_d}
            \right)
$} &
\scalebox{0.93}{$
\ln\left(1 + \frac{1}{\zeta}\left(q_{n,y_n} - \frac{1}{2}\right)^2\right)
$}
\\\bottomrule
\end{tabular}
\end{small}
\end{center}
\end{table*}

In the following, we show why the base certificates discussed in~\autoref{section:base_certificates} and summarized in~\cref{table:base_cert_summary} hold. In~\autoref{section:appendix_sparsity_cert} we further present a base certificate (and corresponding collective certificate) that can distinguish between adversarial addition and deletion of bits in binary data.

\subsection{Gaussian Smoothing for \texorpdfstring{$l_2$}{l2} Perturbations of Continuous Data}
\begin{proposition}\label{prop:gaussian_smoothing}
Given an output $g_n : \sR^{D_\mathrm{in}} \rightarrow \sY$, let 
$f_n(\vx) = \mathrm{argmax}_{y \in \sY} \Pr_{\vz \sim \mathcal{N}(\vx, \mSigma)}\left[g_n(\vz) = y\right]$ be the corresponding smoothed output with $\mSigma = \mathrm{diag}\left(\vsigma\right)^2$ and $\vsigma \in \sR_+^{D_\mathrm{in}}$.
Given an input $\vx \in \sR^{D_\mathrm{in}}$ and smoothed prediction $\evy_n = f_n(\vx)$, 
let $q = \Pr_{\vz \sim \mathcal{N}(\vx, \mSigma)}\left[g_n(\vz) = \evy_n \right]$.
Then, $\forall \vx' \in \sH^{(n)}: f_n(\vx') = \evy_n$ with $\sH^{(n)}$ defined as in \autoref{eq:base_cert_interface},
$\evw_{d} = \frac{1}{{\evsigma_d}^2}$, $\eta = \left(\Phi^{(-1)}(q)\right)^2$ and $p=2$.
\end{proposition}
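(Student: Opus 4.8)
\textbf{Proof plan for Proposition~\ref{prop:gaussian_smoothing}.}
The plan is to reduce the claim to the standard isotropic Gaussian smoothing certificate of \citet{Cohen2019} by an affine change of variables, and then verify that the resulting certified region matches the weighted-$\ell_2$ ball described by the interface in \autoref{eq:base_cert_interface}. First I would define the linear map $T(\vu) = \mathrm{diag}(\vsigma)\,\vu$ and the reparametrized base classifier $\tilde{g}_n(\vu) = g_n(T(\vu) + \text{const})$; more precisely, for a fixed clean input $\vx$ I set $\tilde{g}_n(\vu) = g_n(\mathrm{diag}(\vsigma)\vu)$ and observe that if $\vz \sim \mathcal{N}(\vx,\mSigma)$ with $\mSigma = \mathrm{diag}(\vsigma)^2$, then $\mathrm{diag}(\vsigma)^{-1}\vz \sim \mathcal{N}(\mathrm{diag}(\vsigma)^{-1}\vx, \mI)$. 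Hence $\Pr_{\vz\sim\mathcal{N}(\vx,\mSigma)}[g_n(\vz)=y] = \Pr_{\vu\sim\mathcal{N}(\mathrm{diag}(\vsigma)^{-1}\vx,\mI)}[\tilde g_n(\vu)=y]$, so the anisotropic smoothed classifier $f_n$ at $\vx$ equals the \emph{isotropic} ($\sigma=1$) smoothed version of $\tilde g_n$ evaluated at the point $\vx' \mapsto \mathrm{diag}(\vsigma)^{-1}\vx$.

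Next I would invoke the Cohen et al.\ certificate in the isotropic, unit-variance setting: if the majority class at $\mathrm{diag}(\vsigma)^{-1}\vx$ is $\evy_n$ with probability $q$, then the prediction of the isotropically smoothed $\tilde g_n$ is constant on the Euclidean ball $\{\vu : \|\vu - \mathrm{diag}(\vsigma)^{-1}\vx\|_2 < \Phi^{-1}(q)\}$ (this is exactly the $\epsilon < \sigma\Phi^{-1}(q)$ bound recalled in \autoref{section:background} with $\sigma = 1$). Pulling this back through the substitution $\vu = \mathrm{diag}(\vsigma)^{-1}\vx'$, a perturbed input $\vx'$ is certified whenever $\|\mathrm{diag}(\vsigma)^{-1}(\vx' - \vx)\|_2 < \Phi^{-1}(q)$. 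Squaring both sides (both are nonnegative) gives $\sum_{d=1}^{D_\mathrm{in}} \frac{1}{\sigma_d^2} (\evx'_d - \evx_d)^2 < (\Phi^{-1}(q))^2$, which is precisely the set $\sH^{(n)}$ of \autoref{eq:base_cert_interface} with $\evw_d = 1/\sigma_d^2$, $\eta = (\Phi^{-1}(q))^2$, and $p = 2$. This establishes $\forall \vx' \in \sH^{(n)}: f_n(\vx') = \evy_n$.

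There is essentially no serious obstacle here; the only points requiring a little care are bookkeeping. First, I should make sure the Cohen et al.\ guarantee is quoted in the form that certifies equality of the smoothed \emph{prediction} (argmax) rather than just a lower bound on the top class probability, which is the standard "certified radius" statement and holds as long as $q > 1/2$ — and if $q \le 1/2$ the set $\sH^{(n)}$ is empty since $\Phi^{-1}(q) \le 0$, so the claim is vacuous. Second, I should note that $\tilde g_n$ is still a well-defined (measurable) function from $\sR^{D_\mathrm{in}}$ to $\sY$, so the isotropic certificate applies verbatim; the affine reparametrization does not require any continuity or structure beyond measurability. Finally I would remark that the same change-of-variables argument underlies the anisotropic Gaussian certificates of \citet{Fischer2020} and \citet{Kumar2021}, so this is really a restatement of known results in the notation of our interface. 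The main "work" is simply making the substitution explicit and confirming the squared inequality matches Definition~\ref{definition:interface}.
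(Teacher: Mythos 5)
Your proposal is correct and sound, but it takes a somewhat different route from the paper's proof. The paper directly invokes the anisotropic multiclass certificate of \citet{Eiras2021}, which states robustness whenever $\sqrt{(\vx-\vx')^T\mSigma^{-1}(\vx-\vx')} < \tfrac{1}{2}\left(\Phi^{-1}(q)-\Phi^{-1}(q')\right)$, then relaxes the runner-up probability via $q' \leq 1-q$, uses $\Phi^{-1}(q)-\Phi^{-1}(1-q)=2\Phi^{-1}(q)$, and finally specializes to diagonal $\mSigma$ and squares. You instead derive the anisotropic statement from scratch by the change of variables $\vu = \mathrm{diag}(\vsigma)^{-1}\vz$, reducing to the unit-variance isotropic certificate of \citet{Cohen2019} applied to the reparametrized base classifier $\tilde g_n$, and pulling the Euclidean ball back through the linear map. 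Your route is more self-contained (it needs only the isotropic result plus measurability of $\tilde g_n$, which you correctly note suffices), while the paper's route is shorter because the anisotropic heavy lifting is delegated to the cited result; mathematically the two are equivalent, since the Eiras et al.\ certificate is itself obtained by essentially this reparametrization.

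One small correction: your claim that $\sH^{(n)}$ is empty when $q \leq 1/2$ is not right. With $\eta = (\Phi^{-1}(q))^2$ the threshold is strictly positive for $q < 1/2$, so $\sH^{(n)}$ contains a nonempty neighborhood of $\vx$, and the squaring step (which needs $\Phi^{-1}(q) \geq 0$) is genuinely only valid for $q \geq 1/2$. The correct resolution is that the certificate is only meaningful in the regime $q > 1/2$, which in the paper is enforced by the Monte Carlo abstention mechanism (abstain whenever $\underline{q} \leq 0.5$); the paper's own proof makes the same implicit assumption when it "eliminates the square root", so this does not distinguish the two arguments, but your stated justification for dismissing the $q \leq 1/2$ case should be fixed.
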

\begin{proof}
Based on the definition of the base certificate interface, we need to show that, 
$\forall \vx' \in \sH : f_n(\vx') = y_n$ with
\begin{equation}
    \sH = \left\{ \vx' \in \sR^{D_\mathrm{in}} \left| \ 
        \sum_{d=1}^{D_\mathrm{in}}
            \frac{1}{\evsigma_d^2} \cdot | \evx_d - \evx'_d |^2 < \left( \Phi^{-1}(q) \right)^2
        \right.
    \right\}.
\end{equation}
\citet{Eiras2021} have shown that under the same conditions as above, but with a general covariance matrix $\mSigma \in \sR_+^{D_\mathrm{in} \times D_\mathrm{in}}$, a prediction $y_n$ is certifiably robust to a perturbed input $\vx'$ if 
\begin{equation}\label{eq:inverse_cdf_bound}
    \sqrt{(\vx - \vx') \mSigma^{-1} (\vx - \vx')} < \frac{1}{2}
    \left(
        \Phi^{-1}(q) - \Phi^{-1}(q')
    \right),
\end{equation}
where $q' = \max_{\evy_n' \neq \evy_n} \Pr_{\vz \sim \mathcal{N}(\vx, \mSigma)}\left[g_n(\vz) = \evy'_n \right]$ is the probability of the second most likely prediction under the smoothing distribution.
Because the probabilities of all possible predictions have to sum up to $1$, we have $q' \leq 1 - q$.
Since $\Phi^{-1}$ is monotonically increasing, we can obtain a lower bound on the r.h.s. of~\autoref{eq:inverse_cdf_bound} and thus a more pessimistic certificate by substituting $1 -q$ for $q'$ (deriving such a "binary certificate" from a "multiclass certificate" is common in randomized smoothing and was already discussed in \citep{Cohen2019}):
\begin{equation}\label{eq:gaussian_derivation_1}
    \sqrt{(\vx - \vx') \mSigma^{-1} (\vx - \vx')} < \frac{1}{2}
    \left(
        \Phi^{-1}(q) - \Phi^{-1}(1 - q)
    \right),
\end{equation}
In our case, $\mSigma$ is a diagonal matrix $\mathrm{diag}\left(\vsigma\right)^2$ with $\vsigma \in \sR_+^{D_\mathrm{in}}$. Thus~\autoref{eq:gaussian_derivation_1} is equivalent to
\begin{equation}
    \sqrt{
        \sum_{d=1}^{D_\mathrm{in}} (\evx_d - \evx'_d) \frac{1}{\evsigma_d^2} (\evx_d - \evx'_d)
    }
    < \frac{1}{2}
    \left(
        \Phi^{-1}(q) - \Phi^{-1}(1 - q)
    \right).
\end{equation}
Finally, using the fact that $\Phi^{-1}(q) - \Phi^{-1}(1 - q) = 2 \Phi^{-1}(q)$ and eliminating the square root shows that we are certifiably robust if 
\begin{equation}
    \sum_{d=1}^{D_\mathrm{in}} \frac{1}{\evsigma_d^2} \cdot | \evx_d - \evx'_d |^2 < \left( \Phi^{-1}(q) \right)^2.
\end{equation}
\end{proof}

\subsection{Uniform Smoothing for \texorpdfstring{$l_1$}{l1} Perturbations of Continuous Data}
An alternative base certificate for $l_1$ perturbations is again due to \citet{Eiras2021}.
Using uniform instead of Gaussian noise allows us to collective certify robustness to $l_1$-norm-bound perturbations.
In the following $\mathcal{U}(\vx, \vlambda)$ with $\vx \in \sR^{D}$, $\vlambda \in \sR_+^D$ refers to a vector-valued random distribution in which the $d$-th element is uniformly distributed in
$\left[\evx_d - \evlambda_d, \evx_d + \evlambda_d\right]$.
\begin{proposition}\label{prop:uniform_smoothing}
Given an output $g_n : \sR^{D_\mathrm{in}} \rightarrow \sY$, let 
$f(\vx) = \mathrm{argmax}_{y \in \sY} \Pr_{\vz \sim \mathcal{U}(\vx, \vlambda)}\left[g(\vz) = y\right]$ be the corresponding smoothed classifier with $\vlambda \in \sR_+^{D_\mathrm{in}}$.
Given an input $\vx \in \sR^{D_\mathrm{in}}$ and smoothed prediction $y = f(\vx)$, 
let $p = \Pr_{\vz \sim \mathcal{U}(\vx, \vlambda)}\left[g(\vz) = y\right]$.
Then, $\forall \vx' \in \sH^{(n)}: f_n(\vx') = \evy_n$ with $\sH^{(n)}$ defined as in \autoref{eq:base_cert_interface},
$\evw_{d} = 1 / \evlambda_d$,
$\eta = \Phi^{-1}(q)$
and
$p=1$.
\end{proposition}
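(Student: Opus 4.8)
The plan is to mirror the structure of the proof of Proposition C.1, reducing the claim to an existing single-output anisotropic uniform smoothing certificate (here again attributable to \citet{Eiras2021}), and then specializing its multiclass form to the binary form that matches our interface in \autoref{eq:base_cert_interface}. Concretely, by \autoref{definition:interface} with $p = 1$, $\evw_d^{(n)} = 1/\evlambda_d$ and $\eta^{(n)} = \Phi^{-1}(q)$, I need to show that $f_n(\vx') = \evy_n$ whenever
\begin{equation}
    \sum_{d=1}^{D_\mathrm{in}} \frac{1}{\evlambda_d} \cdot |\evx_d - \evx'_d| < \Phi^{-1}(q).
\end{equation}

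First I would invoke the anisotropic uniform smoothing guarantee for a single output: under smoothing with $\mathcal{U}(\vx,\vlambda)$, the prediction $\evy_n$ is certifiably robust to a perturbation $\vx'$ whenever a suitably weighted $l_1$-type distance between $\vx$ and $\vx'$ is bounded by a quantity of the form $\tfrac12\!\left(\Phi^{-1}(q) - \Phi^{-1}(q')\right)$, where $q' = \max_{\evy'_n \neq \evy_n} \Pr_{\vz \sim \mathcal{U}(\vx,\vlambda)}[g_n(\vz) = \evy'_n]$ is the runner-up probability. The weighted distance appearing there is exactly $\sum_d |\evx_d - \evx'_d| / (2\evlambda_d)$ in the diagonal (coordinatewise independent) uniform case, since each coordinate contributes a one-dimensional total-variation-type term scaled by $1/\evlambda_d$. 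Next, as in the Gaussian proof, I would pass from the multiclass bound to the binary bound: because all class probabilities sum to $1$, we have $q' \le 1 - q$, and monotonicity of $\Phi^{-1}$ lets me replace $q'$ by $1-q$ to obtain the more conservative (but valid) condition. Finally I would use the identity $\Phi^{-1}(q) - \Phi^{-1}(1-q) = 2\Phi^{-1}(q)$ to collapse the right-hand side to $\Phi^{-1}(q)$, and multiply both sides of the resulting inequality by $2$ to clear the factor of $\tfrac12$ on the distance side, yielding precisely the displayed condition. This shows the set $\sH^{(n)}$ of the stated form is contained in the robustness region, which is all the interface requires.

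The main obstacle I anticipate is not any of these algebraic manipulations, which parallel the Gaussian case almost verbatim, but rather correctly citing and stating the precise form of the anisotropic uniform smoothing certificate — in particular getting the weighting $1/\evlambda_d$ and the factor $\tfrac12$ exactly right, since different papers normalize the uniform distribution's half-width versus full-width differently, and an off-by-two there would propagate into $\eta^{(n)}$. Relatedly, one should double-check that the single-output certificate being invoked is genuinely anisotropic (allowing a different $\evlambda_d$ per coordinate) rather than isotropic; if only an isotropic version is available off the shelf, a short additional argument decomposing the uniform noise coordinatewise and applying a one-dimensional Neyman--Pearson / likelihood-ratio bound per coordinate would be needed to recover the anisotropic statement. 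Everything downstream of that — the $q' \le 1-q$ reduction and the $\Phi^{-1}$ identity — is routine and identical in spirit to \autoref{prop:gaussian_smoothing}.
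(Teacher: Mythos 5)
Your overall route is the same as the paper's: reduce the claim to the anisotropic uniform smoothing certificate of \citet{Eiras2021}, replace the runner-up probability $q'$ by the bound $1-q$ (valid since class probabilities sum to one and $\Phi^{-1}$ is monotone), and apply the identity $\Phi^{-1}(q)-\Phi^{-1}(1-q)=2\Phi^{-1}(q)$. Your worry about anisotropy is also moot, since the certificate of \citet{Eiras2021} is stated with a separate $\evlambda_d$ per coordinate.

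The one concrete problem is the factor-of-two bookkeeping, which is exactly the risk you flagged. In the form of the result the paper invokes, the certified condition is $\sum_{d} \frac{1}{\evlambda_d}\,|\evx_d-\evx'_d| < \tfrac12\left(\Phi^{-1}(q)-\Phi^{-1}(q')\right)$, i.e.\ the weights are already $1/\evlambda_d$, not $1/(2\evlambda_d)$; after substituting $1-q$ for $q'$, the factor $\tfrac12$ cancels against the $2$ from the identity and one lands directly on $\sum_{d} \frac{1}{\evlambda_d}\,|\evx_d-\evx'_d| < \Phi^{-1}(q)$, with no further rescaling. Your version, with weights $1/(2\evlambda_d)$ followed by ``multiply both sides by $2$,'' is internally inconsistent: multiplying both sides by two gives $\sum_{d} \frac{1}{\evlambda_d}\,|\evx_d-\evx'_d| < 2\,\Phi^{-1}(q)$, which is not the displayed condition. (Even if the cited certificate did carry the $1/(2\evlambda_d)$ normalization, the proposition would follow only by the containment argument that the set with threshold $\Phi^{-1}(q)$ is a subset of the set certified with threshold $2\,\Phi^{-1}(q)$, not by the rescaling step as you describe it.) With the normalization of the cited theorem corrected, the remainder of your plan coincides with the paper's proof.
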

\begin{proof}
Based on the definition of $\sH^{(n)}$, we need to prove that $\forall \vx' \in \sH : f_n(\vx') = y_n$ with
\begin{equation}
    \sH = \left\{ \vx' \in \sR^{D_\mathrm{in}} \mid
        \sum_{d=1}^{D_\mathrm{in}}
            \frac{1}{\evlambda_d} \cdot | \evx_d - \evx'_d | <  \Phi^{-1}(q)
    \right\},
\end{equation}
\citet{Eiras2021} have shown that under the same conditions as above, a prediction $y_n$ is certifiably robust to a perturbed input $\vx'$ if 
\begin{equation}
    \sum_{d=1}^{D_\mathrm{in}}
            | \frac{1}{\evlambda_d} \cdot  \left(\evx_d - \evx'_d\right) |  
            < \frac{1}{2}
    \left(
        \Phi^{-1}(q) - \Phi^{-1}(1 - q)
    \right),
\end{equation}
where $q' = \max_{\evy_n' \neq \evy_n} \Pr_{\vz \sim \mathcal{U}(\vx, \vlambda)}\left[g_n(\vz) = \evy'_n \right]$ is the probability of the second most likely prediction under the smoothing distribution.
As in our previous proof for Gaussian smoothing, we can obtain a more pessimistic certificate by substituting $1 - q$ for $q'$.  
Since $\Phi^{-1}(q) - \Phi^{-1}(1 - q) = 2 \Phi^{-1}(q)$ and 
all $\evlambda_d$ are non-negative, we know that our prediction is certifiably robust if
\begin{equation}
    \sum_{d=1}^{D_\mathrm{in}}
            \frac{1}{\evlambda_d} \cdot | \evx_d - \evx'_d | <  \Phi^{-1}(p).
\end{equation}
\end{proof}

\subsection{Variance-Constrained Certification}\label{section:variance_smoothing}
In the following, we derive the general variance-constrained randomized smoothing certificate from \autoref{theorem:variance_constrained_cert}, before discussing specific  certificates for binary data in \autoref{section:appendix_bernoulli_cert} and~\autoref{section:appendix_sparsity_cert}.

Variance smoothing assumes that we make predictions by randomly smoothing a base model's softmax scores.
That is, given base model $g : \sX \rightarrow \Delta_{|\sY|}$ mapping from an arbitrary discrete input space $\sX$ to 
scores from the $\left(|\sY|-1\right)$-dimensional probability simplex $\Delta_{|\sY|}$, we define the smoothed classifier  
$f(\vx) = \mathrm{argmax}_{y \in \sY}
\mathbb{E}_{\vz \sim \Psi(\vx)}
\left[g(\vz)_y\right]$. Here, $\Psi(\vx)$ is an arbitrary distribution over $\sX$ parameterized by $\vx$, e.g~a Normal distribution with mean $\vx$.
The smoothed classifier does not return the most likely prediction, but the prediction associated with the highest expected softmax score.

Given an input $\vx \in \sX$, smoothed prediction $y = f(\vx)$ and a perturbed input $\vx' \in \sX$, we want to determine whether $f(\vx') = y$.
By definition of our smoothed classifier, we know that $f(\vx') = y$ if $y$ is the label with the highest expected softmax score. In particular, we know that $f(\vx') = y$ if $y$'s softmax score is larger than all other softmax scores combined, i.e.
\begin{equation}\label{eq:variance_implication_1}
\mathbb{E}_{\vz \sim \Psi(\vx')}
\left[g(\vz)_y\right] > 0.5
\implies f(\vx') = y.
\end{equation}
Computing $
\mathbb{E}_{\vz \sim \Psi(\vx')}
\left[g(\vz)_y\right]
$ exactly is usually not tractable -- especially if we later want to evaluate robustness to many $\vx'$ from a whole perturbation model $\sB \subseteq \sX$.
Therefore, we compute a lower bound on $
\mathbb{E}_{\vz \sim \Psi(\vx')}
\left[g(\vz)_y\right]
$. If even this lower bound is larger than $0.5$, we know that prediction $y$ is certainly robust.
For this, we define a set of functions $\sF$ with $g_y \in \sH$ and compute the minimum softmax score across all functions from $\sF$:
\begin{equation}\label{eq:general_variance_basecertset}
\min_{h \in \sF} 
\mathbb{E}_{\vz \sim \Psi(\vx')}
\left[h(\vz)\right] > 0.5
\implies f(\vx') = y.
\end{equation}
For our variance smoothing approach, we define $\sF$ to be the set of all functions that have a larger or equal  expected value and a smaller or equal variance under $\Psi(\vx)$, compared to our base model $g$.
Let $\mu = \mathbb{E}_{\vz \sim \Psi(\vx)}\left[g(\vz)_y\right]$ be the expected softmax score of our base model $g$ for label $y$.
Let $\zeta = \mathbb{E}_{\vz \sim \Psi(\vx)}\left[\left(g(\vz)_y - \nu \right)^2\right]$ be the expected squared distance of the softmax score from a scalar $\nu \in \sR$. (Choosing $\nu = \mu$ yields the variance of the softmax score. An arbitrary $\nu$ is only needed for technical reasons related to Monte Carlo estimation~\autoref{section:monte_carlo_variance_smoothing}).
Then, we define
\begin{equation}\label{eq:function_family}
    \sF = \left\{
        h : \sX \rightarrow \sR
        \left| \ 
            \mathbb{E}_{\vz \sim \Psi(\vx)}\left[h(\vz)\right] \geq \mu
            \land
            \mathbb{E}_{\vz \sim \Psi(\vx)}\left[\left(h(\vz) - \nu \right)^2\right] \leq \zeta
        \right.
    \right\}
\end{equation}
Clearly, by the definition of $\mu$ and $\zeta$, we have $g_y \in \sF$. Note that we do not restrict functions from $\sH$ to the domain $[0,1]$, but allow arbitrary real-valued outputs.

By evaluating~\autoref{eq:variance_implication_1} with $\sF$ defined as in~\autoref{eq:general_variance_basecertset}, we can determine if our prediciton is robust.
To compute the optimal value, we need the following two Lemmata:
\begin{lemma}\label{lemma_1}
Given a discrete set $\sX$ and the set $\Pi$ of all probability mass functions over $\sX$, 
any two probability mass functions $\pi_1$, $\pi_2 \in \Pi$ fulfill
\begin{equation}\label{eq:lemma_1_primal}
    \sum_{z \in \sX} \frac{\pi_2(z)}{\pi_1(z)} \pi_2(z) \geq 1.
\end{equation}
\begin{proof}
For a fixed probability mass function $\pi_1$,~\autoref{eq:lemma_1_primal} is lower-bounded by the minimal expected likelihood ratio that can be achieved by another $\tilde{\pi}(z) \in \Pi$:
\begin{equation}
    \sum_{z \in \sX} \frac{\pi_2(z)}{\pi_1(z)} \pi_2(z)
    \geq 
    \min_{\tilde{\pi} \in \Pi} \sum_{z \in \sX} \frac{\tilde{\pi}(z)}{\pi_1(z)} \tilde{\pi}(z).
\end{equation}
The r.h.s. term can be expressed as the constrained optimization problem
\begin{equation}
    \min_{\tilde{\pi}} \sum_{z \in \sX} \frac{\tilde{\pi}(z)}{\pi_1(z)} \tilde{\pi}(z)
    \quad
    \text{s.t.}
    \quad
    \sum_{z \in \sX} \tilde{\pi}(z) = 1
\end{equation}
with the corresponding dual problem
\begin{equation}\label{eq:lemma_1_dual}
    \max_{\lambda \in \sR}
    \min_{\tilde{\pi}} \sum_{z \in \sX} \frac{\tilde{\pi}(z)}{\pi_1(z)} \tilde{\pi}(z)
    + \lambda
     \left(-1 + \sum_{z \in \sX} \tilde{\pi}(z) \right).
\end{equation}
The inner problem is convex in each $\tilde{\pi}(z)$. Taking the gradient w.r.t.\ to $\tilde{\pi}(z)$ for all $z \in \sX$ shows that it has its minimum at $\forall z \in \sX : \tilde{\pi}(z) = - \frac{\lambda \pi_1(z)}{2}$. Substituting into~\autoref{eq:lemma_1_dual} results in 
\begin{align}
    & \max_{\lambda \in \sR}
    \sum_{z \in \sX} \frac{\lambda^2 \pi_1(z)^2}{4\pi_1(z)}
    + \lambda
    \left(-1 - \sum_{z \in \sX} \frac{\lambda \pi_1(z)}{2}\right)
    \\
    = & 
    \max_{\lambda \in \sR}
    - \lambda ^ 2
    \sum_{z \in \sX} \frac{\pi_1(z)}{4} - \lambda
    \\\label{eq:lemma_1_valid_distribution}
    = & 
    \max_{\lambda \in \sR}
    - \frac{\lambda ^ 2}{4} - \lambda
    \\ 
    = & 1.
\end{align}
\autoref{eq:lemma_1_valid_distribution} follows from the fact that $\pi_1(z)$ is a valid probability mass function.
Due to duality, the optimal dual value $1$ is a lower bound on the optimal value of our primal problem~\autoref{eq:lemma_1_primal}.
\end{proof}
\end{lemma}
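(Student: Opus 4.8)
\textbf{Proof plan for Lemma~\ref{lemma_1}.}

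The statement to prove is that for any discrete set $\sX$ and any two probability mass functions $\pi_1, \pi_2$ over $\sX$, we have $\sum_{z \in \sX} \frac{\pi_2(z)}{\pi_1(z)} \pi_2(z) \geq 1$. The plan is to fix $\pi_1$ and observe that the left-hand side is bounded below by the minimum over all probability mass functions $\tilde{\pi}$ of the quantity $\sum_{z} \tilde{\pi}(z)^2 / \pi_1(z)$; if we can show this minimum equals $1$, we are done. The first step is therefore to write down this minimization as a constrained convex optimization problem with the single linear constraint $\sum_z \tilde{\pi}(z) = 1$ (I would deliberately drop the nonnegativity constraints on $\tilde{\pi}$, since this only enlarges the feasible set and hence can only lower the minimum, which is all I need for a lower bound).

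Next I would form the Lagrangian dual. Introduce a multiplier $\lambda \in \sR$ for the equality constraint, giving $\max_\lambda \min_{\tilde{\pi}} \sum_z \tilde{\pi}(z)^2/\pi_1(z) + \lambda(\sum_z \tilde{\pi}(z) - 1)$. The inner objective is separable and strictly convex in each $\tilde{\pi}(z)$ (since $\pi_1(z) > 0$ on the support), so I would set the partial derivative to zero to get the minimizer $\tilde{\pi}(z) = -\tfrac{1}{2}\lambda\pi_1(z)$. Substituting back and using $\sum_z \pi_1(z) = 1$ collapses the dual objective to the scalar function $-\tfrac{\lambda^2}{4} - \lambda$, whose maximum over $\lambda \in \sR$ is attained at $\lambda = -2$ and equals $1$. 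By weak duality the dual optimal value $1$ lower-bounds the primal optimal value, which in turn lower-bounds the original sum, completing the proof.

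The main subtlety — not really an obstacle, but the place where care is warranted — is the handling of points $z$ where $\pi_1(z) = 0$: if $\pi_2(z) > 0$ there the ratio sum is $+\infty$ and the bound is trivial, so one should restrict attention to the support of $\pi_1$ (equivalently assume $\pi_2 \ll \pi_1$) before invoking the optimization argument. A second minor point is justifying that weak duality suffices: I only need the dual value as a valid lower bound, so no strong-duality or Slater-condition verification is required. Everything else is the routine calculus shown in the excerpt, so I would not belabor it.
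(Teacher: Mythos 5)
Your proposal is correct and follows essentially the same route as the paper: bound the sum by the minimum over all mass functions subject only to the sum-to-one constraint, solve the Lagrangian dual in closed form to obtain the value $1$, and invoke weak duality. Your extra care about the support of $\pi_1$ (or dropping the nonnegativity constraints explicitly) is a sensible refinement but does not change the argument.
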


\begin{lemma}\label{lemma_2}
Given a probability distribution $\mathcal{D}$ over a $\sR$ and a scalar $\nu \in \sR$,
let $\mu = \mathbb{E}_{z\sim \mathcal{D}}\left[z\right]$ and
$\xi = \mathbb{E}_{z\sim \mathcal{D}}\left[\left(z - \nu\right)^2\right]$.
Then $\xi \geq \left(\mu - \nu\right)^2$
\end{lemma}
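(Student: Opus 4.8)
\textbf{Proof plan for Lemma~\ref{lemma_2}.}
The statement to prove is that $\xi \geq (\mu - \nu)^2$, where $\mu = \mathbb{E}_{z \sim \mathcal{D}}[z]$ and $\xi = \mathbb{E}_{z \sim \mathcal{D}}[(z - \nu)^2]$. The natural approach is to expand $\xi$ around the mean $\mu$ rather than around $\nu$, exploiting the identity $(z - \nu)^2 = (z - \mu)^2 + 2(z - \mu)(\mu - \nu) + (\mu - \nu)^2$. This is just completing the square: I would write $z - \nu = (z - \mu) + (\mu - \nu)$ and square it.

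The key steps, in order: first, substitute the decomposition into the expectation defining $\xi$ and use linearity of expectation to split it into three terms, $\mathbb{E}[(z-\mu)^2] + 2(\mu-\nu)\mathbb{E}[z-\mu] + (\mu-\nu)^2$. Second, observe that $\mathbb{E}[z - \mu] = \mu - \mu = 0$ by definition of $\mu$, so the cross term vanishes. Third, note that $\mathbb{E}[(z - \mu)^2] = \mathrm{Var}_{z \sim \mathcal{D}}[z] \geq 0$ since it is the expectation of a nonnegative quantity. This leaves $\xi = \mathrm{Var}[z] + (\mu - \nu)^2 \geq (\mu - \nu)^2$, which is the desired inequality.

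There is essentially no obstacle here — this is a one-line computation once the completing-the-square identity is written down, and the only thing to be careful about is making sure the cross term genuinely vanishes (which it does, purely because $\mu$ is defined as the mean, not an arbitrary constant; $\nu$ on the other hand is arbitrary, which is exactly why the bound involves $(\mu - \nu)^2$ and not zero). The lemma is presumably used in the Monte Carlo estimation discussion to guarantee that the denominator $\zeta - (\mu - \nu)^2$ appearing in the variance-constrained certificate (Theorem~\ref{theorem:variance_constrained_cert}) is nonnegative, so the relevant takeaway is the slightly stronger fact that the gap $\xi - (\mu-\nu)^2$ equals the variance of $\mathcal{D}$.
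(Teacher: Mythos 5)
Your proof is correct and follows essentially the same route as the paper: both arguments reduce the claim to the nonnegativity of the variance of $\mathcal{D}$, you by completing the square around $\mu$ to get $\xi = \mathrm{Var}[z] + (\mu-\nu)^2$, the paper by expanding $(z-\nu)^2$, cancelling the common terms, and reducing to $\mathbb{E}[z^2]\geq\mu^2$. Your version is, if anything, a slightly cleaner presentation of the identical underlying fact.
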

\begin{proof}
Using the definitions of $\mu$ and $\xi$, as well as some simple algebra, we can show:
\begin{align}
\xi & \geq \left(\mu - \nu\right)^2
\\
\iff
\mathbb{E}_{z\sim \mathcal{D}}\left[\left(z - \nu\right)^2\right]
 &\geq \mu^2 - 2 \mu \nu + \nu^2\\
\iff
\mathbb{E}_{z\sim \mathcal{D}}\left[z^2 - 2 z \nu + \nu^2\right]
 &\geq \mu^2 - 2 \mu \nu + \nu^2\\
\iff
\mathbb{E}_{z\sim \mathcal{D}}\left[z^2 - 2 z \nu + \nu^2\right]
 &\geq \mu^2 - 2 \mu \nu + \nu^2\\
\iff
\mathbb{E}_{z\sim \mathcal{D}}\left[z^2\right] - 2 \mu \nu + \nu^2
 &\geq \mu^2 - 2 \mu \nu + \nu^2 \\
\iff
\mathbb{E}_{z\sim \mathcal{D}}\left[z^2\right]
 &\geq \mu^2
\end{align}
It is well known for the variance that $\mathbb{E}_{z\sim \mathcal{D}}\left[(z - \mu)^2\right] = \mathbb{E}_{z\sim \mathcal{D}}\left[z^2\right] - \mu^2$. Because the variance is always non-negative, the above inequality holds.
\end{proof}

Using the previously described approach and lemmata, we can show the soundness of the following robustness certificate:
\begin{customthm}{5.1}[Variance-constrained certification]\label{theorem_1}
a function $g : \sX \rightarrow \Delta_{|\sY|}$ mapping from discrete set $\sX$ to scores from the $\left(|\sY| -1\right)$-dimensional probability simplex, let 
	$f(\vx) = \mathrm{argmax}_{y \in \sY}
	\mathbb{E}_{\vz \sim \Psi_\vx}
	\left[g(\vz)_y\right]$ with smoothing distribution $\Psi_\vx$ and probability mass function
	$\pi_{\vx}(\vz) = \Pr_{\tilde{\vz} \sim \Psi_\vx}\left[\tilde{\vz} = \vz\right]$.
	Given an input $\vx \in \sX$ and smoothed prediction $\evy = f(\vx)$, 
	let $\mu = \mathbb{E}_{\vz \sim \Psi_\vx}\left[g(\vz)_y\right]$
	and $\zeta = \mathbb{E}_{\vz \sim \Psi_\vx}\left[\left(g(\vz)_y - \nu \right)^2\right]$ with $\nu \in \sR$.
	Assuming $\nu \leq \mu$, then $f(\vx') = y$ if 
	\begin{equation}\label{eq:variance_constrained_cert_appendix}
		\sum_{\vz \in \sX} \frac{\pi_{\vx'}(\vz)}{\pi_{\vx}(\vz)} \cdot \pi_{\vx'}(\vz)
		< 1 + \frac{1}{\zeta - \left(\mu - \nu  \right)^2} \left(\mu - \frac{1}{2}\right).
	\end{equation}
\end{customthm}
\begin{proof}
Following our discussion above, we know that $f(\vx') = y$ if $\mathbb{E}_{\vz \sim \Psi(\vx')}
\left[g(\vz)_y\right] > 0.5$ with $\sF$ defined as in~\autoref{eq:function_family}.
We can compute a (tight) lower bound on $\min_{h \in \sF} 
\mathbb{E}_{\vz \sim \Psi(\vx')}$ by following the functional optimization approach for randomized smoothing proposed by \citet{Zhang2020}. That is, we solve a dual problem in which we optimize the value $h(\vz)$ for each $\vz \in \sX$.
By the definition of the set $\sF$, our optimization problem is
\begin{align}
     \min_{h : \sX \rightarrow \sR} \mathbb{E}_{\vz \sim \Psi(\vx')} \left[h(\vz)\right]\\
    \text{s.t.} \quad
    \mathbb{E}_{\vz \sim \Psi(\vx)}\left[h(\vz)\right] \geq \mu, \quad
            \mathbb{E}_{\vz \sim \Psi(\vx)}\left[\left(h(\vz) - \nu \right)^2\right] \leq \zeta.
\end{align}
The corresponding dual problem with dual variables $\alpha, \beta \geq 0$ is
\begin{equation}
\begin{split}
    \max_{\alpha,\beta \geq 0} \min_{h : \sX \rightarrow \sR}
    \mathbb{E}_{\vz \sim \Psi(\vx')} \left[h(\vz)\right] \\ 
    + \alpha \left( \mu - \mathbb{E}_{\vz \sim \Psi(\vx)} \left[h(\vz)\right] \right)
    + \beta \left(  \mathbb{E}_{\vz \sim \Psi(\vx)}\left[\left(h(\vz) - \nu \right)^2\right] - \zeta \right).
\end{split}
\end{equation}
We first move move all terms that don't involve $h$ out of the inner optimization problem:
\begin{equation}
    = \max_{\alpha,\beta \geq 0} \alpha \mu - \beta \zeta
    +
    \min_{h : \sX \rightarrow \sR}
    \mathbb{E}_{\vz \sim \Psi(\vx')} \left[h(\vz)\right]
    - \alpha \mathbb{E}_{\vz \sim \Psi(\vx)}  \left[h(\vz)\right]
    + \beta \mathbb{E}_{\vz \sim \Psi(\vx)}\left[\left(h(\vz) - \nu \right)^2\right].
\end{equation}
Writing out the expectation terms and combining them into one sum (or -- in the case of continuous $\sX$ -- one integral), our dual problem becomes 
\begin{equation}\label{eq:theorem_first_eq}
    = \max_{\alpha,\beta \geq 0} \alpha \mu - \beta \zeta
    +
    \min_{h : \sX \rightarrow \sR}
    \sum_{\vz \in \sX}
    h(\vz) \pi_{\vx'}(\vz)
    -
    \alpha h(\vz) \pi_{\vx}(\vz)
    +
    \beta \left(h(\vz) - \nu \right)^2 \pi_{\vx}(\vz)
\end{equation}
(recall that $\pi_{\vx'}$ and $\pi_{\vx'}$ refer to the probability mass functions of the smoothing distributions).
The inner optimization problem can be solved by finding the optimal $h(\vz)$ in each point $\vz$:
\begin{equation}
    = \max_{\alpha,\beta \geq 0} \alpha \mu - \beta \zeta
    +
    \sum_{\vz \in \sX}
    \min_{h(\vz) \in \sR}
    h(\vz) \pi_{\vx'}(\vz)
    -
    \alpha h(\vz) \pi_{\vx}(\vz)
    +
    \beta \left(h(\vz) - \nu \right)^2 \pi_{\vx}(\vz).
\end{equation}
Because $\beta \geq 0$, each inner optimization problem is convex in $h(\vz)$. We can thus find the optimal $h^*(\vz)$ by setting the derivative to zero:
\begin{align}
    &\frac{d}{d h(\vz)} h(\vz) \pi_{\vx'}(\vz)
    -
    \alpha h(\vz) \pi_{\vx}(\vz)
    +
    \beta \left(h(\vz) - \nu \right)^2 \pi_{\vx}(\vz) \overset{!}{=} 0 \\
    \iff
    & \pi_{\vx'}(\vz) - \alpha \pi_{\vx}(\vz) + 2 \beta \left(h(\vz) - \nu\right) \pi_{\vx}(\vz) \overset{!}{=} 0 \\
    \implies
    & h^*(\vz) = - \frac{\pi_{\vx'}(\vz)}{2 \beta \pi_{\vx}(\vz)} + \frac{\alpha}{2\beta} + \nu.
\end{align}
Substituting into~\autoref{eq:theorem_first_eq} and simplifying leaves us with the dual problem
\begin{equation}\label{eq:theorem_second_eq}
    \max_{\alpha,\beta \geq 0} \alpha \mu - \beta \zeta
    - \frac{\alpha^2}{4 \beta} + \frac{\alpha}{2 \beta} - \alpha \nu + \nu
    - \frac{1}{4 \beta}
    \sum_{\vz \in \sX}
    \frac{\pi_{\vx'}(\vz)^2}{\pi_{\vx}(\vz)}.
\end{equation}
In the following, let us use 
    $\rho = \sum_{\vz \in \sX}
    \frac{\pi_{\vx'}(\vz)^2}{\pi_{\vx}(\vz)}$
    as a shorthand for the expected likelihood ratio.
The problem is concave in $\alpha$. We can thus find the optimum $\alpha^*$ by setting the derivative to zero, which gives us $\alpha^* = 2 \beta (\mu - \nu) + 1$. Because $\beta \geq 0$ and our theorem assumes that $\nu \leq \mu$, the value $\alpha^*$ is a feasible solution to the dual problem.
Substituting into~\autoref{eq:theorem_second_eq} and simplifying results in 
\begin{align}
    & \max_{\beta \geq 0} \alpha^* \mu - \beta \zeta
    - \frac{{\alpha^*}^2}{4 \beta} + \frac{\alpha^*}{2 \beta} - \alpha^* \nu + \nu
    - \frac{1}{4 \beta}
    \rho \\\label{eq:lemma_eq_2}
=
& \max_{\beta \geq 0}
    \beta \left((\mu - \nu)^2 - \zeta\right)
    + \mu
    + \frac{1}{4 \beta}
    \left(1 - 
    \rho
    \right).
\end{align}
\cref{lemma_1} shows that the expected likelihood ratio
$\rho$ is always greater than or equal to $1$. \cref{lemma_2} shows that $(\mu - \nu)^2 - \zeta \leq 0$. Therefore~\autoref{eq:lemma_eq_2} is concave in $\beta$.
The optimal value of $\beta$ can again be found by setting the derivative to zero:
\begin{equation}\label{eq:lemma_substitution3}
\beta^* = \sqrt{\frac{1 - \rho}{ 4\left((\mu - \nu)^2 - \zeta\right) }}.
\end{equation}
Recall that our theorem assumes $\zeta \geq \left(\mu - \nu  \right)^2$ and thus $\beta^*$ is real valued.
Substituting \autoref{eq:lemma_substitution3} into \autoref{eq:lemma_eq_2} shows that the maximum of our dual problem is
\begin{equation}
\mu - \sqrt{\left(1-p \right) \left((\mu - \nu)^2 - \zeta\right)}.
\end{equation}
By duality, this is a lower bound on our primal  problem
$\min_{h \in \sF} 
\mathbb{E}_{\vz \sim \Psi(\vx')}
\left[h(\vz)\right]$.
We know that our prediction is certifiably robust, i.e. $f(\vx) = y$, if $\min_{h \in \sF} 
\mathbb{E}_{\vz \sim \Psi(\vx')}
\left[h(\vz)\right] > 0.5$. So, in particular, our prediction is robust if 
\begin{align}
    &\mu - \sqrt{\left(1-\rho \right) \left((\mu - \nu)^2 - \zeta \right)}
    > 0.5
    \\
    \iff
    &
    \rho
    <
    1 + 
    \frac{1}{\zeta - \left(\mu - \nu\right)^2}
    \left(  \mu - \frac{1}{2} \right)^2
    \\
    \iff
    &
\sum_{\vz \in \sX} \frac{\pi_{\vx'}(\vz)^2}{\pi_{\vx}(\vz)}    <
    1 + 
    \frac{1}{\zeta - \left(\mu - \nu\right)^2}
    \left(  \mu - \frac{1}{2} \right)^2
\end{align}
The last equivalence is the result of inserting the definition of the expected likelihood ratio $\rho$.
\end{proof}
With~\autoref{theorem_1} in place, we can certify robustness for arbitrary smoothing distributions, assuming we can compute the expected likelihood ratio.
When we are working with discrete data and the smoothing distributions factorize, this can be done efficiently,
as the two following base certificates for binary data demonstrate.

\subsubsection{Bernoulli Smoothing for Perturbations of Binary Data}\label{section:appendix_bernoulli_cert}
We begin by proving the base certificate presented in~\autoref{section:base_certificates}.
Recall that we  we use a smoothing distribution
$\mathcal{F}(\vx, \vtheta)$ with $\evtheta \in [0,1]^{D_\mathrm{in}}$ that independently flips the $d$'th bit with probability $\evtheta_d$, i.e.~for $\vx, \vz \in \{0,1\}^{D_\mathrm{in}}$ and $\vz \sim \mathcal{F}(\vx, \vtheta)$ we have $\Pr[\evz_d \neq \evx_d ] = \evtheta_d$. 
\begin{corollary}
Given an output $g_n : \{0,1\}^{D_\mathrm{in}} \rightarrow \Delta_{|\sY|}$ mapping to scores from the $\left(|\sY|-1\right)$-dimensional probability simplex, let 
$f_n(\vx) = \mathrm{argmax}_{y \in \sY}
\mathbb{E}_{\vz \sim \mathcal{F}(\vx, \vtheta)}
\left[g_n(\vz)_y\right]$ be the corresponding smoothed classifier with $\vtheta \in [0,1]^{D_\mathrm{in}}$.
Given an input $\vx \in \{0,1\}^{D_\mathrm{in}}$ and smoothed prediction $\evy_n = f_n(\vx)$, 
let $\mu = \mathbb{E}_{\vz \sim \mathcal{F}(\vx, \vtheta)}\left[g_n(\vz)_y\right]$
and $\zeta = \mathrm{Var}_{\vz \sim \mathcal{F}(\vx, \vtheta)}\left[g_n(\vz)_y\right]$.
Then, $\forall \vx' \in \sH^{(n)}: f_n(\vx') = \evy_n$ with $\sH^{(n)}$ defined as in \autoref{eq:base_cert_interface},
$\evw_{d}=
\ln\left( \frac{\left(1 - \evtheta_d\right)^2}{\evtheta_d}
            + \frac{\left(\evtheta_d\right)^2}{1 - \evtheta_d}
            \right)$,
$\eta = \ln\left(1 + \frac{1}{\zeta}\left(\mu - \frac{1}{2}\right)^2\right)$
and
$p=0$.
\end{corollary}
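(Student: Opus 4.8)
The plan is to apply the variance-constrained certificate of \autoref{theorem_1} with the specific choices $\Psi_\vx = \mathcal{F}(\vx,\vtheta)$ and $\nu = \mu$, and then to simplify the resulting robustness condition into the linear-inequality form demanded by the base certificate interface of \autoref{definition:interface}. Since the corollary sets $\zeta = \mathrm{Var}_{\vz\sim\mathcal{F}(\vx,\vtheta)}[g_n(\vz)_y]$, this is exactly the case $\nu = \mu$ of \autoref{theorem_1}, for which $(\mu-\nu)^2 = 0$ and the hypotheses $\nu \le \mu$ and $\zeta \ge (\mu-\nu)^2$ hold trivially (assuming $\zeta > 0$). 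The theorem then guarantees $f_n(\vx') = y_n$ whenever
\begin{equation*}
  \rho \coloneqq \sum_{\vz \in \{0,1\}^{D_\mathrm{in}}} \frac{\pi_{\vx'}(\vz)^2}{\pi_{\vx}(\vz)} < 1 + \frac{1}{\zeta}\Bigl(\mu - \tfrac12\Bigr)^2 ,
\end{equation*}
where $\pi_\vx$ denotes the probability mass function of $\mathcal{F}(\vx,\vtheta)$. It remains only to rewrite this as $\sum_d \evw_d\,|\evx'_d - \evx_d|^p < \eta$ with the stated parameters.

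\textbf{Key steps.} First I would use that $\mathcal{F}(\vx,\vtheta)$ is a product distribution, $\pi_\vx(\vz) = \prod_{d=1}^{D_\mathrm{in}} \pi_{\evx_d}(\evz_d)$ with $\pi_{\evx_d}(\evz_d) = \evtheta_d$ if $\evz_d \ne \evx_d$ and $1 - \evtheta_d$ otherwise, so that the expected likelihood ratio factorizes: $\rho = \prod_{d=1}^{D_\mathrm{in}}\bigl(\sum_{\evz_d\in\{0,1\}} \pi_{\evx'_d}(\evz_d)^2/\pi_{\evx_d}(\evz_d)\bigr)$. Second I would evaluate the one-dimensional factors: a coordinate with $\evx'_d = \evx_d$ contributes $\sum_{\evz_d}\pi_{\evx_d}(\evz_d) = 1$, whereas a coordinate with $\evx'_d \ne \evx_d$ (hence $\evx'_d = 1 - \evx_d$ for binary data) contributes $\tfrac{(1-\evtheta_d)^2}{\evtheta_d} + \tfrac{\evtheta_d^2}{1-\evtheta_d} = \exp(\evw_d)$. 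Thus $\rho = \exp\bigl(\sum_d \evw_d\,\mathrm{I}[\evx'_d \ne \evx_d]\bigr)$, and on $\{0,1\}^{D_\mathrm{in}}$ we may write $\mathrm{I}[\evx'_d \ne \evx_d] = |\evx'_d - \evx_d| = |\evx'_d - \evx_d|^0$ under the convention $0^0 \coloneqq 0$. Third I would take logarithms of both sides of the robustness condition; this is legitimate because the right-hand side is $\ge 1 > 0$ and, by \cref{lemma_1}, $\rho \ge 1 > 0$, while $\ln$ is strictly increasing. This yields precisely
\begin{equation*}
  \sum_{d=1}^{D_\mathrm{in}} \evw_d\,|\evx'_d - \evx_d|^0 < \ln\!\Bigl(1 + \tfrac{1}{\zeta}\bigl(\mu - \tfrac12\bigr)^2\Bigr) = \eta ,
\end{equation*}
i.e.\ $\vx' \in \sH^{(n)}$ for $\sH^{(n)}$ as in \autoref{eq:base_cert_interface} with the claimed $\evw_d$, $\eta$, and $p = 0$; by \autoref{theorem_1} every such $\vx'$ satisfies $f_n(\vx') = y_n$, which is the assertion.

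\textbf{Main obstacle.} There is no genuine difficulty here; the content is entirely in careful bookkeeping. The step most in need of attention is the factorized evaluation of $\rho$ together with the $p=0$ exponent convention: one must argue cleanly that $|\evx'_d - \evx_d|^0$ is to be read as the indicator $\mathrm{I}[\evx'_d \ne \evx_d]$ on $\{0,1\}^{D_\mathrm{in}}$, so that the left-hand sum coincides with the $\ell_0$ pseudo-norm weighted by $\vw$. A secondary technicality is the degenerate case $\zeta = 0$ (a softmax score that is $\Psi_\vx$-almost-surely constant on label $y$), in which the bound is vacuous; I would simply note the standing assumption $\zeta > 0$.
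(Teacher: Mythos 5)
Your proposal is correct and matches the paper's own proof essentially step for step: both apply the variance-constrained theorem with $\nu=\mu$, factorize the expected likelihood ratio over the independent Bernoulli coordinates, evaluate the per-dimension factors as $1$ or $\tfrac{(1-\evtheta_d)^2}{\evtheta_d}+\tfrac{\evtheta_d^2}{1-\evtheta_d}$, and take logarithms to recover the interface form with $p=0$. Your explicit remarks on the $|\evx'_d-\evx_d|^0$ indicator convention and the degenerate case $\zeta=0$ are fine additions but not points of divergence.
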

\begin{proof}
Based on our definition of the base certificates interface (see~\cref{definition:interface}, we must show that
$\forall \vx' \in \sH : f_n(\vx') = y_n$ with
\begin{equation}
    \sH = \left\{ \vx' \in \{0,1\}^{D_\mathrm{in}} \left| \ 
        \sum_{d=1}^{D_\mathrm{in}}
            \ln\left( \frac{\left(1 - \evtheta_d\right)^2}{\evtheta_d}
            + \frac{\left(\evtheta_d\right)^2}{1 - \evtheta_d}
            \right)
             \cdot | \evx'_d - \evx_d |^0
            <
            \ln\left(1 + \frac{1}{\zeta}\left(\mu - \frac{1}{2}\right)^2\right)
            \right.
    \right\},
\end{equation}
Because all bits are flipped independently, our probability mass function
$\pi_{\vx}(\vz) = \Pr_{\tilde{\vz} \sim \Psi(\vx)}\left[\tilde{\vz} = \vz\right]$
factorizes:
\begin{equation}
    \pi_{\vx}(\vz) = \prod_{d=1}^{D_\mathrm{in}} \pi_{\evx_d}(\evz_d)
\end{equation}
with
\begin{equation}
     \pi_{\evx_d}(\evz_d) = 
    \begin{cases}
    \evtheta_d & \text{if } \evz_d \neq \evx_d\\
    1 - \evtheta_d & \text{else}\\
    \end{cases}.
\end{equation}
Thus, our expected likelihood ratio can be written as 
\begin{equation}
    \sum_{\vz \in \{0,1\}^{D_\mathrm{in}}} \frac{\pi_{\vx'}(\vz)^2}{\pi_{\vx}(\vz)}
    =
    \sum_{\vz \in \{0,1\}^{D_\mathrm{in}}}
    \prod_{d=1}^{D_\mathrm{in}} \frac{\pi_{\evx'_d}(\evz_d)^2}{\pi_{\evx_d}(\evz_d)}
    =
    \prod_{d=1}^{D_\mathrm{in}}
    \sum_{\evz_d \in \{0,1\}}
     \frac{\pi_{\evx'_d}(\evz_d)^2}{\pi_{\evx_d}(\evz_d)}.
\end{equation}
For each dimension $d$, we can distinguish two cases:
If both the perturbed and unperturbed input are the same in dimension $d$, i.e.~$\evx'_d = \evx_d$, then $\frac{\pi_{\evx'_d}(\vz)}{\pi_{\evx_d}(\vz)} = 1$ and thus 
\begin{equation}
\sum_{\evz_d \in \{0,1\}}
     \frac{\pi_{\evx'_d}(\evz_d)^2}{\pi_{\evx_d}(\evz_d)} 
     =
    \sum_{\evz_d \in \{0,1\}}
     \pi_{\evx'_d}(\evz_d)
     = \evtheta_d  + (1-\evtheta_d) = 1.
\end{equation}
If the perturbed and unperturbed input differ in dimension $d$, then
\begin{equation}
    \sum_{\evz_d \in \{0,1\}}
     \frac{\pi_{\evx'_d}(\evz_d)^2}{\pi_{\evx_d}(\evz_d)} 
     = 
     \frac{\left(1 - \evtheta_d\right)^2}{\evtheta_d}
            + \frac{\left(\evtheta_d\right)^2}{1 - \evtheta_d}.
\end{equation}
Therefore, the expected likelihood ratio is 
\begin{equation}
    \prod_{d=1}^{D_\mathrm{in}}
    \sum_{\evz_d \in \{0,1\}}
     \frac{\pi_{\evx'_d}(\evz_d)^2}{\pi_{\evx_d}(\evz_d)}
     = 
     \prod_{d=1}^{D_\mathrm{in}}
     \left(
     \frac{\left(1 - \evtheta_d\right)^2}{\evtheta_d}
            + \frac{\left(\evtheta_d\right)^2}{1 - \evtheta_d}
     \right)^{|\evx'_d - \evx_d|}.
\end{equation}
Due to~\autoref{theorem_1} (and using $\nu = \mu$ when computing the variance), we know that our prediction is robust, i.e.~$f_n(\vx') = \evy_n$, if 
\begin{align}
    & \sum_{\vz \in \{0,1\}^{D_\mathrm{in}}} \frac{\pi_{\vx'}(\vz)^2}{\pi_{\vx}(\vz)}
    <
     1 + \frac{1}{\zeta}\left(\mu - \frac{1}{2}\right)^2
     \\
    \iff
    &
    \prod_{d=1}^{D_\mathrm{in}}
     \left(
     \frac{\left(1 - \evtheta_d\right)^2}{\evtheta_d}
            + \frac{\left(\evtheta_d\right)^2}{1 - \evtheta_d}
     \right)^{|\evx'_d - \evx_d|}
     <
     1 + \frac{1}{\zeta}\left(\mu - \frac{1}{2}\right)^2
     \\
     \iff
     &
     \sum_{d=1}^{D_\mathrm{in}}
     \ln \left(
     \frac{\left(1 - \evtheta_d\right)^2}{\evtheta_d}
            + \frac{\left(\evtheta_d\right)^2}{1 - \evtheta_d}
     \right) {|\evx'_d - \evx_d|}
     < \ln\left(1 + \frac{1}{\zeta}\left(\mu - \frac{1}{2}\right)^2\right).
\end{align}
Because $\evx_d$ and $\evx'_d$ are binary, the last inequality is equivalent to
\begin{equation}
         \sum_{d=1}^{D_\mathrm{in}}
     \ln \left(
     \frac{\left(1 - \evtheta_d\right)^2}{\evtheta_d}
            + \frac{\left(\evtheta_d\right)^2}{1 - \evtheta_d}
     \right) {|\evx'_d - \evx_d|^0}
     < \ln\left(1 + \frac{1}{\zeta}\left(\mu - \frac{1}{2}\right)^2\right).
\end{equation}
\end{proof}

\subsubsection{Sparsity-aware Smoothing for Perturbations of Binary Data}\label{section:appendix_sparsity_cert}
Sparsity-aware randomized smoothing \citep{Bojchevski2020} is an alternative smoothing approach for binary data. It  uses different probabilities for randomly deleting ($1 \rightarrow 0$) and adding ($0 \rightarrow 1$) bits to preserve data sparsity.
For a random variable $\vz$ distributed according to the sparsity-aware distribution
$\mathcal{S}(\vx, \vtheta^+, \vtheta^-)$ with $\vx \in \{0,1\}^{D_\mathrm{in}}$
and addition and deletion probabilities  $\vtheta^+,  \vtheta^- \in [0,1]^{D_\mathrm{in}}$, we have:
\begin{gather*}
    \Pr[\evz_d =0 ] = \left(1 - \evtheta_d^{+} \right)^{1 - \evx_d} \cdot \left( \evtheta_d^{-} \right)^{\evx_d}, \\
    \Pr[\evz_d =1 ] = \left(\evtheta_d^{+} \right)^{1 - \evx_d} \cdot \left( 1 - \evtheta_d^{-} \right)^{\evx_d}.
\end{gather*}
The Bernoulli smoothing distribution we discussed in the previous section is a special case of sparsity-aware smoothing with $\vtheta^+ = \vtheta^-$.
The runtime of the robustness certificate derived by \citet{Bojchevski2020} increases exponentially with the number of unique values in $\vtheta^+$ and $\vtheta^-$, which makes it unsuitable for localized smoothing.
Variance-constrained smoothing, on the other hand, allows us to efficiently compute a certificate in closed form.
\begin{corollary} \label{prop-appendix-sparsity-aware-var-smoothing}
Given an output $g_n : \sR^{D_\mathrm{in}} \rightarrow \Delta_{|\sY|}$ mapping to scores from the $\left(|\sY|-1\right)$-dimensional probability simplex, let 
$f_n(\vx) = \mathrm{argmax}_{y \in \sY}
\mathbb{E}_{\vz \sim \mathcal{S}(\vx, \vtheta^+, \vtheta^-)}
\left[g_n(\vz)_y\right]$ be the corresponding smoothed classifier with $\vtheta^+,  \vtheta^- \in [0,1]^{D_\mathrm{in}}$.
Given an input $\vx \in \{0,1\}^{D_\mathrm{in}}$ and smoothed prediction $\evy_n = f_n(\vx)$, 
let $\mu = \mathbb{E}_{\vz \sim \mathcal{S}(\vx, \vtheta^+, \vtheta^-)}\left[g_n(\vz)_y\right]$
and $\zeta = \mathrm{Var}_{\vz \sim \mathcal{S}(\vx, \vtheta^+, \vtheta^-)}\left[g_n(\vz)_y\right]$.
Then, $\forall \vx' \in \sH : f_n(\vx') = y_n$ for
\begin{equation}
\begin{split}
    \sH = \left\{ \vx' \in \{0,1\}^{D_\mathrm{in}} \mid
        \sum_{d=1}^{D_\mathrm{in}}
            \evgamma_d^{+} \cdot \mathrm{I}\left[\evx_d = 0 \neq \evx'_d\right]  + 
            \evgamma_d^{-} \cdot \mathrm{I}\left[\evx_d = 1 \neq \evx'_d\right]  
            <
            \eta
    \right\},
\end{split}
\end{equation}
where $\vgamma^+, \vgamma^- \in \sR^{D_\mathrm{in}}$,
$\evgamma^+_d = \ln\left( \frac{\left(\evtheta^{-}_d\right)^2}{1 - \evtheta^{+}_d}
     +
     \frac{\left(1 - \evtheta^{-}_d\right)^2}{\evtheta^{+}_d}
\right)$,
$\evgamma^-_d = \ln\left( \frac{\left(1 - \evtheta^{+}_d\right)^2}{\evtheta^{-}_d}
+ \frac{\left(\evtheta^{+}_d\right)^2}{1 - \evtheta^{-}_d}.
\right)$
and
$\eta = \ln\left(1 + \frac{1}{\zeta}\left(\mu - \frac{1}{2}\right)^2\right)$.
\end{corollary}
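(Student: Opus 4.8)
The plan is to follow exactly the same recipe as in the proof of the Bernoulli base certificate (\autoref{section:appendix_bernoulli_cert}), since sparsity-aware smoothing is a strictly more general factorized discrete distribution. First I would invoke \autoref{theorem_1} (variance-constrained certification) with $\nu = \mu$, which reduces everything to showing that the expected likelihood ratio $\rho = \sum_{\vz} \pi_{\vx'}(\vz)^2 / \pi_{\vx}(\vz)$ admits the claimed product form. The key observation is that $\mathcal{S}(\vx, \vtheta^+, \vtheta^-)$ factorizes coordinatewise, so $\pi_{\vx}(\vz) = \prod_d \pi_{\evx_d}(\evz_d)$ and hence $\rho = \prod_d \left( \sum_{\evz_d \in \{0,1\}} \pi_{\evx'_d}(\evz_d)^2 / \pi_{\evx_d}(\evz_d) \right)$, turning the whole thing into a per-coordinate computation.

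Next I would do the per-coordinate case analysis. For each dimension $d$ there are four cases depending on $(\evx_d, \evx'_d)$. When $\evx'_d = \evx_d$ the inner sum collapses to $\sum_{\evz_d} \pi_{\evx'_d}(\evz_d) = 1$, just as in the Bernoulli proof. The two interesting cases are $\evx_d = 0 \neq \evx'_d = 1$ and $\evx_d = 1 \neq \evx'_d = 0$. For $\evx_d = 0, \evx'_d = 1$: $\pi_{\evx_d}(0) = 1-\evtheta^+_d$, $\pi_{\evx_d}(1) = \evtheta^+_d$, $\pi_{\evx'_d}(0) = \evtheta^-_d$, $\pi_{\evx'_d}(1) = 1-\evtheta^-_d$, so the inner sum is $(\evtheta^-_d)^2/(1-\evtheta^+_d) + (1-\evtheta^-_d)^2/\evtheta^+_d = e^{\evgamma^+_d}$. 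The symmetric case $\evx_d = 1, \evx'_d = 0$ gives $(1-\evtheta^+_d)^2/\evtheta^-_d + (\evtheta^+_d)^2/(1-\evtheta^-_d) = e^{\evgamma^-_d}$. (Note the asymmetry: deleting a $1 \to 0$ is penalized by $\evgamma^-_d$, adding a $0 \to 1$ by $\evgamma^+_d$, which is why the certificate needs two weight vectors $\vgamma^+, \vgamma^-$ rather than one.) Taking the product over all $d$ yields $\rho = \exp\!\big( \sum_d \evgamma^+_d \mathrm{I}[\evx_d = 0 \neq \evx'_d] + \evgamma^-_d \mathrm{I}[\evx_d = 1 \neq \evx'_d] \big)$.

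Finally I would substitute this expression for $\rho$ into the robustness condition of \autoref{theorem_1}, namely $\rho < 1 + \frac{1}{\zeta}\left(\mu - \tfrac12\right)^2$, and take logarithms of both sides (valid since both sides are positive and $\ln$ is increasing) to recover precisely the defining inequality of $\sH$ with $\eta = \ln\!\big(1 + \frac{1}{\zeta}(\mu - \tfrac12)^2\big)$. This also confirms that the certificate fits the interface of \cref{definition:interface} in the split form described in the body, with $p = 0$ implicitly (the indicator $\mathrm{I}[\evx_d \neq \evx'_d]$ equals $|\evx'_d - \evx_d|^0$ on binary data), and the two weights $\evgamma^+_d$, $\evgamma^-_d$ attached to addition- and deletion-type perturbations respectively.

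I do not anticipate a genuine obstacle here — the argument is a direct generalization of the Bernoulli case, and the only place requiring care is bookkeeping in the four-case split, specifically keeping the roles of $\vtheta^+$ and $\vtheta^-$ straight so that the correct weight ($\evgamma^+_d$ versus $\evgamma^-_d$) is paired with the correct direction of bit flip. The one subtlety worth double-checking is that $\evtheta^+_d, \evtheta^-_d \in (0,1)$ so that the fractions defining $\evgamma^\pm_d$ are well-defined and finite; the boundary values (which would make a flip in that coordinate either impossible or forced) can be handled by the usual convention that an infeasible perturbation contributes weight $+\infty$, exactly as is implicit in \citep{Bojchevski2020}.
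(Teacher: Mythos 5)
Your proposal is correct and follows essentially the same route as the paper's own proof: factorize the probability mass function, compute the per-coordinate expected likelihood ratio in the three cases (unchanged, $0\to1$ addition giving $e^{\evgamma^+_d}$, $1\to0$ deletion giving $e^{\evgamma^-_d}$), and plug the product into Theorem 5.1 with $\nu=\mu$ before taking logarithms. The case computations and the pairing of $\evgamma^+_d$/$\evgamma^-_d$ with the flip directions match the paper exactly, so there is nothing to add.
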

\begin{proof}
Just like with the Bernoulli distribution we discussed in the previous section,  all bits are flipped independently, meaning our probability mass function
$\pi_{\vx}(\vz) = \Pr_{\tilde{\vz} \sim \Psi(\vx)}\left[\tilde{\vz} = \vz\right]$
factorizes:
\begin{equation}
    \pi_{\vx}(\vz) = \prod_{d=1}^{D_\mathrm{in}} \pi_{\evx_d}(\evz_d)
\end{equation}
with
\begin{equation}
     \pi_{\evx_d}(\evz_d) = 
    \begin{cases}
    \evtheta_d & \text{if } \evz_d \neq \evx_d\\
    1 - \evtheta_d & \text{else}\\
    \end{cases}.
\end{equation}
As before, our expected likelihood ratio can be written as 
\begin{equation}
    \sum_{\vz \in \{0,1\}^{D_\mathrm{in}}} \frac{\pi_{\vx'}(\vz)^2}{\pi_{\vx}(\vz)}
    =
    \sum_{\vz \in \{0,1\}^{D_\mathrm{in}}}
    \prod_{d=1}^{D_\mathrm{in}} \frac{\pi_{\evx'_d}(\evz_d)^2}{\pi_{\evx_d}(\evz_d)}
    =
    \prod_{d=1}^{D_\mathrm{in}}
    \sum_{\evz_d \in \{0,1\}}
     \frac{\pi_{\evx'_d}(\evz_d)^2}{\pi_{\evx_d}(\evz_d)}.
\end{equation}
We can now distinguish three cases.
If both the perturbed and unperturbed input are the same in dimension $d$, i.e.~$\evx'_d = \evx_d$, then $\frac{\pi_{\evx'_d}(\vz)}{\pi_{\evx_d}(\vz)} = 1$ and thus 
\begin{equation}
\sum_{\evz_d \in \{0,1\}}
     \frac{\pi_{\evx'_d}(\evz_d)^2}{\pi_{\evx_d}(\evz_d)} 
     =
    \sum_{\evz_d \in \{0,1\}}
     \pi_{\evx'_d}(\evz_d)
     = 1.
\end{equation}
If $\evx'_d = 1$ and $\evx_d = 0$, i.e.~a bit was added, then
\begin{equation}
\sum_{\evz_d \in \{0,1\}}
     \frac{\pi_{\evx'_d}(\vz)^2}{\pi_{\evx_d}(\vz)} 
     =
     \sum_{\evz_d \in \{0,1\}}
     \frac{\pi_{1}(\evz_d)^2}{\pi_{0}(\evz_d)} 
     =
     \frac{\pi_{1}(0)^2}{\pi_{0}(0)} 
     +
     \frac{\pi_{1}(1)^2}{\pi_{0}(1)} 
     = 
     \frac{\left(\evtheta^{-}_d\right)^2}{1 - \evtheta^{+}_d}
     +
     \frac{\left(1 - \evtheta^{-}_d\right)^2}{\evtheta^{+}_d}
\end{equation}
If $\evx'_d = 0$ and $\evx_d = 1$, i.e.~a bit was deleted, then
\begin{equation}
\sum_{\evz_d \in \{0,1\}}
     \frac{\pi_{\evx'_d}(\vz)^2}{\pi_{\evx_d}(\vz)} 
     =
     \sum_{\evz_d \in \{0,1\}}
     \frac{\pi_{0}(\evz_d)^2}{\pi_{1}(\evz_d)} 
     =
     \frac{\pi_{0}(0)^2}{\pi_{1}(0)} 
     +
     \frac{\pi_{0}(1)^2}{\pi_{1}(1)} 
     = 
     \frac{\left(1 - \evtheta^{+}_d\right)^2}{\evtheta^{-}_d}
+ \frac{\left(\evtheta^{+}_d\right)^2}{1 - \evtheta^{-}_d}.
\end{equation}
Therefore, the expected likelihood ratio is 
\begin{align}
    &\prod_{d=1}^{D_\mathrm{in}}
    \sum_{\evz_d \in \{0,1\}}
     \frac{\pi_{\evx'_d}(\evz_d)^2}{\pi_{\evx_d}(\evz_d)}
     \\
     = 
     &\prod_{d=1}^{D_\mathrm{in}}
     \left(
     \frac{\left(\evtheta^{-}_d\right)^2}{1 - \evtheta^{+}_d}
     +
     \frac{\left(1 - \evtheta^{-}_d\right)^2}{\evtheta^{+}_d}
     \right)^{\mathrm{I}\left[\evx_d = 0 \neq \evx'_d| \right]}
     \left(
     \frac{\left(1 - \evtheta^{+}_d\right)^2}{\evtheta^{-}_d}
    + \frac{\left(\evtheta^{+}_d\right)^2}{1 - \evtheta^{-}_d}
     \right)^{\mathrm{I}\left[\evx_d = 1 \neq \evx'_d| \right]}
     \\
     =
     &\prod_{d=1}^{D_\mathrm{in}}
     \exp\left(\gamma_{d}^{+}\right)^{\mathrm{I}\left[\evx_d = 0 \neq \evx'_d| \right]}
     \cdot
     \exp\left(\gamma_{d}^{-}\right)^{\mathrm{I}\left[\evx_d = 1 \neq \evx'_d| \right]}
     .
\end{align}
In the last equation, we have simply used the shorthands $\gamma_d^{+}$ and $\gamma_d^{-}$ defined in~\cref{prop-appendix-sparsity-aware-var-smoothing}.
Due to~\autoref{theorem_1} (and using $\nu = \mu$ when computing the variance), we know that our prediction is robust, i.e.~$f_n(\vx') = \evy_n$, if 
\begin{align}
    &\sum_{\vz \in \{0,1\}^{D_\mathrm{in}}} \frac{\pi_{\vx'}(\vz)^2}{\pi_{\vx}(\vz)}
    <
     1 + \frac{1}{\zeta}\left(\mu - \frac{1}{2}\right)^2
     \\
    \iff
    &
    \prod_{d=1}^{D_\mathrm{in}}
     \exp\left(\gamma_{d}^{+}\right)^{\mathrm{I}\left[\evx_d = 0 \neq \evx'_d| \right]}
     \cdot
     \exp\left(\gamma_{d}^{-}\right)^{\mathrm{I}\left[\evx_d = 1 \neq \evx'_d| \right]}
     <
     1 + \frac{1}{\zeta}\left(\mu - \frac{1}{2}\right)^2
     \\
     \iff
     &
     \sum_{d=1}^{D_\mathrm{in}}
     \gamma_{d}^{+} \cdot {\mathrm{I}\left[\evx_d = 0 \neq \evx'_d| \right]}
     \cdot
     \gamma_{d}^{-} \cdot {\mathrm{I}\left[\evx_d = 1 \neq \evx'_d| \right]}
     < \ln\left(1 + \frac{1}{\zeta}\left(\mu - \frac{1}{2}\right)^2\right).
\end{align}
\end{proof}
\textbf{Use for collective certification.} 
It should be noted that this certificate does not comply with our interface for base certificates (see~\cref{definition:interface}),
meaning we can not directly use it to certify robustness to norm-bound perturbations using our collective linear program from~\autoref{theorem:collective_lp}.
We can however use it to certify collective robustness to the more refined threat model used in \citep{Schuchardt2021}:
Let the set of admissible perturbed inputs be
$\sB_\vx = \left\{\vx' \in \{0,1\}^{D_\mathrm{in}} \mid
\sum_{d=1}^{D_\mathrm{in}} \left[\evx_d = 0 \neq \evx'_d|\right]\leq \epsilon^{+} 
\land
\sum_{d=1}^{D_\mathrm{in}} \left[\evx_d = 1 \neq \evx'_d|\right]\leq \epsilon^{-} 
\right\}$ with $\epsilon^{+}, \epsilon^{y}  \in \sN_0$ specifying the number of bits the adversary is allowed to add or delete.
We can now follow the procedure outlined in~\autoref{section:recipe} to combine the per-prediction base certificates into a collective certificate for our new collective perturbation model.
As discussed in, we can bound the number of predictions that are robust to simultaneous attacks by minimizing the number of predictions that are certifiably robust according to their base certificates:
\begin{equation}
    \min_{\vx' \in \sB_\vx} \sum_{n \in \sT} \mathrm{I}\left[f_n(\vx') = \evy_n \right]
    \geq
    \min_{\vx' \in \sB_\vx} \sum_{n \in \sT} \mathrm{I}\left[\vx' \in \sH^{(n)} \right].
\end{equation}
Inserting the linear inequalities characterizing our perturbation model and base certificates results in:
\begin{gather}\label{eq:sparsity_aware_collective_1}
    \min_{\vx' \in \{0,1\}^{D_\mathrm{in}}} \sum_{n \in \sT} \mathrm{I}\left[
    \sum_{d=1}^{D_\mathrm{in}}
        \evgamma_d^{+} \cdot \mathrm{I}\left[\evx_d = 0 \neq \evx'_d\right]  + 
        \evgamma_d^{-} \cdot \mathrm{I}\left[\evx_d = 1 \neq \evx'_d\right]  
        < \eta^{(n)}
    \right]
    \\
    \text{s.t.}
    \quad
    \sum_{d=1}^{D_\mathrm{in}} \left[\evx_d = 0 \neq \evx'_d|\right]\leq \epsilon^{+},
\quad
\sum_{d=1}^{D_\mathrm{in}} \left[\evx_d = 1 \neq \evx'_d|\right]\leq \epsilon^{-}.
\end{gather}
Instead of optimizing over the perturbed input $\vx'$, we can define two vectors $\vb^+, \vb- \in \{0,1\}^{D_\mathrm{in}}$ that indicate in which dimension bits were added or deleted. Using these new variables,~\autoref{eq:sparsity_aware_collective_1} can be rewritten as
\begin{gather}
    \min_{\vb^{+}, \vb^{-} \in \{0,1\}^{D_\mathrm{in}}} \sum_{n \in \sT} \mathrm{I}\left[
        \left(\vgamma^{+}\right)^T \vb^+  + 
        \left(\vgamma^{-}\right)^T  \vb^-
        < \eta^{(n)}
    \right]
    \\
    \text{s.t.}
    \quad
\mathrm{sum}\{\vb^+\} \leq \epsilon^{+},
\quad
\mathrm{sum}\{\vb^-\} \leq \epsilon^{-},
\\
\sum_{d | \evx_d=1} \evb_d^{+} = 0,
\quad
\sum_{d | \evx_d=0} \evb_d^{-} = 0.
\end{gather}
The last two constraints ensure that bits can only be deleted where $\evx_d = 1$ and bits can only be added where $\evx_d = 0$.
Finally, we can use the procedure for replacing the indicator functions with indicator variables that we discussed in~\autoref{section:proof_lp} to restate the above problem as the mixed-integer problem
\begin{gather}
    \min_{\vb^{+}, \vb^{-} \in \{0,1\}^{D_\mathrm{in}}, \vt \in \{0,1\}^{D_\mathrm{out}}} \sum_{n \in \sT} \evt_n
    \\
    \text{s.t.}
    \quad
    \left(\vgamma^{+}\right)^T \vb^+  + 
        \left(\vgamma^{-}\right)^T  \vb^-
    \geq  (1 - \evt_n) \eta^{(n)}, 
    \\
\mathrm{sum}\{\vb^+\} \leq \epsilon^{+},
\quad
\mathrm{sum}\{\vb^-\} \leq \epsilon^{-},
\\
\sum_{d | \evx_d=1} \evb_d^{+} = 0,
\quad
\sum_{d | \evx_d=0} \evb_d^{-} = 0.
\end{gather}
The first constraint ensures that $\evt_n$ can only be set to $0$ if the l.h.s. is greater or equal $\eta_n$, i.e.~only when the base certificate can no longer guarantee robustness.
The efficiency of the certificate can be improved by applying any of the techniques discussed in~\autoref{section:appendix_efficiency}.

\subsubsection{Gaussian Smoothing for Perturbations of Continuous Data}\label{section:variance_constrained_gaussian}
Even though we specifically proposed variance-constrained certification as a means of efficiently certifying anisotropically smoothed classifiers for discrete data, it can be generalized to continuous distributions by replacing sums with integrals and mass functions with density functions (the proof is analogous to that in~\cref{section:variance_smoothing}).

In the following, we assume Gaussian smoothing, i.e.\ $\Psi(\vx) \sim \mathcal{N}(\vx, \mSigma)$ with $\mSigma \in \sR_+^{D \times D}$ with density function $\pi_\vx$. In this case, the expected ratio between $\pi_{\vx'}$ and $\pi_{\vx}$ is the exponential of the squared Mahalanobis distance (see Table 2 of~\citep{Gil2013} with $\alpha=2$), i.e.
\begin{equation*}
    \int_{\sR^D} \frac{\pi_{\vx'}(\vz)}{\pi_{\vx}(\vz)} \pi_{\vx'}(\vz) \ d \vz =
    \exp\left((\vx' - \vx) \mSigma^{-1} (\vx' - \vx)\right).
\end{equation*}
This leads us to the following corollary of~\cref{theorem:variance_constrained_cert}:
\begin{corollary} \label{prop-appendix-variance-constrained-gaussian}
	Given a function $h : \sR^{D} \rightarrow \Delta_{|\sY|}$ mapping to scores from the $\left(|\sY| -1\right)$-dimensional probability simplex, let 
	$f(\vx) = \mathrm{argmax}_{y \in \sY}
	\mathbb{E}_{\vz \sim \mathcal{N}(\vx, \mSigma)}
	\left[h(\vz)_y\right]$ with covariance matrix $\mSigma \in \sR_+^{D \times D}$.
	Given an input $\vx \in \sX$ and smoothed prediction $\evy = f(\vx)$, 
	let $\mu = \mathbb{E}_{\vz \sim \mathcal{N}(\vx, \mSigma)}\left[h(\vz)_y\right]$
	and $\zeta = \mathbb{E}_{\vz \sim \mathcal{N}(\vx, \mSigma)}\left[\left(h(\vz)_y - \nu \right)^2\right]$ with $\nu \in \sR$.
	Assuming $\nu \leq \mu$, then $f(\vx') = y$ if 
	\begin{equation}\label{eq:variance_constrained_cert_gaussian}
        (\vx' - \vx) \mSigma^{-1} (\vx' - \vx)
		< \ln \left(1 + \frac{1}{\zeta - \left(\mu - \nu  \right)^2} \left(\mu - \frac{1}{2}\right)\right).
	\end{equation}
\end{corollary}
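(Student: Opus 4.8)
The plan is to derive \autoref{eq:variance_constrained_cert_gaussian} as a direct specialization of the variance-constrained certificate \autoref{theorem:variance_constrained_cert}, after first promoting that theorem from the discrete to the continuous setting. Concretely, the proof has two parts: (i) a continuous-distribution version of \autoref{theorem:variance_constrained_cert}, and (ii) evaluating the expected likelihood ratio for Gaussian smoothing and rearranging.

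For part (i), I would replay the argument of \autoref{section:variance_smoothing} almost verbatim, replacing every sum over $\sX$ by an integral over $\sR^D$ and every probability mass function by a Lebesgue density. The functional-optimization core is untouched: we bound $\mathbb{E}_{\vz\sim\Psi(\vx')}[h(\vz)]$ from below by minimizing over all $h$ with $\mathbb{E}_{\vz\sim\Psi(\vx)}[h(\vz)] \ge \mu$ and $\mathbb{E}_{\vz\sim\Psi(\vx)}[(h(\vz)-\nu)^2] \le \zeta$, pass to the Lagrangian dual, and solve it by optimizing $h(\vz)$ pointwise; the pointwise minimizer and the closed-form maximization over the two dual variables $\alpha,\beta \ge 0$ never used discreteness. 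The only two auxiliary facts that need a fresh check are the lemmata. \autoref{lemma_2} ($\zeta \ge (\mu-\nu)^2$) is a statement about a scalar random variable and carries over unchanged. \autoref{lemma_1} ($\int_{\sR^D} \pi_{\vx'}(\vz)^2/\pi_{\vx}(\vz)\,d\vz \ge 1$) follows in the continuous case from Cauchy--Schwarz, $1 = \left(\int \pi_{\vx'}\right)^2 = \left(\int \tfrac{\pi_{\vx'}}{\sqrt{\pi_{\vx}}}\sqrt{\pi_{\vx}}\right)^2 \le \left(\int \tfrac{\pi_{\vx'}^2}{\pi_{\vx}}\right)\left(\int \pi_{\vx}\right) = \int \tfrac{\pi_{\vx'}^2}{\pi_{\vx}}$ (alternatively one keeps the original Lagrangian proof with an integral normalization constraint). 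The upshot is: $f(\vx') = y$ whenever $\int_{\sR^D} \pi_{\vx'}(\vz)^2/\pi_{\vx}(\vz)\, d\vz$ is strictly below the same threshold appearing on the right-hand side of \autoref{theorem:variance_constrained_cert}.

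For part (ii), I would specialize to $\Psi(\vx) = \mathcal{N}(\vx,\mSigma)$ and compute $\int \pi_{\vx'}(\vz)^2/\pi_{\vx}(\vz)\,d\vz$. Writing out the Gaussian densities, the integrand is an unnormalized Gaussian in $\vz$ whose covariance is again $\mSigma$; completing the square in $\vz$ and using that a normalized Gaussian integrates to one collapses the integral to the constant $\exp\!\big((\vx'-\vx)^\top\mSigma^{-1}(\vx'-\vx)\big)$ (equivalently, this is the exponential of the order-$2$ Rényi divergence between two equal-covariance Gaussians; see \citep{Gil2013}). Plugging this into the bound from part (i) and taking logarithms---a monotone operation, with the argument of the logarithm exceeding $1$ exactly in the informative regime $\mu > \tfrac12$ (and when $\mu \le \tfrac12$ the bound demands the nonnegative Mahalanobis distance be less than something $\le 0$, so the implication holds vacuously)---gives precisely \autoref{eq:variance_constrained_cert_gaussian}.

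The main obstacle, such as it is, lies entirely in part (i): turning the ``replace sums by integrals'' slogan into a rigorous argument requires checking that the relevant integrals are finite and that the dual manipulation is legitimate---in particular that $\min_h$ may be moved inside the integral and that the pointwise optimizer $h^*$ is an admissible competitor in the primal problem. For two Gaussians with a common covariance all densities are strictly positive and smooth and the likelihood-ratio integral is finite, so these are formalities rather than genuine difficulties; the Gaussian integral of part (ii) is then a routine completion of the square. Hence I expect the proof to be short, with essentially all the care concentrated on justifying the discrete-to-continuous transfer of \autoref{theorem:variance_constrained_cert}.
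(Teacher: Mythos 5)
Your proposal is correct and follows essentially the same route as the paper: the paper likewise obtains the corollary by transferring \autoref{theorem:variance_constrained_cert} to continuous distributions (sums become integrals, mass functions become densities, with the same dual argument) and then inserting the closed-form expected likelihood ratio $\exp\left((\vx'-\vx)^\top\mSigma^{-1}(\vx'-\vx)\right)$ for two Gaussians with common covariance, citing \citep{Gil2013}, before taking logarithms. Your Cauchy--Schwarz justification of \autoref{lemma_1} in the continuous setting and the remark about the vacuous case $\mu\leq\frac{1}{2}$ are minor additions to the same argument.
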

As with~\cref{theorem:variance_constrained_cert}, The r.h.s.\ of~\autoref{eq:variance_constrained_cert_gaussian} depends on 
the expected softmax score $\mu$, a variable $\nu \leq \mu$ and the expected squared difference $\zeta$ between $\mu$ and $\nu$.
For $\nu = \mu$ the parameter $\zeta$ is the variance of the softmax score. A higher expected value and a lower variance allow us to certify robustness for larger adversarial perturbations.

For comparison, ANCER~\citep{Eiras2021} guarantees robustness for the smoothed prediction $y_n = \mathrm{argmax}_{y \in \sY} \Pr\left[g(\vx)=y\right]$ if
\begin{equation}\label{eq:ancer_cert}
    (\vx' - \vx) \mSigma^{-1} (\vx' - \vx) < \Phi^{-1}(q_{y_n})^2,
\end{equation}
where $q_{y_n}$ is the probability of predicting class $y_n$, i.e.\
$q_{y_n} = \Pr_{\vz \sim \mathcal{N}(\vx, \Sigma)} \left[g(\vz) = y_n\right]$. Here, $g : \sR^{D} \rightarrow \sY$ directly outputs a class label instead of a softmax score.
We see that both the variance-constrained certificate and ANCER yield the same certified ellipsoid, scaled by a different factor. This factor is the certifiable radius $\eta$, i.e.\ the r.h.s.\ term of~\cref{eq:variance_constrained_cert_gaussian,eq:ancer_cert}.
We also see that both certificates have the same computational complexity -- they both involve calculation of the squared Mahalanobis distance and a constant number of operations for evaluation of the certifiable radius.

In the following, we briefly assess under which conditions which certificate yields a larger certifiable radius $\eta$.
For this evaluation, we assume that $g(\vx) = \mathrm{argmax}_y h(\vx)$, i.e.\ $g$ predicts the class with the highest softmax score.
We then vary the prediction probability $q_{y_n}$ and the expected softmax score $\mu$ within $[0.5,1.0]$. For each $\mu$, we calculate the largest possible variance $\zeta$ (using the Bhatia–Davis inequality $\zeta \leq (1 - \mu) \cdot \mu$), which will give us the weakest possible variance-constrained certificate (see~\cref{eq:variance_constrained_cert_gaussian}).

\cref{fig:variance_vs_ancer} shows the difference in certifiable radius $\eta$, with the dashed line indicating parameters for which both certificates are identical.
We have omitted all combinations of $q_{y_n}$ and $\mu$ that are not possible, namely $\mu > q_{y_n} + \frac{1}{2} (1 - q_{y_n})$.
We see that ANCER is stronger when $q_{y_n}$ is large, i.e. almost all samples from the smoothing distribution are correctly classified, but not necessarily with high confidence.
The variance-constrained certificate is stronger when $q_{y_n}$ is smaller and $\mu$ is larger, i.e. some samples are misclassified but the correctly classified ones have high confidence.
Note however, that this is the worst case for the variance-constrained certificate. For $\zeta \to 0$, much larger radii can be certified (see~\cref{fig:variance_parameters} and~\cref{eq:variance_constrained_cert_gaussian}).

\begin{figure*}[hb]
	\centering
	\begin{minipage}{0.49\columnwidth}
		\centering
		\includegraphics[width=\textwidth]{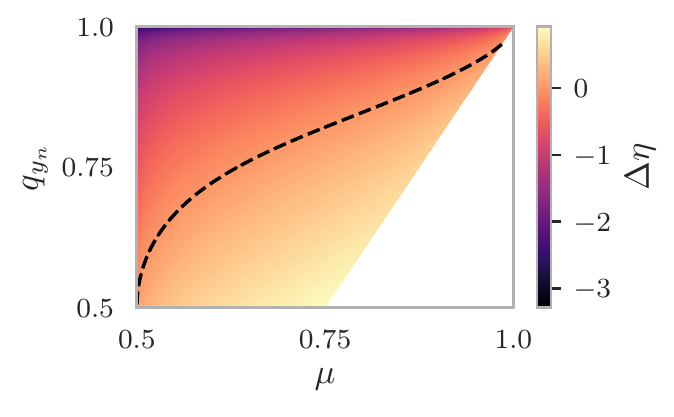}
		\caption{Worst-case difference in certifiable radius $\eta$ between ANCER~\citep{Eiras2021} and the variance-constrained certificate for anisotropic Gaussian smoothing. The dashed line indicates combinations of prediction probability $q_{y_n}$ and expected softmax score $\mu$ for which both certificates are equally strong.
		}
		\label{fig:variance_vs_ancer}
	\end{minipage}
	\hfill
	\begin{minipage}{0.49\columnwidth}
		\centering
		\includegraphics[width=\textwidth]{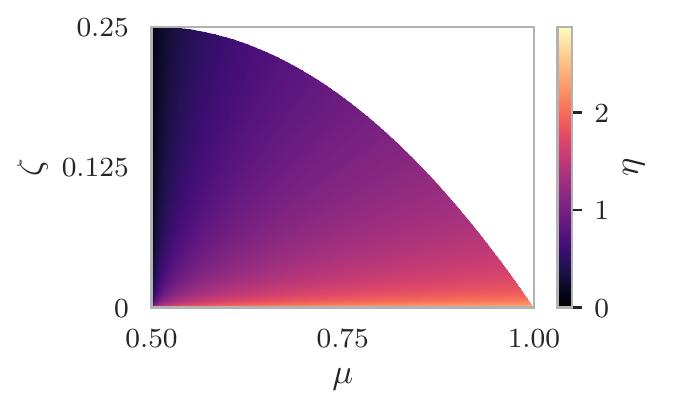}
		\caption{Certifiable radius $\eta$ of the variance-constrained randomized smoothing certificate for anisotropic Gaussian smoothing as a function of the expected value $\mu$ and the variance $\zeta$ of the softmax score.
        If the variance is small, large radii can be certified -- even if the expected softmax score is small.
		}
		\label{fig:variance_parameters}
	\end{minipage}
\end{figure*}

\clearpage

\section{Monte Carlo Randomized Smoothing}\label{section:monte_carlo}
To make predictions and certify robustness, randomized smoothing requires computing certain properties of the distribution of a base model's output, given an input smoothing distribution.
For example, the certificate of \citet{Cohen2019} assumes that the smoothed model $f$ predicts 
the most likely label output by base model $g$, given a smoothing distribution 
$\mathcal{N}(\mathbf{0}, \sigma \cdot \1)$:
$f(\vx) = \mathrm{argmax}_{y \in \sY} \Pr_{\vz \sim \mathcal{N}(\mathbf{0}, \sigma \cdot \1)}\left[g(\vx + \vz) = y\right]$.
To certify the robustness of a smoothed prediction $y = f(\vx)$ for a specific input x, we have to compute the
probability $q = \Pr_{\vz \sim \mathcal{N}(\mathbf{0}, \sigma \cdot \1)}\left[g(\vx + \vz) = y\right]$ to then calculate the maximum certifiable radius $\sigma \Phi^{-1}(q)$ with standard-normal inverse CDF $\Phi^{-1}$.
For complicated models like deep neural networks, computing such properties in closed form is usually not tractable.
Instead, they have to be estimated using Monte Carlo sampling.
The result are predictions and certificates that only hold with a certain probability.

Randomized smoothing with Monte Carlo sampling usually consists of three distinct steps:
\begin{enumerate}
    \item{
    First, a small number of samples $N_1$ from the smoothing distribution are used to generate a candidate prediction $\hat{y}$, e.g.~the most frequently predicted class.
    }
    \item{
    Then, a second round of $N_2$ samples is taken and a statistical test is used to determine whether the candidate prediction is likely to be the actual prediction of smoothed classifier $f$, i.e.~whether $\hat{y} = f(\vx)$ with a certain probability (1 - $\alpha_1$). If this is not the case, one has to abstain from making a prediction (or generate a new candidate prediction).
    }
    \item{
    To certify the robustness of prediction $\hat{y}$, a final round of $N_3$ samples is taken to estimate all quantities needed for the certificate.}
\end{enumerate}
In the case of \citep{Cohen2019}, we need to estimate the probability $q = \Pr_{\vz \sim \mathcal{N}(\mathbf{0}, \sigma \cdot \1)}\left[g(\vx + \vz) = \hat{y}\right]$ to compute the certificate $\sigma \Phi^{-1}(q)$, whose strength is monotonically increasing in $q$. To ensure that the certificate holds with high probability (1 - $\alpha_2)$, we have to compute a probabilistic lower bound $\underline{q} \leq q$.
Instead of performing two separate round of sampling, one can also re-use the same samples for the abstention test and certification.
One particularly simple abstention mechanism is to just compute the Monte Carlo randomized smoothing certificate to determine whether $\forall \vx' \in \left\{\vx\right\} : f(\vx') = \hat{y}$ with high probability, i.e.~whether the prediction is robust to input $\vx'$ that is the result of "perturbing" clean input $\vx$ with zero adversarial budget.

In the following, we discuss how we perform Monte Carlo randomized smoothing for our base certificates, as well as the baselines we use for our experimental evaluation.
In~\autoref{section:multiple_comparisons}, we discuss how we account for the multiple comparisons problem, i.e.~the fact that we are not just trying to probabilistically certify a single prediction, but multiple predictions at once.

\subsection{Monte Carlo Base Certificates for Continuous Data}\label{section:monte_carlo_continuous}
For our base certificates for continuous data, we follow the approach we already discussed in the previous paragraphs (recall that the certificate of \citet{Cohen2019} is a special case of our certificate with Gaussian noise for $l_2$ perturbations).
We are given an input space $\sX^{D_\mathrm{in}}$, label space $\sY$, base model (or -- in the case of multi-output classifiers -- base model output) $g : \sX^{D_\mathrm{in}} \rightarrow \sY$ and smoothing distribution $\Psi(\vx)$ (either multivariate Gaussian or multivariate uniform).
To generate a candidate prediction, we apply the base classifier to $N_1$ samples from the smoothing distribution in order to obtain predictions $\left(y^{(1)},\dots,y^{(N_1)}\right)$ and compute the majority prediction
$\hat{y} = \mathrm{argmax}_{y \in \sY} \left\{ n \mid y^{(n)} = \hat{y} \right\}$.
Recall that for Gaussian and uniform noise, our certificate guarantees $\forall \vx' \in \sH : f(\vx) = \hat{y}$ for
\begin{equation*}
    \sH = \left\{\vx' \in \sX^{D_\mathrm{in}}
    \left| \
    \sum_{d=1}^{D_\mathrm{in}}
        \evw_{d} \cdot |\evx'_d - \evx_d|^p < \eta
        \right.
    \right\},
\end{equation*}
with $\eta = \left(\Phi^{-1}(q) \right)^2$ or $\eta = \Phi^{-1}(q)$ (depending on the distribution), $q = \Pr_{\vz \sim \mathcal{N}(\mathbf{0}, \sigma \cdot \1)}\left[g(\vx + \vz) = \hat{y}\right]$ and standard-normal inverse CDF $\Phi^{-1}$.
To obtain a probabilistic certificate that holds with high probability $1 - \alpha$, we need a probabilistic lower bound on $\eta$.
Both $\eta$ are monotonically increasing in $q$, i.e.~we can bound them by finding a lower bound $\underline{q}$ on $q$.
For this, we take $N_2$ more samples from the smoothing distribution and compute a Clopper-Pearson lower confidence bound \citep{Clopper1934} on $q$.
For abstentions, we use the aforementioned simple mechanism: We test whether $\vx \in \sH$. Given the definition of $\sH$, this is equivalent to testing whether
\begin{gather*}
    0 < \Phi^{-1}(\underline{q})
    \\
    \iff
    \Phi(0) < \underline{q}
    \\
    \iff 0.5 < \underline{q}.
\end{gather*}
If $\underline{q} \leq 0.5$, we abstain.

\subsection{Monte Carlo Variance-Constrained Certification}\label{section:monte_carlo_variance_smoothing}
For variance-constrained certification, we smooth a model's softmax scores.
That is, we are given an input space $\sX^{D_\mathrm{in}}$, label space $\sY$, base model (or -- in the case of multi-output classifiers -- base model output) $g : \sX^{D_\mathrm{in}} \rightarrow \Delta_{|\sY|}$
with $\left(|\sY|-1\right)$-dimensional probability simplex $\Delta_{|\sY|} $
and smoothing distribution $\Psi(\vx)$ (Bernoullli or sparsity-aware noise, in the case of binary data).
To generate a candidate prediction, we apply the base classifier to $N_1$ samples from the smoothing distribution in order to obtain vectors $\left(\vs^{(1)},\dots,\vs^{(N_1)}\right)$ with $\vs \in \Delta_{|\sY|}$, compute the average softmax scores $\overline{\vs} = \frac{1}{N_1} \sum_{n=1}^{N} \vs$ and select the label with the highest score
$\hat{y} = \argmax_{y} \overline{\evs}_y$.

Recall that our certificate guarantees robustness if the optimal value of the following optimization problem is greater than $0.5$:
\begin{gather}\label{eq:monte_carlo_primal}
    \min_{h : \sX \rightarrow \sR} \mathbb{E}_{\vz \sim \Psi(\vx')} \left[h(\vz)\right]\\
    \text{s.t.} \quad
    \mathbb{E}_{\vz \sim \Psi(\vx)}\left[h(\vz)\right] \geq \mu, \quad
            \mathbb{E}_{\vz \sim \Psi(\vx)}\left[\left(h(\vz) - \nu \right)^2\right] \leq \zeta,
\end{gather}
with
$\mu = \mathbb{E}_{\vz \sim \Psi(\vx)}\left[g(\vz)_{\hat{y}}\right]$,
$\zeta = \mathbb{E}_{\vz \sim \Psi(\vx)}\left[\left(g(\vz)_{\hat{y}} - \nu \right)^2\right]$
and a fixed scalar $\nu \in \sR$.
To obtain a probabilistic certificate, we have to compute a probabilistic lower bound on the optimal value of the optimization problem. Because it is a minimization problem, this can be achieved by loosening its constraints, i.e.~computing a probabilistic lower bound $\underline{\mu}$ on $\mu$ and a probabilistic upper bound $\overline{\zeta}$ on $\zeta$.

Like in CDF-smoothing \citep{Kumar2020}, we bound the parameters using CDF-based nonparametric confidence intervals.
Let $F(s) = \Pr_{\vz \sim \Psi(\vx)}\left[g(\vz)_{\hat{y}} \leq s\right]$ be the CDF  of $g_{\hat{y}}(Z)$ with $Z \sim \Psi(\vx)$.
Define $M$ thresholds $\leq 0 \tau_1 \leq \tau_2 \dots, \tau_{M-1} \leq \tau_M \leq 1$ with $\forall m: \tau_m \in [0,1]$. We then take $N_2$ samples $\vx^{(1)}, \dots, \vx^{(N_2)}$ from the smoothing distribution to compute the empirical CDF $\tilde{F}(s) = \sum_{n=1}^{N_2} \mathrm{I}\left[g(\vz^{(n)})_{\hat{y}} \leq s\right]$.
We can then use the Dvoretzky-Keifer-Wolfowitz inequality \citep{Dvoretzky1956} to compute an upper bound $\hat{F}$ and a lower bound $\underline{F}$ on the CDF of $g_{\hat{y}}$:
\begin{equation}
   \underline{F}(s) = \max\left(\tilde{F}(s) - \upsilon, 0\right) \leq F(s) \leq 
   \min\left(\tilde{F}(s) + \upsilon, 1\right) = \overline{F}(s),
\end{equation}
with $\upsilon = \sqrt{\frac{\ln 2 / \alpha}{2 \cdot N_2}}$, which holds with high probability $(1 - \alpha)$.
Using these bounds on the CDF, we can bound $\mu = \mathbb{E}_{\vz \sim \Psi(\vx)}\left[g(\vz)_{\hat{y}}\right]$ as follows \citep{Anderson1969}:
\begin{equation}
    \mu \geq \tau_M  - \tau_1 \overline{F}(\tau_1)+ \sum_{m=1}^{M-1} \left(\tau_{m+1} - \tau_{m}\right) \overline{F}(\tau_m).
\end{equation}
The parameter $\zeta = \mathbb{E}_{\vz \sim \Psi(\vx)}\left[\left(g(\vz)_{\hat{y}} - \nu \right)^2\right]$ can be bounded in a similar fashion.
Define $\xi_0, \dots, \xi_M \in \sR_+$ with:
\begin{equation}
\begin{split}
    \xi_0 & = \max_{\kappa \in [0, \tau_1]}  \left((\kappa - \nu)^2\right) \\
    \xi_M & = \max_{\kappa \in [\tau_M, 1]}  \left((\kappa - \nu)^2\right) \\
    \xi_m & = \max_{\kappa \in [\tau_m, \tau_m+1]}  \left((\kappa - \nu)^2\right) \quad \forall m \in \{1,\dots,M-1\},
\end{split}
\end{equation}
i.e.~compute the maximum squared distance to $\nu$ within each bin $[\tau_m, \tau_{m+1}]$. Then:
\begin{align}
    \zeta & \leq
    \xi_0 F(\tau_1)
    +
    \xi_M \left(1 - F(\tau_M)\right)
    +
    \sum_{m=1}^{M-1} \xi_m \left(F(\tau_{m+1} - F(\tau_m)\right) \\
   & = \xi_M
   +
    \sum_{m=1}^{M-1} \left(\xi_{m-1} -\xi_m\right)F(\tau_{m}) \\
   & \leq 
  \xi_M
   +
   \sum_{m=1}^{M-1} \left(\xi_{m-1} -\xi_m\right) \left(\mathrm{sgn}\left(\xi_{m-1} -\xi_m\right) \overline{F}(\tau_{m})
   +
   \left(1 - \mathrm{sgn}\left(\xi_{m-1} -\xi_m\right) \right) \underline{F}(\tau_{m})\right)
\end{align}
with probability $(1- \alpha)$.
In the first inequality, we bound the expected squared distance from $\nu$ by assuming that the probability mass in each bin $[\tau_m, \tau_{m+1}]$ is concentrated at the farthest point from $\nu$. The equality is a result of reordering the telescope sum.
In the second inequality, we upper-bound the CDF where it is multiplied with a non-negative value and lower-bound it where it is multiplied with a negative value.

With the probabilistic bounds $\underline{\mu}$ and $\overline{\zeta}$ we can now -- in principle -- evaluate our robustness certificate, i.e.~check whether
\begin{equation}
 \sum_{\vz \in \sX} \frac{\pi_{\vx'}(\vz)^2}{\pi_{\vx}(\vz)}    <
    1 + 
    \frac{1}{\overline{\zeta} - \left(\underline{\mu} - \nu\right)^2}
    \left(  \underline{\mu} - \frac{1}{2} \right)^2.
\end{equation}
where the $\pi$ are the probability mass functions of smoothing distributions $\Psi(\vx)$ and $\Psi(\vx')$.
But one crucial detail of~\autoref{theorem_1} underlying the certificate was that it only holds for $\nu \leq \mu$.
To use the method with Monte Carlo sampling, one has to ensure that $\nu \leq \underline{\mu}$ by first computing $\underline{\mu}$ and then choosing some smaller $\nu$.

In our experiments, we use an alternative method that allows us to use arbitrary $\nu$:
From our proof of~\autoref{theorem_1}we know that the dual problem of~\autoref{eq:monte_carlo_primal} is 
\begin{equation}
    \max_{\alpha,\beta \geq 0} \alpha \underline{\mu} - \beta \overline{\zeta}
    - \frac{\alpha^2}{4 \beta} + \frac{\alpha}{2 \beta} - \alpha \nu + \nu
    - \frac{1}{4 \beta}
    \sum_{\vz \in \sX}
    \frac{\pi_{\vx'}(\vz)^2}{\pi_{\vx}(\vz)},
\end{equation}
Instead of trying to find an optimal $\alpha$ (which causes problems in subsequent derivations if $\nu \nleq \underline{\mu}$), we can simply choose $\alpha=1$.
By duality, the result is still a lower bound on the primal problem, i.e.~the certificate remains valid.
The dual problem becomes
\begin{equation}
    \max_{\beta \geq 0}  \underline{\mu} - \beta \overline{\zeta}
    + \frac{1}{4 \beta} 
    - \frac{1}{4 \beta}
    \sum_{\vz \in \sX}
    \frac{\pi_{\vx'}(\vz)^2}{\pi_{\vx}(\vz)}.
\end{equation}
The problem is concave in $\beta$ (because the expected likelihood ratio is $\geq 1$). Finding the optimal $\beta$, comparing the result to $0.5$ and solving for the expected likelihood ratio, shows that a prediction is robust if 
\begin{equation}
 \sum_{\vz \in \sX} \frac{\pi_{\vx'}(\vz)^2}{\pi_{\vx}(\vz)}    <
    1 + 
    \frac{1}{\overline{\zeta}}
    \left(  \underline{\mu} - \frac{1}{2} \right)^2.
\end{equation}

For our abstention mechanism, like in the previous section, we compute the certificate $\sH$ and then test whether $\vx \in \sH$. In the case of Bernoulli smoothing and sparsity-aware smoothing), this corresponds to testing whether
\begin{align}
    1 & < \ln\left(1 + \frac{1}{\overline{\zeta}} \left(\underline{\mu} - \frac{1}{2} \right) \right)\\
    \iff
    \underline{\mu} & > \frac{1}{2}.
\end{align}

\subsection{Monte Carlo Center Smoothing}
While we can not use center smoothing as a base certificate, we benchmark our method against it during our experimental evaluation.
The generation of candidate predictions, the abstention mechanism and the certificate are explained in \citep{Kumar2021}.
The authors allow multiple options for generating candidate predictions. We use the "$\beta$ minimum enclosing ball" with $\beta=2$ that is based on pair-wise distance calculations.

\subsection{Multiple Comparisons Problem}\label{section:multiple_comparisons}
The first step of our collective certificate is to compute one base certificate for each of the $D_\mathrm{out}$ predictions of the multi-output classifier.
With Monte Carlo randomized smoothing, we want all of these probabilistic certificates to simultaneously hold with a high probability $(1-\alpha)$.
But as the number of certificates increases, so does the probability of at least one of them being invalid.
To account for this \textit{multiple comparisons problem}, we use Bonferroni \citep{Bonferroni1936} correction, i.e.~compute each Monte Carlo certificate such that it holds with probability $(1 - \frac{\alpha}{n})$.

For base certificates that only depend on $q_n = \Pr_{\vz \sim \Psi^{(n)}} \left[g_n(\vz) = \hat{\evy}_n\right]$, i.e. the probability of the base classifier predicting a particular label $\hat{\evy}_n$ under the smoothing distribution, 
one can also use the strictly better Holm correction \citep{Holm1979}. This includes our Gaussian and uniform smoothing certificates for continuous data.
Holm correction is a procedure than can be used to correct for the multiple comparisons problem when performing multiple arbitrary hypothesis tests.
Given $N$ hypotheses, their $p$-values are ordered in ascending order $p_1, \dots, p_N$.
Starting at $i=1$, the $i$'th hypothesis is rejected if $p_i < \frac{\alpha}{N + 1 - i}$, until one reaches an $i$ such that $p_i \geq \frac{\alpha}{N + 1 - i}$.

\citet{Fischer2021} proposed to use Holm correction as part of their procedure for certifying that all (non-abstaining) predictions of an image segmentation model are robust to adversarial perturbations.
In the following, we first summarize their approach and then discuss how Holm correction can be used for certifying our notion of collective robustness, i.e.~certifying the number of robust predictions.
As in~\autoref{section:monte_carlo_continuous}, the goal is to obtain a lower bound $\underline{q}_n$ on $q_n = \Pr_{\vz \sim \Psi^{(n)}} \left[g_n(\vz) = \hat{\evy}_n\right]$ for each of the $D_\mathrm{out}$ classifier outputs.
Assume we take $N_2$ samples $\vz^{(1)},\dots,\vz^{(N_2)}$ from the smoothing distribution.
Let $\nu_n = \sum_{i=1}^{N_2} \mathrm{I}\left[g_n(\vz^{(i)}) =  \hat{\evy}_n\right]$ 
and let $\pi : \left\{1,\dots,D_\mathrm{out}\right\} \rightarrow \left\{1,\dots,D_\mathrm{out}\right\}$ be a bijection that orders the $\nu_n$ in descending order, i.e.~$\nu_{\pi(1)} \geq \nu_{\pi(2)} \dots \geq \nu_{\pi(D_\mathrm{out})}$.
Instead of using Clopper-Pearson confidence intervals to obtain tight lower bounds on the $q_n$, \citet{Fischer2021} define a threshold $\tau \in [0.5,1)$ and use Binomial tests to determine for which $n$ the bound $\tau \leq q_n$ holds with high-probability.
Let $\mathrm{BinP}\left(\nu_n, N_2, \leq, \tau \right)$ be the p-value of the one-sided binomial test, which is monotonically decreasing in $\nu_n$.
Following the Holm correction scheme, the authors test whether
\begin{equation}
    \mathrm{BinP}\left(\nu_{\pi(k)}, N_2, \leq, \tau \right) < \frac{\alpha}{D_\mathrm{out} + 1 - k}
\end{equation}
for $k = 1, \dots, D_\mathrm{out}$ until reaching a $k^{*}$ for which the null-hypothesis can no longer be rejected, i.e. the p-value is g.e.q.\  $\frac{\alpha}{D_\mathrm{out} + 1 - k^{*}}$.
They then know that with probability $1 - \alpha$, the bound $\tau \leq q_n$ holds for all $n \in \{\pi(k) \mid k \in \{1,\dots, k^{*}\}$.
For these outputs, they use the lower bound $\tau$ to compute robustness certificates.
They abstain with all other outputs.

This approach is sensible when one is concerned with the least robust prediction from a set of predictions.
But our collective certificate benefits from having tight robustness guarantees for each of the individual predictions.
Holm correction can be used with arbitrary hypothesis tests. For instance, we can use a different threshold $\tau_n$ per output $g_n$,
i.e. test whether
\begin{equation}
    \mathrm{BinP}\left(\nu_{\pi(k)}, N_2, \leq, \tau_{\pi(k)} \right) < \frac{\alpha}{D_\mathrm{out} + 1 - k}
\end{equation}
for $k = 1, \dots, D_\mathrm{out}$.
In particular, we can use
\begin{equation}\label{eq:clopper_pearson}
    \tau_n = \sup_t \ \text{s.t.} \ \mathrm{BinP}\left(\nu_n, N_2, \leq, t \right) < \frac{\alpha}{D_\mathrm{out} + 1 - \pi^{-1}(n)},
\end{equation}
i.e. choose the largest threshold such that the null hypothesis can still be rejected.
\autoref{eq:clopper_pearson} is the lower Clopper-Pearson confidence bound with significance $\frac{\alpha}{D_\mathrm{out} + 1 - \pi^{-1}(n)}$.
This means that, instead of performing hypothesis tests, we can obtain probabilistic lower bounds $\underline{q}_n \leq q_n$ by computing  Clopper-Pearson confidence bounds with significance parameters $\frac{\alpha}{D_\mathrm{out}}, \dots, \frac{\alpha}{1}$.
The $\underline{q}_n$ can then be used to compute the base certificates.
Due to the definition of the $\tau_n$, all of the null hypotheses are rejected, i.e. we obtain valid probabilistic lower bounds on all $q_n$. We can thus use the abstention mechanism from \autoref{section:monte_carlo_continuous}, i.e. only abstain if $\underline{q}_n \leq 0.5$.

\clearpage

\section{Comparison to the Collective Certificate of Fischer et al. (2021)}\label{section:fischer_comparison}
Our collective certificate based on localized smoothing is designed to bound the number of simultaneously robust predictions.
\citet{Fischer2021} designed SegCertify to determine whether all predictions are simultaneously robust.
As discussed in~\autoref{section:recipe}, their work is based on the na\"ive collective certification approach applied to isotropic Gaussian smoothing:
They first certify each output independently, then count the number of certifiably robust predictions for a specific adversarial budget and then test whether the number of certifiably robust predictions equals the overall number of predictions.
To obtain better guarantees in practical scenarios, they further propose to
\begin{itemize}
    \item use Holm correction to address the multiple comparisons problem (see~\autoref{section:multiple_comparisons}),
    \item Abstain at a higher rate to avoid ``bad componets'', i.e.\ predictions $y_n$ that have a low consistency $q_n = \Pr_{\vz \sim \mathcal{N}(\vx,\sigma)}\left[g(\vz) = y\right]$ and thus very small certifiable radii.
\end{itemize}
A more technical summary of their method can be found in~\autoref{section:multiple_comparisons}.

In the following, we discuss why our certificate can always offer guarantees that are at least as strong as SegCertify,
both for our notion of collective robustness (number of robust predictions) and their notion of collective robustness (robustness of all predictions).
In short, isotropic smoothing is a special case of localized smoothing and Holm correction can also be used for our base certificates.
Before proceedings, please read the discussion on Monte Carlo base certificates and Clopper-Pearson confidence intervals in~\autoref{section:monte_carlo_continuous} and the multiple comparisons problem in~\autoref{section:multiple_comparisons}.

A direct consequence of the results in \autoref{section:multiple_comparisons} is that using Clopper-Pearson confidence intervals and Holm correction will yield stronger per-prediction robustness guarantees and lower abstention rates than the method of \citet{Fischer2021}.
The Clopper-Pearson-based method only abstains if one cannot guarantee that $q_n > 0.5$ with high probability, while their method abstains if
one cannot guarantee that $q_n \geq \tau$ with $\tau \geq 0.5$ (or specific other predictions abstain).
For all non-abstaining predictions, the Clopper-Pearson-based certificate will be at least as strong as the one obtained using a single threshold $\tau$, as it computes the tightest bound for which the null hypothesis can still be rejected (see~\autoref{eq:clopper_pearson}).

Consequently, 
when certifying our notion of collective robustness, i.e.~determining  \textit{the number} of robust predictions given adversarial budget $\epsilon$,
a na\"ive collective robustness certificate (i.e. counting the number of predictions whose robustness are guaranteed by the base certificates) based on Clopper-Pearson bounds will also be stronger than the method of \citet{Fischer2021}.
It should however be noted that their method could potentially be used with other methods of family-wise error rate correction, although they state that ``these methods do not scale to realistic segmentation
problems'' and do not discuss any further details.

Conversely, when certifying their notion of collective robustness, i.e.~determining whether \textit{all} non-abstaining predictions are robust given adversarial budget $\epsilon$,
the certificate based on Clopper-Pearson confidence bounds is also at least as strong as that of \citet{Fischer2021}.
To certify their notion of robustness, they iterate over all predictions and determine whether all non-abstaining predictions are certifiably robust, given $\epsilon$.
Naturally, as the Clopper-Pearson-based certificates are stronger, any prediction that is robust according to \citep{Fischer2021} is also robust acccording to the Clopper-Pearson-based certificates.
The only difference is that, for $\tau > 0.5$, their method will have more abstaining predictions.
But, due to the direct correspondence of Clopper-Pearson confidence bounds and Binomial tests, we can modify our abstention mechanism to obtain exactly the same set of abstaining predictions: We simply have to use $\underline{q}_n \leq \tau$ instead of  $\underline{q_n} \leq 0.5$ as our criterion.

Finally, it should be noted that our proposed collective certificate based on linear programming is at least as strong as the na\"ive collective certificate (see~\hyperref[eq:recipe]{Eq. 1.1} and \hyperref[eq:recipe]{Eq. 1.2} in \autoref{section:recipe}). Thus, letting the set of targeted predictions $\sT$ be the set of all non-abstaining predictions and checking whether the collective certificate guarantees robustness for all of $\sT$ will also result in a certificate that is at least as strong as that of \citet{Fischer2021} in their setting.

\clearpage

\section{Comparison to the Collective Certificate of Schuchardt et al. (2021)}\label{section:schuchardt_comparison}
In the following, we first present the collective certificate for binary graph-structured data proposed by \citet{Schuchardt2021} (see~\autoref{section:schuchardt_summary}.
We then show that, when using sparsity-aware smoothing distributions \citep{Bojchevski2020} -- the family of smoothing distributions used both in our work and that of \citet{Schuchardt2021} -- our certificate subsumes their certificate. That is, our collective robustness certificate based on localized randomized smoothing can provide the same robustness guarantees (see~\autoref{section:subsumption_proof}).

\subsection{The Collective Certificate}\label{section:schuchardt_summary}
Their certificate assumes the input space to be $\sG = \{0,1\}^{N \times D} \times \{0,1\}^{N \times N}$ -- the set of undirected attributed graphs with $N$ nodes and $D$ attributes per node.
The model is assumed to be a multi-output classifier $f : \sG \rightarrow \sY^{N}$ that assigns a label from label set $\sY$ to each of the nodes.
Given an input graph $\gG = (\mX, \mA)$ and a corresponding prediction $\vy = f(G)$, they want to certify collective robustness to a set of perturbed graphs $\sB \subseteq \sG$.
The perturbation model $\sB$ is characterized by four scalar parameters $r_\mX^{+}, r_\mX^{-}, r_\mA^{+}, r_\mA^{+} \in \sN_0$, specifying the number of bits the adversary is allowed to add ($0 \rightarrow 1$) and delete ($1 \rightarrow 0$) in the attribute and adjacency matrix, respectively.
It can also be extended to feature additional constraints (e.g.~per-node budgets). We discuss how these can be integrated after showing our main result.
A formal definition of the perturbation model can be found in Section B of \citep{Schuchardt2021}.

The goal of their work is to certify collective robustness for a set of targeted nodes $\sT \subseteq \{1,\dots,  N\}$, i.e.~compute a lower bound on
\begin{equation}
    \min_{G' \in \sB} \sum_{n \in \sT} \mathrm{I}\left[f_n(G') = \evy_n\right].
\end{equation}
Their approach to obtaining this lower-bound shares the same high-level idea as ours (see~\autoref{section:recipe}): Combining per-prediction base certificates and leveraging some notion of locality.
But while our method uses localized randomized smoothing, i.e.~smoothing different outputs with different non-i.i.d.\ smoothing distributions to obtain base certificates that encode locality, their method uses a-priori knowledge about the strict locality of the classifier $f$.
A model is strictly local if each of its outputs $f_n$ only operates on a well-defined subset of the input data.
To encode this strict locality, \citet{Schuchardt2021} associate each output $f_n$ with an indicator vector $\vpsi^{(n)}$ and an indicator matrix $\mPsi^{(n)}$ that fulfill
\begin{equation}\label{eq:strict_locality}
    \begin{split}
    \sum_{m=1}^N \sum_{d=1}^D \evpsi^{(n)}_m \mathrm{I}\left[\emX_{m,d} \neq \emX'_{i,j}\right]
    +
    \sum_{i=1}^N \sum_{j=1}^N \emPsi^{(n)}_m \mathrm{I}\left[\emA_{m,d} \neq \emA'_{i,j}\right]
    = 0
    \\
    \implies f_n(\mX, \mA) = f_n(\mX', \mA').
    \end{split}
\end{equation}
for any perturbed graph $\gG' = \left(\mX', \mA'\right)$. \autoref{eq:strict_locality} expresses that the prediction of output $f_n$ remains unchanged if all inputs in its receptive field remain unchanged. Conversely, it expresses that perturbations outside the receptive field can be ignored.
Unlike in our work, \citet{Schuchardt2021} describe their base certificates as sets in adversarial budget space. That is, some certification procedure is applied to each output $f_n$ to obtain a set
\begin{equation}
    \sK^{(n)} \subseteq [r_\mX^{+}] \times [r_\mX^{-}] \times [r_\mA^{+}] \times [r_\mX^{-}]
\end{equation}
with $[k] = \{0, \dots, k\}$.
If
$\begin{bmatrix} c_\mX^{+} &  c_\mX^{-} & c_\mA^{+} &  c_\mA^{-}\end{bmatrix}^T \in \sK^{(n)}$, then prediction $y_n$ is robust to any perturbed input with exactly $c_\mX^{+}$ attribute additions,  $c_\mX^{-}$ attribute deletions, $c_\mA^{+}$ edge additions and $c_\mA^{-}$ edge deletions.
A more detailed explanation can be found in Section 3 of \citep{Schuchardt2021}.
Note that the base certificates only depend on the number of perturbations, not their location in the input.
Only by combining them using the receptive field indicators from \autoref{eq:strict_locality} can one obtain a collective certificate that is better than the na\"ive collective certificate (i.e.~counting how many predictions are certifiably robust to the collective threat model). The resulting collective certificate is
\begin{gather}\label{eq:basecert_eval_schuchardt}
    \min_{\vb^{+},\vb^{+},\mB^{+},\mB^{-}}
    \sum_{n \in \sT} \mathrm{I}\left[
    \begin{bmatrix}
    \left(\vpsi^{(n)}\right)^T \vb_\mX^{+}
    &
    \left(\vpsi^{(n)}\right)^T \vb_\mX^{-}
    &
    \sum_{i,j}\emPsi^{(n)}_{i,j} \mB_{i,j}^{+}
    &
    \sum_{i,j}\emPsi^{(n)}_{i,j} \mB_{i,j}^{-}
    \end{bmatrix}^T
    \in \sK^{(n)}
    \right]
    \\\label{eq:budget_constraints_schuchardt}
    \text{s.t.} \quad
    \sum_{m=1}^N \evb^{+}_m \leq r_\mX^{+},
    \quad
    \sum_{m=1}^N \evb^{-}_m \leq r_\mX^{-},
    \quad
    \sum_{i=1}^N \sum_{j=1}^N \emB^{+}_{i,j} \leq r_\mA^{+},
    \quad
    \sum_{i=1}^N \sum_{j=1}^N \emB^{-}_{i,j} \leq r_\mA^{-},
    \\
    \label{eq:allocation_variables_schuchardt}
    \vb^{+}, \vb^{-} \in \sN_0^{N} \quad \mB^{+}, \mB^{-} \in \sN_0^{N \times N}.
\end{gather}
The variables defined in~\autoref{eq:allocation_variables_schuchardt} model how the adversary allocates their adversarial budget, i.e.~how many attributes are perturbed per node and which edges are modified.
\autoref{eq:budget_constraints_schuchardt} ensures that this allocation in compliant with the collective threat model.
Finally, in \autoref{eq:basecert_eval_schuchardt} the indicator vector and matrix $\vpsi^{(n)}$ and $\mPsi^{(n)}$ are used to mask out any allocated perturbation budget that falls outside the receptive field of $f_n$ before evaluating its base certificate.

To solve the optimization problem, \citet{Schuchardt2021} replace each of the indicator functions with binary variables and include additional constraints to ensure that they have value $1$ i.f.f.\ the indicator function would have value $1$.
To do so, they define one linear constraint per point separating the set of certifiable budgets $\sK^{(n)}$ from its complement $\overline{\sK}^{(n)}$ in adversarial budget space (the "Pareto front" discussed in Section 3 of \citep{Schuchardt2021}).

From the above explanation, the main drawbacks of this collective certificate compared to our localized randomized smoothing approach and corresponding collective certificate should be clear.
Firstly, if the classifier $f$ is not strictly local, i.e.~the receptive field indicators $\vpsi$ and $\mPsi$ only have non-zero entries, then all base certificates are evaluated using the entire collective adversarial budget. It thus degenerates to the na\"ive collective certificate.
Secondly, even if the model is strictly local, each of the outputs may assign varying levels of importance to different parts of its receptive field. Their method is incapable of capturing this additional soft locality.
Finally, their means of evaluating the base certificates may involve evaluating a large number of linear constraints. Our method, on the other hand, only requires a single constraint per prediction. Our collective certificate can thus be more efficiently computed.

\subsection{Proof of Subsumption}\label{section:subsumption_proof}
In the following, we show that any robustness certificate obtained by using the collective certificate of \citet{Schuchardt2021} with sparsity-aware randomized smoothing base certificates can also be obtained by using our proposed collective certificate with an appropriately parameterized localized smoothing distribution.
The fundamental idea is that, for randomly smoothed models, completely randomizing all input dimensions outside the receptive field is equivalent to masking out any perturbations outside the receptive field.

First, we derive the certificate of \citet{Schuchardt2021} for predictions obtained via sparsity-aware smoothing.
\citet{Schuchardt2021} require base certificates that guarantee robustness when $\begin{bmatrix} c_\mX^{+} &  c_\mX^{-} & c_\mA^{+} &  c_\mA^{-}\end{bmatrix}^T \in \sK^{(n)}$, where the $c$ indicate the number of added and deleted attribute and adjacency bits.
That is, the certificates must only depend on the number of perturbations, not on their location.
To achieve this, all entries of the attribute matrix and all entries of the adjacency matrix, respectively, must share the same distribution.
For the attribute matrix, they define scalar distribution parameters $p_\mX^{+}, p_\mA^{-} \in [0,1]$.
Given  attribute matrix $\mX \in \{0,1\}^{N \times D}$, they then sample random attribute matrices $\mZ_\mX$ that
are distributed according to sparsity-aware smoothing distribution $\mathcal{S}\left(\mX, \1 \cdot p_\mX^{+}  , \1 \cdot p_\mX^{-} \right)$
(see~\autoref{section:appendix_sparsity_cert}), i.e.~ 
\begin{gather*}
    \Pr[(\emZ_\mX)_{m,d} =0 ] = \left(1 - p_\mX^{+} \right)^{1 - \emX_{m,d}} \cdot \left( p_\mX^{-} \right)^{\emX_{m,d}}, \\
    \Pr[(\emZ_\mX)_{m,d} =1 ] = \left(p_\mX^{+} \right)^{1 -\emX_{m,d}} \cdot \left( 1 - p_\mX^{-} \right)^{\emX_{m,d}}.
\end{gather*}
Given input adjacency matrix $\mA$, random adjacency matrices $\mZ_\mA$ are sampled from the distribution
$\mathcal{S}\left(\mA, \1 \cdot p_\mA^{+}  , \1 \cdot p_\mA^{-} \right)$.
Applying \cref{prop-appendix-sparsity-aware-var-smoothing} (to the flattened and concatenated attribute and adjacency matrices) shows that smoothed prediction $\evy_n = f_n(\mX, \mA)$ is robust to the perturbed graph $\left(\mX', \mA'\right)$ if
\begin{equation}\label{eq:sparse_iid_cert_1}
\begin{split}
        &\sum_{m=1}^{N}\sum_{d=1}^{D}
            \gamma_\mX^{+} \cdot \mathrm{I}\left[\emX_{m,d} = 0 \neq \emX'_{m,d}\right]  + 
            \gamma_\mX^{-} \cdot \mathrm{I}\left[\emX_{m,d} = 1 \neq \emX'_{m,d}\right]
            \\
            +
            &
            \sum_{i=1}^{N}\sum_{i=1}^{N}
            \gamma_\mA^{+} \cdot \mathrm{I}\left[\emA_{i,j} = 0 \neq \emA'_{i,j}\right]  + 
            \gamma_\mA^{-} \cdot \mathrm{I}\left[\emA_{i,j} = 1 \neq \emA'_{i,j}\right]
                        \\
           <
           & \ 
            \eta^{(n)}
\end{split}
\end{equation}
with
$\evgamma_\mX^+ = \ln\left( \frac{\left(p_\mX^{-}\right)^2}{1 - p_\mX^{+}}
     +
     \frac{\left(1 - p_\mX^{-}\right)^2}{p_\mX^{+}}
\right)$,
$\evgamma_\mX^- = \ln\left( \frac{\left(1 - p_\mX^{+}\right)^2}{p_\mX^{-}}
+ \frac{\left(p_\mX^{+}\right)^2}{1 - p_\mX^{-}}.
\right)$,
$\evgamma_\mA^+ = \ln\left( \frac{\left(p_\mA^{-}\right)^2}{1 - p_\mA^{+}}
     +
     \frac{\left(1 - p_\mA^{-}\right)^2}{p_\mA^{+}}
\right)$,
$\evgamma_\mA^- = \ln\left( \frac{\left(1 - p_\mA^{+}\right)^2}{p_\mA^{-}}
+ \frac{\left(p_\mA^{+}\right)^2}{1 - p_\mA^{-}}.
\right)$
and
$\eta^{(n)} = \ln\left(1 + \frac{1}{{\sigma^{(n)}}^2}\left(\mu^{(n)} - \frac{1}{2}\right)^2\right)$,
where $\mu^{(n)}$ is the mean and $\sigma^{(n)}$ is the variance of the base classifier's output distribution, given the input smoothing distribution.
Since the indicator functions for each perturbation type in~\autoref{eq:sparse_iid_cert_1}  share the same weights,  \autoref{eq:sparse_iid_cert_1} can be rewritten as
\begin{equation}\label{eq:sparse_iid_compact}
    \gamma_\mX^{+} c_\mX^{+}  +  \gamma_\mX^{-} c_\mX^{-}
    +  \gamma_\mA^{+} c_\mA^{+} + \gamma_\mA^{-} c_\mA^{-}  \leq \eta^{(n)},
\end{equation}
where $c_\mX^{+},c_\mX^{-},c_\mA^{+},c_\mA^{-}$ are the overall number of added and deleted attribute and adjacency bits, respectively.
\autoref{eq:sparse_iid_compact} matches the notion of  base certificates defined by \citet{Schuchardt2021},
i.e.~it corresponds to a set $\sK^{(n)}$ in adversarial budget space for which we provably know that
prediction $y_n$ is certifiably robust if $\begin{bmatrix} c_\mX^{+} &  c_\mX^{-} & c_\mA^{+} &  c_\mA^{-}\end{bmatrix}^T \in \sK^{(n)}$.
When we insert the base certificate~\autoref{eq:sparse_iid_compact} into objective function~\autoref{eq:basecert_eval_schuchardt}, the collective certificate of \citet{Schuchardt2021} becomes equivalent to
\begin{align}\label{eq:schuchardt_final_1}
    \min_{\vb^{+},\vb^{+},\mB^{+},\mB^{-}}
    \sum_{n \in \sT} \mathrm{I}
    \Biggl[
    \gamma_\mX^{+} \left(\vpsi^{(n)}\right)^T \vb_\mX^{+}  + \gamma_\mX^{-} \left(\vpsi^{(n)}\right)^T \vb_\mX^{-} 
    \notag\\
    + \gamma_\mA^{+} \sum_{i,j}\emPsi^{(n)}_{i,j} \mB_{i,j}^{+}  + \sum_{i,j} \gamma_\mA^{-} \emPsi^{(n)}_{i,j} \mB_{i,j}^{-} \leq \eta^{(n)}
    \Biggr]
    \\
    \label{eq:schuchardt_final_2}
    \text{s.t.} \quad
    \sum_{m=1}^N \evb^{+}_m \leq r_\mX^{+},
    \quad
    \sum_{m=1}^N \evb^{-}_m \leq r_\mX^{-},
    \quad
    \sum_{i=1}^N \sum_{j=1}^N {\emB^{+}}_{i,j} \leq r_\mA^{+},
    \quad
    \sum_{i=1}^N \sum_{j=1}^N {\emB^{-}}_{i,j} \leq r_\mA^{-},
    \\
    \label{eq:schuchardt_final_3}
    \vb^{+}, \vb^{-} \in \sN_0^{N} \quad \mB^{+}, \mB^{-} \in \sN_0^{N \times N}.
\end{align}

Next, we show that obtaining base certificates through localized randomized smoothing with appropriately chosen parameters 
and using these base certificates within our proposed collective certificate (see~\autoref{theorem:collective_lp}) will result in the same optimization problem.
Instead of using the same smoothing distribution for all outputs, we use different distribution parameters for each one.
For the $n$'th output, we sample random attributes matrices from distribution
$\mathcal{S}
\left(\mX, {\mTheta_\mX^{+}}^{(n)},  {\mTheta_\mX^{-}}^{(n)} \right)$
with  ${\mTheta_\mX^{+}}^{(n)}, {\mTheta_\mX^{-}}^{(n)} \in [0,1]^{N \times D}$.
Note that, in order to avoid having to index flattened vectors, we overload the definition of sparsity-aware smoothing to allow for matrix-valued parameters.
For example, the value ${\emTheta_\mX^{+}}^{(n)}_{n,d}$ indicates the probability of flipping the value of input attribute $\emX_{n,d}$ from $0$ to $1$ and the value ${\emTheta_\mX^{-}}^{(n)}_{n,d}$ indicates the probability of flipping the value of input attribute $\emX_{n,d}$ from $1$ to $0$.
We choose the following values for these parameters:
\begin{align}
    \label{eq:subsumption_distribution_1}
     & {\emTheta_\mX^{+}}^{(n)}_{m,d} = \evpsi^{(n)}_m \cdot p_\mX^{+} + \left(1 - \evpsi^{(n)}_m \right) \cdot 0.5,
     \\
     \label{eq:subsumption_distribution_2}
    & {\emTheta_\mX^{-}}^{(n)}_{m,d} = \evpsi^{(n)}_m \cdot p_\mX^{-} + \left(1 - \evpsi^{(n)}_m \right) \cdot 0.5,
\end{align}
where $\vpsi^{(n)}$ is the receptive field indicator vector defined in~\autoref{eq:strict_locality}
and $p_\mX^{+},\cdot p_\mX^{-} \in [0,1]$ are the same flip probabilities we used for the certificate of \citet{Schuchardt2021}.
Due to this parameterization, attribute bits inside the receptive field are randomized using the same distribution as in the certificate of \citet{Schuchardt2021}, while attribute bits outside are set to either $0$ or $1$ with equal probability.
Similarly, we sample random adjacency matrices from distribution 
$\mathcal{S}
\left(\mA, {\mTheta_\mA^{+}}^{(n)},  {\mTheta_\mA^{-}}^{(n)} \right)$
with  ${\mTheta_\mA^{+}}^{(n)}, {\mTheta_\mA^{-}}^{(n)} \in [0,1]^{N \times D}$
and
\begin{align}
\label{eq:subsumption_distribution_3}
     & {\emTheta_\mA^{+}}^{(n)}_{i,j} = \emPsi^{(n)}_{i,j} \cdot p_\mA^{+} + \left(1 - \emPsi^{(n)}_{i,j} \right) \cdot 0.5,
     \\
     \label{eq:subsumption_distribution_4}
    & {\emTheta_\mA^{-}}^{(n)}_{u,j} = \emPsi^{(n)}_{i,j} \cdot p_\mA^{-} + \left(1 - \emPsi^{(n)}_{i,j} \right) \cdot 0.5,
\end{align}
where $\mPsi^{(n)}$ is the receptive field indicator matrix defined in~\autoref{eq:strict_locality}.
Note that, since we only alter the distribution of bits outside the receptive field, the smoothed prediction $y_n = f_n(\mX, \mA)$ will be the same as the one obtained via the smoothing distribution used by \citet{Schuchardt2021}.
Applying~\cref{prop-appendix-sparsity-aware-var-smoothing} (to the flattened and concatenated attribute and adjacency matrices) shows that smoothed prediction $y_n = f_n(\mX, \mA)$ is robust to the perturbed graph $\left(\mX', \mA'\right)$ if
\begin{equation}
\begin{split}\label{eq:subsumption_base_cert_1}
        &\sum_{m=1}^{N}\sum_{d=1}^{D}
            {\tau_\mX^{+}}_{m,d} \cdot \mathrm{I}\left[\emX_{m,d} = 0 \neq \emX'_{m,d}\right]  + 
            {\tau_\mX^{-}}_{m,d} \cdot \mathrm{I}\left[\emX_{m,d} = 1 \neq \emX'_{m,d}\right]
            \\
            +
            &
           \sum_{i=1}^{N}\sum_{j=1}^{N}
            {\tau_\mA^{+}}_{i,j} \cdot \mathrm{I}\left[\emA_{i,j} = 0 \neq \emA'_{i,j}\right]  + 
           {\tau_\mA^{-}}_{i,j} \cdot \mathrm{I}\left[\emA_{i,j} = 1 \neq \emA'_{i,j}\right]
                        \\
           <
           & \ 
            \eta^{(n)}.
\end{split}
\end{equation}
Because we only changed the distribution outside the receptive field,
the scalar $\eta^{(n)}$, which depends on the output distribution's mean and variance $\mu$ and $\sigma$ will be the same as the one obtained via the smoothing scheme used by \citet{Schuchardt2021} et al.
Due to \cref{prop-appendix-sparsity-aware-var-smoothing} and the definition of our smoothing distribution parameters in
\cref{eq:subsumption_distribution_1,eq:subsumption_distribution_2,eq:subsumption_distribution_3,eq:subsumption_distribution_4}
, the scalars ${\tau_\mX^{+}}_{m,d},{\tau_\mX^{-}}_{m,d},{\tau_\mA^{+}}_{i,j},{\tau_\mA^{-}}_{i,j}$ have the following values:
\begin{align}
    {\tau_\mX^{+}}_{m,d} = \evpsi^{(n)}_m \cdot \gamma_\mX^{+} + \left(1 - \evpsi^{(n)}_m \right) \cdot 2 \cdot 
    \ln\left( \frac{\left(1 - 0.5\right)^2}{0.5}
    + \frac{0.5^2}{1 - 0.5}
    \right)
    \\
    {\tau_\mX^{-}}_{m,d} = \evpsi^{(n)}_m \cdot \gamma_\mX^{-} + \left(1 - \evpsi^{(n)}_m \right) \cdot 2 \cdot 
    \ln\left( \frac{\left(1 - 0.5\right)^2}{0.5}
    + \frac{0.5^2}{1 - 0.5}
    \right)
    \\
    {\tau_\mA^{-}}_{i,j} =\emPsi^{(n)}_{i,j} \cdot \gamma_\mA^{+} + \left(1 - \emPsi^{(n)}_{i,j} \right) \cdot 2 \cdot 
    \ln\left( \frac{\left(1 - 0.5\right)^2}{0.5}
    + \frac{0.5^2}{1 - 0.5}
    \right)
    \\
    {\tau_\mA^{-}}_{i,j} = \emPsi^{(n)}_{i,j} \cdot \gamma_\mA^{-} + \left(1 - \emPsi^{(n)}_{i,j} \right) \cdot 2 \cdot 
    \ln\left( \frac{\left(1 - 0.5\right)^2}{0.5}
    + \frac{0.5^2}{1 - 0.5}
    \right),
\end{align}
where the $\gamma$ are the same weights as those of the base certificate \autoref{eq:sparse_iid_cert_1} of \citet{Schuchardt2021}.
Inserting the above values of $\tau$ into the base certificate~\autoref{eq:subsumption_base_cert_1} and using the fact that 
$\ln\left( \frac{\left(1 - 0.5\right)^2}{0.5} + \frac{0.5^2}{1 - 0.5} \right) = \ln(1) = 0$ results in
\begin{equation}
\begin{split}\label{eq:subsumption_base_cert_2}
        &\sum_{m=1}^{N}\sum_{d=1}^{D}
             \evpsi^{(n)}_m \cdot \gamma_\mX^{+} \cdot \mathrm{I}\left[\emX_{m,d} = 0 \neq \emX'_{m,d}\right]  + 
             \evpsi^{(n)}_m \cdot \gamma_\mX^{-} \cdot \mathrm{I}\left[\emX_{m,d} = 1 \neq \emX'_{m,d}\right]
            \\
            +
            &
           \sum_{i=1}^{N}\sum_{j=1}^{N}
             \emPsi^{(n)}_{i,j} \cdot \gamma_\mA^{-} \cdot \mathrm{I}\left[\emA_{i,j} = 0 \neq \emA'_{i,j}\right]  + 
            \emPsi^{(n)}_{i,j} \cdot \gamma_\mA^{-} \cdot \mathrm{I}\left[\emA_{i,j} = 1 \neq \emA'_{i,j}\right]
                        \\
           <
           & \ 
            \eta^{(n)}.
\end{split}
\end{equation}
While our collective certificate derived in~\autoref{section:localized_randomized_smoothing} only considers one perturbation type, we have already discussed how to certify robustness to perturbation models with multiple perturbation types in \autoref{section:appendix_sparsity_cert}:
We use a different budget variable per input dimension and perturbation type.
Furthermore, the attribute bits of each node share the same noise level. Therefore, we can use the method discussed in \autoref{section:sharing_input_noise}, i.e.~use a single budget variable per node instead of using one per node and attribute.
Modelling our collective problem in this way, using \autoref{eq:subsumption_base_cert_2} as our base certificates
and rewriting the first two sums using inner products 
results in the optimization problem
\begin{align}
    \min_{\vb^{+},\vb^{+},\mB^{+},\mB^{-}}
    \sum_{n \in \sT} \mathrm{I}
    \Biggl[
    \gamma_\mX^{+} \left(\vpsi^{(n)}\right)^T \vb_\mX^{+}  + \gamma_\mX^{-} \left(\vpsi^{(n)}\right)^T \vb_\mX^{-} 
    \notag\\
    + \gamma_\mA^{+} \sum_{i,j}\emPsi^{(n)}_{i,j} \mB_{i,j}^{+}  + \sum_{i,j} \gamma_\mA^{-} \emPsi^{(n)}_{i,j} \mB_{i,j}^{-} \leq \eta^{(n)}
    \Biggr]
    \\
    \text{s.t.} \quad
    \sum_{m=1}^N \evb^{+}_m \leq r_\mX^{+},
    \quad
    \sum_{m=1}^N \evb^{-}_m \leq r_\mX^{-},
    \quad
    \sum_{i=1}^N \sum_{j=1}^N {\emB^{+}}_{i,j} \leq r_\mA^{+},
    \quad
    \sum_{i=1}^N \sum_{j=1}^N {\emB^{-}}_{i,j} \leq r_\mA^{-},
    \\
    \vb^{+}, \vb^{-} \in \sN_0^{N} \quad \mB^{+}, \mB^{-} \in \sN_0^{N \times N}.
\end{align}
This optimization problem is identical to that of \citet{Schuchardt2021} from \cref{eq:schuchardt_final_1,eq:schuchardt_final_2,eq:schuchardt_final_3}.
The only difference is in how these problems would be mapped to a mixed-integer linear program.
We would directly model the indicator functions in the objective using a single linear constraint.
\citet{Schuchardt2021} would use multiple linear constraints, each corresponding to one point in the adversarial budget space.

To summarize: For randomly smoothed models, masking out perturbations using a-priori knowledge about a model's strict locality
is equivalent to completely randomizing (here: flipping bits with probability $50\%$) parts of the input.
While \citet{Schuchardt2021} only derived their certificate for binary data, it can also be applied to strictly local models for continuous data.
Considering our certificates for Gaussian (\cref{prop:gaussian_smoothing}) and  uniform (\cref{prop:uniform_smoothing}) smoothing, where the base certificate weights are $\frac{1}{\sigma^2}$ and $\frac{1}{\lambda}$, respectively,
it is possible to perform the same masking operation as \citet{Schuchardt2021} by using $\sigma \to \infty$ and $\lambda \to \infty$.

Finally, it should be noted that the certificate by \citet{Schuchardt2021} allows for additional constraints, e.g.~on the adversarial budget per node or the number of nodes controlled by the adversary.
As all of them can be modelled using linear constraints on the budget variables (see Section C of their paper), they can be just as easily integrated into our mixed-integer linear programming certificate.



\end{document}